\setlist[itemize]{leftmargin=8pt}
\newtheorem{definition}{Definition}
\newtheorem{assumption}{Assumption}
\newtheorem{lemma}{Lemma}
\newtheorem{theorem}{Theorem}
\newtheorem{claim}{Claim}
\newtheorem{remark}{Remark}
\newtheorem{corollary}{Corollary}
\definecolor{lred}{RGB}{236,33,39}
\definecolor{lyellow}{RGB}{251,168,26}
\definecolor{lblue}{RGB}{94,202,231}
\definecolor{lgreen}{HTML}{03AC13}
\definecolor{mblue}{rgb}{0.37,0.51,0.71}
\definecolor{myellow}{rgb}{0.88,0.61,0.14}
\definecolor{mgreen}{HTML}{B8DCAB}
\definecolor{niceRed}{RGB}{190,38,38}
\definecolor{blueGrotto}{HTML}{059DC0}
\definecolor{royalBlue}{HTML}{057DCD}
\definecolor{navyBlueP}{HTML}{0B579C}
\definecolor{limeGreen}{HTML}{81B622}
\definecolor{Sepia}{HTML}{7F462C}
\renewcommand{\ALG@name}{Dynamics}
\newtheorem*{rep@theorem}{\rep@title}
\newcommand{\newreptheorem}[2]{%
\newenvironment{rep#1}[1]{%
 \def\rep@title{#2 \ref{##1}}%
 \begin{rep@theorem}}%
 {\end{rep@theorem}}}
\DeclarePairedDelimiter{\norm}{\lVert}{\rVert}
\newcommand{\R}{\mathbb{R}}
\newcommand{\eps}{\varepsilon}
\begin{document}

\begin{center}

{\bf{\LARGE{STay-ON-the-Ridge: Guaranteed Convergence to Local \\[5pt] Minimax Equilibrium in Nonconvex-Nonconcave Games}}}

\vspace*{.2in}

{\large{
\begin{tabular}{c}
Constantinos Daskalakis$^\diamond$ \and Noah Golowich$^\diamond$ \and Stratis Skoulakis$^\dagger$ \and Manolis Zampetakis$^\star$ 
\end{tabular}
}}

\vspace*{.2in}

\begin{tabular}{c}
Massachusetts Institute of Technology$^\diamond$ \\
École Polytechnique Fédérale de Lausanne$^\dagger$ \\
University of California, Berkeley$^\star$
\end{tabular}

\vspace*{.2in}

\today

\vspace*{.2in}

\end{center}

\begin{abstract}
  Min-max optimization problems involving nonconvex-nonconcave objectives have found important applications in adversarial training and other multi-agent learning settings. Yet, no known gradient descent-based method is guaranteed to converge to (even local notions of) min-max equilibrium in the nonconvex-nonconcave setting. For all known methods, there exist relatively simple objectives for which they cycle or exhibit other undesirable behavior different from converging to a point,  let alone to some game-theoretically meaningful one~\cite{flokas2019poincare,hsieh2021limits}. The only known convergence guarantees hold under the strong assumption that the initialization is very close to a local min-max equilibrium~\cite{wang2019solving}. Moreover, the afore-described challenges are not just theoretical curiosities. All known methods are  unstable  in  practice, even in simple settings.
  
  We propose the first method that is guaranteed to converge to a local min-max equilibrium for smooth nonconvex-nonconcave objectives. Our method is second-order and provably escapes limit cycles as long as it is initialized at an easy-to-find initial point. Both the definition of our method and its convergence analysis are motivated by the topological nature of the problem. In particular, our method is not designed to decrease some potential function, such as the distance of its iterate from the set of local min-max equilibria or the projected gradient of the objective, but is designed to satisfy a topological property that guarantees the avoidance of cycles and implies its convergence. 
\end{abstract}

\thispagestyle{empty}

\clearpage
\section{Introduction} \label{sec:intro}
  
   Min-max optimization lies at the foundations of
Game Theory~\cite{VN28}, Convex Optimization~\cite{Dantzig1951,Adler13} and
Online Learning 
\cite{B56, Hannan57,C06}, and  has found many applications in theoretical and applied fields including, more recently, in adversarial training and other multi-agent  learning problems~\cite{GAN14,madry2017towards,zhang2019multi}. In its general  form, it can be written as 
\begin{align}
  \min_{\theta \in {\Theta}} \max_{\omega \in {\Omega}} f(\theta, \omega), \label{eq:costis1}
\end{align}
where $\Theta$ and $\Omega$ are convex  subsets of the Euclidean space, and $f$ is continuous.

Eq.~\eqref{eq:costis1} can be viewed as a model of a sequential-move game wherein a player who is interested in minimizing $f$ chooses $\theta$ first, and then a player who is interested in maximizing $f$ chooses $\omega$ after seeing $\theta$. A solution to~\eqref{eq:costis1} corresponds to a Nash equilibrium of this sequential-move game.  

We may also study the simultaneous-move game with the same objective  $f$ wherein the minimizing player and the maximizing player choose $\theta$ and $\omega$ simultaneously. The Nash equilibrium of the simultaneous-move game, also called a {\em min-max equilibrium}, is a pair
$(\theta^{\star}, \omega^{\star}) \in \Theta \times {\Omega}$ such that
\begin{align}
    & f(\theta^{\star}, \omega^{\star}) \le f(\theta, \omega^{\star}), ~~\text{for all~$\theta \in \Theta$}  ~~ \text{  and  } ~~  f(\theta^{\star}, \omega^{\star}) \ge f(\theta^{\star}, \omega), ~~\text{for all~$\omega \in \Omega$}. \label{eq:local max costis}
\end{align}

  It is easy to see that a Nash equilibrium of the simultaneous-move game also constitutes a Nash equilibrium of the sequential-move game, but the converse need not be true. Here, we focus on solving the (harder) simultaneous-move game. In particular, we study the existence of \emph{dynamics} which converge to solutions of the simultaneous-move game, namely the existence of methods that make incremental updates to a pair $(\theta_t, \omega_t)$ so as the sequence $(\theta_t, \omega_t)$ converges, as $t \to \infty$, to some $(\theta^*,\omega^*)$ satisfying~\eqref{eq:local max costis} or some relaxation of it. 
  
  This problem has been extensively studied in the special case where $\Theta$ and $\Omega$ are convex and compact and $f$ is convex-concave --- i.e.~convex in $\theta$ for all $\omega$ and concave in $\omega$ for all $\theta$. In this case, the set of Nash equilibria of the simultaneous-move game is equal to the set of Nash equilibria of the sequential-move game, and these sets are non-empty and convex~\cite{VN28}. Even in this simple setting, however, many natural dynamics surprisingly fail to converge: \emph{gradient descent-ascent}, as well as various continuous-time versions of \emph{follow-the-regularized-leader}, not only fail to converge to a min-max equilibrium, even for very simple objectives, but may even exhibit chaotic behavior~\cite{MertikopoulosPP18,flokas2019poincare,hsieh2021limits}. In order to circumvent these negative results, an extensive line of work has introduced other algorithms, such as \emph{extragradient}~\cite{korpelevich1976extragradient} and \emph{optimistic gradient descent}~\cite{Popov1980}, which exhibit last-iterate convergence to the set of min-max equilibria in this setting; see e.g.~\cite{daskalakis2017training,daskalakis2018limit, mazumdar2018convergence, rafique2018non, hamedani2018primal, adolphs2018local, daskalakis2019last, liang2019interaction, gidel2019negative, mokhtari2019unified, abernethy2019last,golowich2020last,golowich2020tight}.
Alternatively, one may take advantage of the convexity of the problem, which implies that several no-regret learning procedures, such as online gradient descent, exhibit \emph{average}-iterate convergence to the set of min-max equilibria~\cite{C06,shalev2012online, bubeck2012regret,shalev2014understanding, hazan2016introduction}. 
Moreover, \cite{lin2020gradient, kong2021accelerated, ostrovskii2021efficient} show that convexity with respect to one of the two players is enough to design algorithms that exhibit average-iterate convergence to min-max equilibria.

Our focus in this paper is on the more general case where $f$ is not assumed to be convex-concave, i.e.~it may fail to be convex in $\theta$ for all $\omega$, or may fail to be concave in $\omega$ for all $\theta$, or both. We call this general setting where neither convexity with respect to $\theta$ nor concavity with respect to $\omega$ is assumed, the \textit{nonconvex-nonconcave} setting.  
This setting presents some substantial challenges. First, min-max equilibria are {\em not} guaranteed to exist, i.e.~for general objectives there may be no $(\theta^{\star}, \omega^{\star})$ satisfying~\eqref{eq:local max costis};  this happens even in very simple cases, e.g.~when $\Theta=\Omega=[0,1]$ and  $f(\theta,\omega)=(\theta-\omega)^2$. Second, it is $\mathrm{NP}$-hard to determine whether a  min-max equilibrium exists~\cite{DaskalakisSZ21} and, as is easy to see, it is also $\mathrm{NP}$-hard to compute Nash equilibria of the sequential-move game (which do exist under compactness of the constraint sets). For these reasons, the optimization literature has targeted the computation  of local and/or approximate  solutions in this setting~\cite{daskalakis2018limit,mazumdar2018convergence,jin2019minmax,wang2019solving,DaskalakisSZ21,MangoubiV21}. 
This is the approach we also take in this paper,  targeting the computation of {\em $(\eps, \delta)$-local min-max equilibria}, which were proposed in~\cite{DaskalakisSZ21}. These are approximate and local Nash equilibria of the simultaneous-move game, defined as feasible points $(\theta^\star,\omega^\star)$ which satisfy a relaxed and local version of \eqref{eq:local max costis}, namely:
\begin{align}
  &f(\theta^{\star}, \omega^{\star}) < f(\theta, \omega^{\star}) + \eps, ~\text{for all~$\theta \in \Theta$ such that $\norm{\theta - \theta^{\star}} \le \delta$}; \label{eq:local min costis2}\\
  &f(\theta^{\star}, \omega^{\star}) > f(\theta^{\star}, \omega) - \eps, ~\text{for all~$\omega \in \Omega$ such that $\norm{\omega - \omega^{\star}} \le \delta$}. \label{eq:local max costis2}
\end{align}
Besides being a natural concept of local, approximate min-max equilibrium, an attractive feature of $(\eps,\delta)$-local min-max equilibria is that they are guaranteed to exist when $f$ is $\Lambda$-smooth and the locality parameter, $\delta$, is chosen  small enough in terms of the smoothness, $\Lambda$, and the approximation parameter, $\eps$, namely whenever $\delta \le \sqrt{\frac{2 \eps}{\Lambda}}$. Indeed, in this regime of parameters the $(\eps, \delta)$-local min-max equilibria are in correspondence with the approximate fixed points of the 
\textit{Projected Gradient Descent/Ascent} dynamics. Thus,  the existence of the former can be established by invoking Brouwer's fixed point theorem to establish the existence of the latter. (Theorem 5.1 of~\cite{minmaxComplexityArxiv}).

There are a number of existing approaches which would be natural to use to find a solution $(\theta^\star, \omega^\star)$ satisfying \eqref{eq:local min costis2} and \eqref{eq:local max costis2}, but all run into significant obstacles. First, the idea of averaging, which can be leveraged in the convex-concave setting to obtain provable guarantees for otherwise chaotic algorithms, such as online gradient descent, no longer works, as it critically uses Jensen's inequality which needs convexity/concavity. On the other hand, negative results abound for last-iterate convergence: \cite{hsieh2021limits} show that a variety of zeroth, first, and second order methods may converge to a limit cycle, even in simple settings. \cite{flokas2019poincare} study a particular class of nonconvex-nonconcave games and show that continuous-time gradient descent-ascent (GDA) exhibits \emph{recurrent} behavior. Furthermore, common variants of gradient descent-ascent, such as optmistic GDA (OGDA) or extra-gradient (EG), may be unstable even in the proximity of local min-max equilibria, or  converge to fixed points that are not local min-max equilibria~\cite{daskalakis2018limit,jin2019minmax}. While there do exist algorithms, such as {\sc Follow-The-Ridge} proposed by \cite{wang2019solving}, which provably exhibit \emph{local convergence} to a (relaxation of) local min-max equilibrium, these algorithms do not enjoy global convergence guarantees, and no algorithm is known with guaranteed convergence to a local min-max equilibrium. 

These negative theoretical results are consistent with the practical experience with min-maximization of nonconvex-nonconcave objectives, which is rife with frustration as well. A common experience
is that the training dynamics of first-order methods are unstable, oscillatory 
or divergent, and the quality of the points encountered in the course of
training can be poor; see e.g.~\cite{goodfellow2016nips,metz2016unrolled,daskalakis2017training,mescheder2018training,daskalakis2018limit,mazumdar2018convergence,MertikopoulosPP18,adolphs2018local}.  In light of the failure of essentially all known algorithms to guarantee convergence, even asymptotically, to local min-max equilibria, we ask the following question: \emph{Is there an algorithm which  is guaranteed to converge to a local min-max equilibrium in the nonconvex-nonconcave setting~\cite{wang2019solving}?}

\subsection{Our Contribution}

In this work we answer the above question in the affirmative: \textbf{we propose a second-order method that is guaranteed to converge to a local min-max equilibrium (Theorem~\ref{t:main})}. Our algorithm, called {\sc STay-ON-the-Ridge} or STON'R, has some similarity to {\sc Follow-The-Ridge} or FTR, which  only converges locally and to a relaxed notion of min-max equilibrium. Both the structure of our algorithm and its global convergence analysis are motivated by the topological nature of the problem, as established by~\cite{DaskalakisSZ21} who showed that the problem is computationally (and mathematically) equivalent to Brouwer fixed point computation. In particular, the structure and analysis of STON'R are not based on a potential function argument but on a {\em parity argument} (see Section \ref{sec:ppadArgument}), akin to the combinatorial argument used to prove the existence of Brouwer fixed points. 

Table~\ref{tbl:comparison} shows our contributions in the context of what was known prior to our work about equilibrium existence, equilibrium complexity, and existence of dynamics with guaranteed convergence to equilibrium in zero-sum games with objectives of differing complexity.  

\newcolumntype{Q}{>{\centering\let\newline\\\arraybackslash\hspace{0pt}}m{2.2cm}}
\newcolumntype{S}{>{\centering\let\newline\\\arraybackslash\hspace{0pt}}m{3.5cm}}
\newcolumntype{C}{>{\centering\let\newline\\\arraybackslash\hspace{0pt}}m{3.5cm}}
\newcolumntype{R}{>{\centering\let\newline\\\arraybackslash\hspace{0pt}}m{4cm}}
\begin{table}
\renewcommand{\arraystretch}{2}
\centering
\begin{tabular}{ l Q | S  C  R }
     & & convex-concave & nonconvex-concave & \large \textbf{nonconvex-nonconcave} \normalsize \\ \specialrule{1.5pt}{0pt}{0pt}
     \rowcolor[gray]{.93} \cellcolor{white} & existence  & \textcolor{lgreen}{\textbf{yes}} \footnotesize \cite{v1928theorie} \normalsize & 
       $\textcolor{lred}{\textbf{no}}^{\dagger}$
      &  $\textcolor{lred}{\textbf{no}}^{\dagger}$  \\ 
    \rowcolor[gray]{.97} \cellcolor{white} & complexity & \textcolor{lgreen}{\textbf{poly-time}} \footnotesize  e.g.~\cite{dantzig1951proof,freund1997decision,shalev2012online} \normalsize & $\textcolor{lred}{\textbf{NP-hard}}^{\star}$
    & \textcolor{lred}{\textbf{NP-hard}}  \footnotesize\cite{DaskalakisSZ21} \normalsize \\ 
    \cellcolor{white} \multirow{-3}{*}{\begin{sideways} $~~~~~$Nash Eq. \end{sideways}}  & convergent dynamics & \textcolor{lgreen}{\textbf{many}} \footnotesize e.g.~\cite{freund1997decision,C06,shalev2012online} \normalsize & not applicable & not applicable  \\ \specialrule{1pt}{0pt}{0pt}
     \rowcolor[gray]{.93} \cellcolor[gray]{1} & existence  & \cellcolor[gray]{0.75} \textcolor{white}{\textit{same as above}} & \textcolor{lgreen}{\textbf{yes}} & \textcolor{lgreen}{\textbf{yes}} \footnotesize\cite{DaskalakisSZ21}\normalsize \\ 
     \rowcolor[gray]{.97}  \cellcolor[gray]{1}                         & complexity & \cellcolor[gray]{0.75} \textcolor{white}{\textit{same as above}} & \textcolor{lgreen}{\textbf{poly-time}} \footnotesize \cite{lin2020gradient, kong2021accelerated, ostrovskii2021efficient} \normalsize &  \textcolor{lred}{\textbf{PPAD-hard}} \footnotesize\cite{DaskalakisSZ21} \normalsize\\ 
     \cellcolor[gray]{1} \multirow{-3}{*}{\begin{sideways}$~~~~~~~~~~$Local Nash Eq.\end{sideways}}    & convergent dynamics & \cellcolor[gray]{0.75} \textcolor{white}{\textit{same as above}} & \textcolor{lgreen}{\textbf{many}} \footnotesize \cite{lin2020gradient, kong2021accelerated, ostrovskii2021efficient} \normalsize &  \cellcolor{mgreen} \large\textcolor{lgreen}{\textbf{This paper}} \normalsize \\ \specialrule{1.5pt}{0pt}{0pt}
\end{tabular}
\caption{Summary of known results for equilibrium existence, equilibrium complexity, and existence of dynamics converging to equilibrium for simultaneous zero-sum games with differing complexity in their objective function.\\[5pt]
\footnotesize$(\dagger)$ For example, the zero-sum game with objective function $f(\theta,\omega)=-(\theta-\omega)^2$, where the minimizing player chooses $\theta \in [-1, 1]$ and the maximizing player chooses $\omega \in [-1, 1]$, does not have any Nash Equilibrium.\\[5pt] 
$(\star)$ Although it is not explicitly stated in \cite{DaskalakisSZ21}, this is a consequence of the proof of Theorem 10.1 in \cite{minmaxComplexityArxiv}.\normalsize}
\label{tbl:comparison}
\end{table}

\subsection{Simulated Experiments}

As a warm-up we present some simulated experiments to compare the performance of our algorithm with the widely used algorithms for min-max optimization. More precisely, we compare: Gradient Descent Ascent (GDA; Figure \ref{fig:simulations:GDA}),
Extra-Gradient (EG; Figure \ref{fig:simulations:EG}),
Follow-the-Ridge (FtR; Figure \ref{fig:simulations:FtR}), and
STay-ON-the-Ridge (STON'R; Figure \ref{fig:simulations:STONR}) in the following 2-D examples:
\[\min_{\theta \in [-1, 1]} \max_{\omega \in [-1, 1]} f_1(\theta,\omega) := (4 \theta^2 -(\omega - 3 \theta +\frac{ \theta^3}{20})^2 - \frac{\omega^4}{10}) \exp(-\frac{\theta^2 + \omega^2}{100})\text{, and} \]
\[\min_{\theta \in [-1, 1]} \max_{\omega \in [-1, 1]} f_2(\theta,\omega) := - \theta \omega - \frac{1}{20} \cdot \omega^2 + \frac{2}{20} \cdot S\left(\frac{\theta^2 + \omega^2}{2}\right) \cdot \omega^2 \]
where $S$ is the smooth-step function $S(\theta) = \begin{cases} 0, \theta\le 0\\3 \theta^2 - 2 \theta^3,\theta \in [0, 1]\\ 1, \theta \ge 1\end{cases}$.

We do not provide separate plots for Optimistic Gradient Descent Ascent (OGDA) because its behavior is almost identical with the behavior of EG in these examples and hence all our comments about EG transfer to OGDA as well. In all the following figures the different colors represent trajectories with different initialization. The initialization of every trajectory is represented by a dot and the line represent the path that the algorithm follows starting from the dot.

Observe that all the known methods either get trapped on a limit cycle, or they only converge when initialized very close to the solution. Our algorithm (Figure \ref{fig:simulations:STONR}) is the only one that converges in both of these examples when initialized in $(-1, -1)$ which is far away from the solution.


\begin{figure}[!ht]
  \centering
  \begin{subfigure}[b]{0.38\textwidth}
    \centering
    \includegraphics[width=\textwidth]{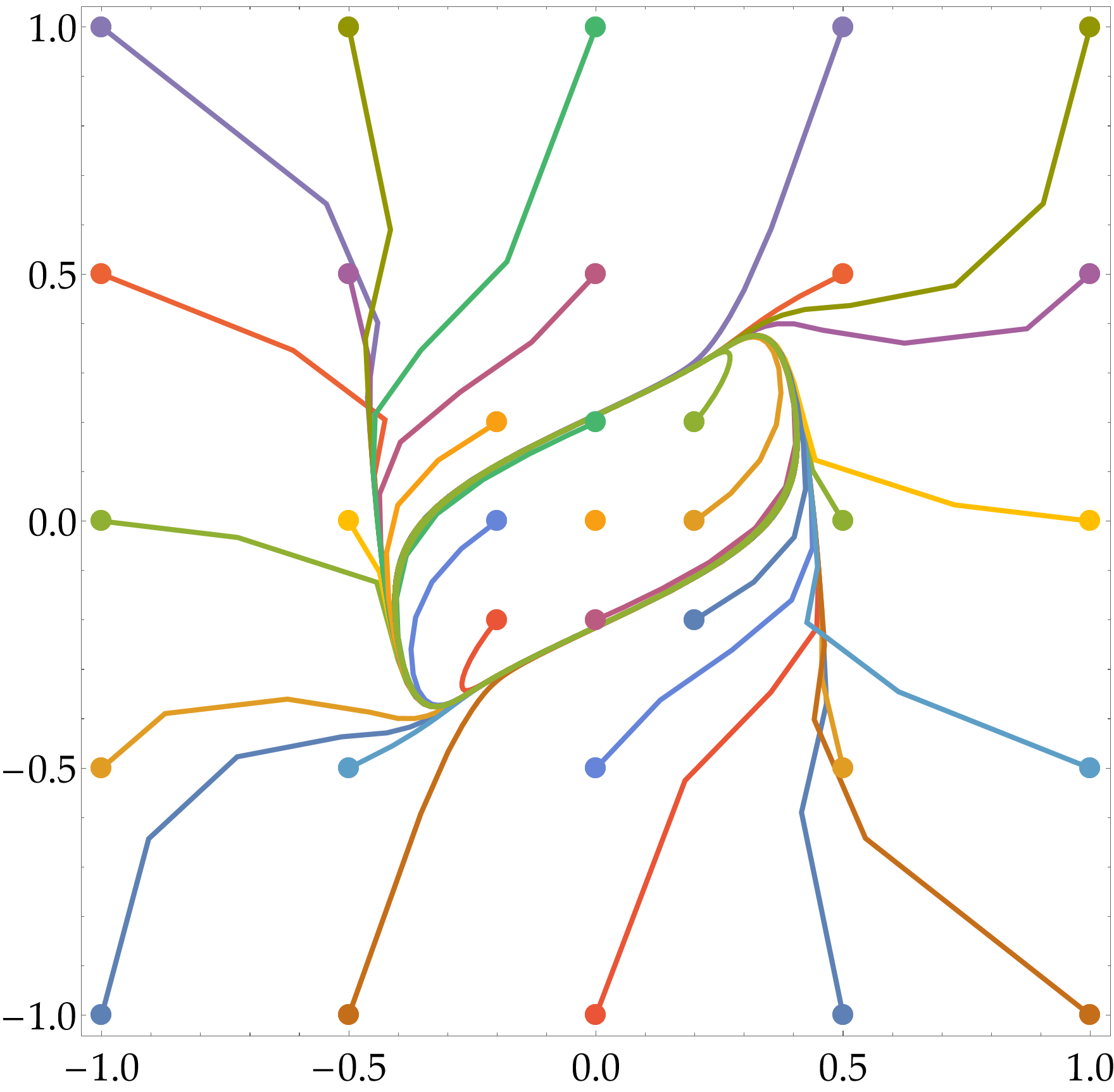}
    \caption{$f_1(\theta, \omega)$.}
  \end{subfigure}
  ~~~~~~~~~
  \begin{subfigure}[b]{0.38\textwidth}
    \centering
    \includegraphics[width=\textwidth]{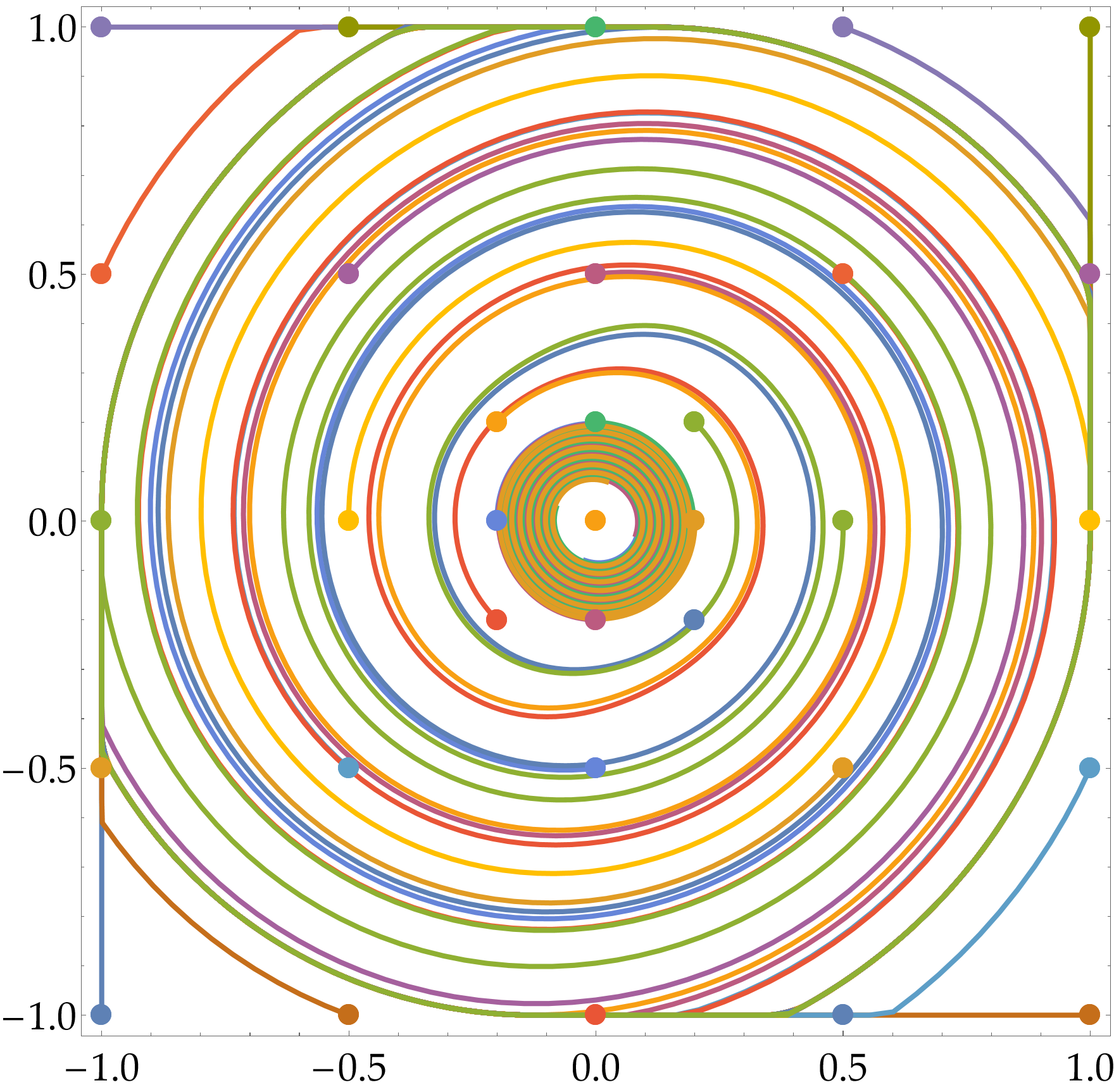}
    \caption{$f_2(\theta, \omega)$.}
  \end{subfigure}

  \caption{(Algorithm: GDA) \textbf{(a)} We observe that for any initial condition the algorithm converges to the same limit cycle. The only exception is when the algorithm is initialized exactly on $(0, 0)$ where the gradients are $0$ and hence it does not move. So in this example, unless initialized on the equilibrium, the algorithm converges to a specific limit cycle. \textbf{(b)} In this example, if the algorithm is initialized far away from the equilibrium, which is $(0, 0)$, then it \textit{diverges}, i.e., it moves towards the boundary. On the other hand, if the algorithm is initialized close enough to the equilibrium then it slowly converges to the equilibrium point with a very slow rate.}
  \label{fig:simulations:GDA}
\end{figure}


\begin{figure}[!ht]
  \centering
  \begin{subfigure}[b]{0.38\textwidth}
    \centering
    \includegraphics[width=\textwidth]{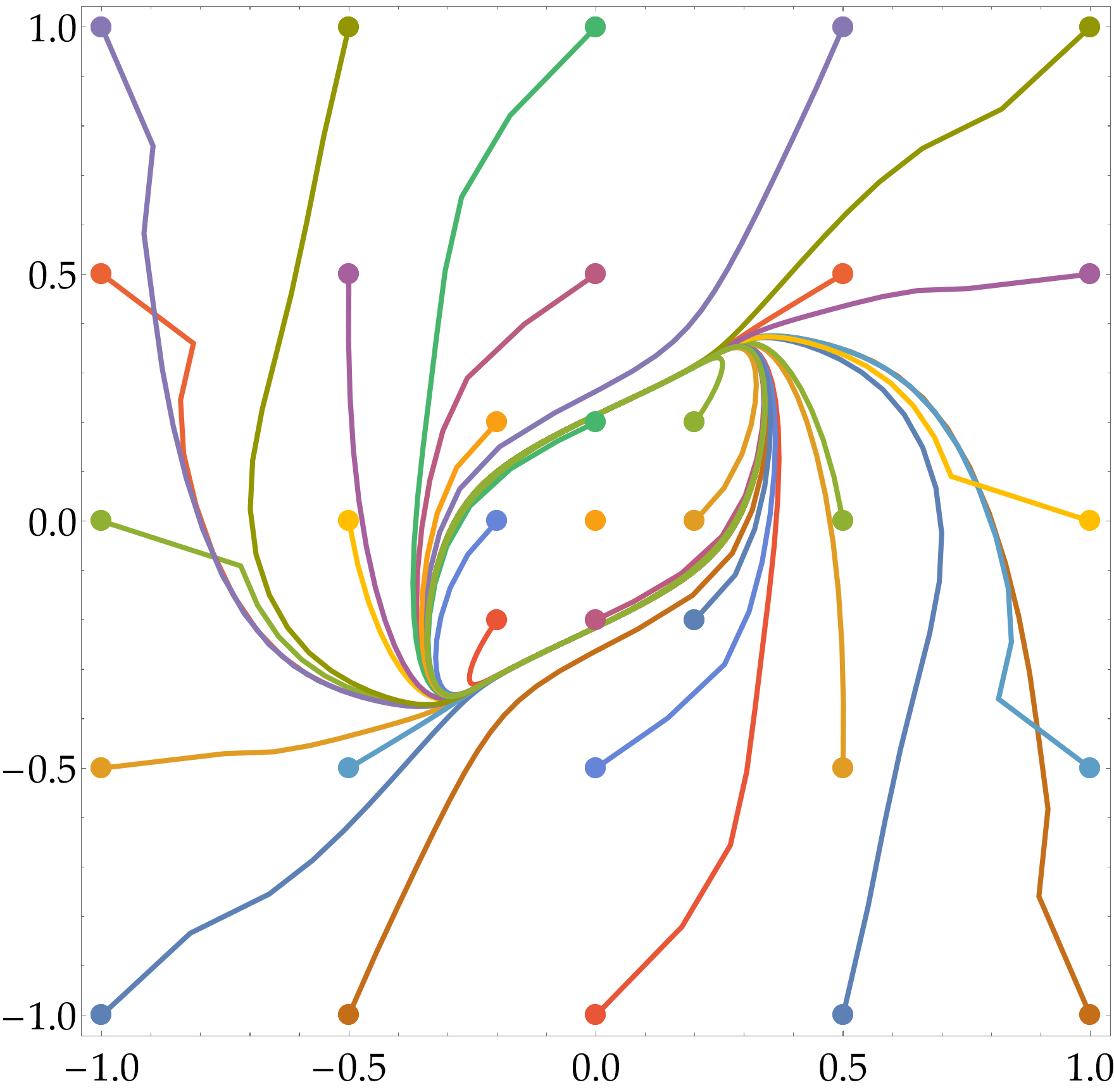}
    \caption{$f_1(\theta, \omega)$.}
  \end{subfigure}
  ~~~~~~~~~
  \begin{subfigure}[b]{0.38\textwidth}
    \centering
    \includegraphics[width=\textwidth]{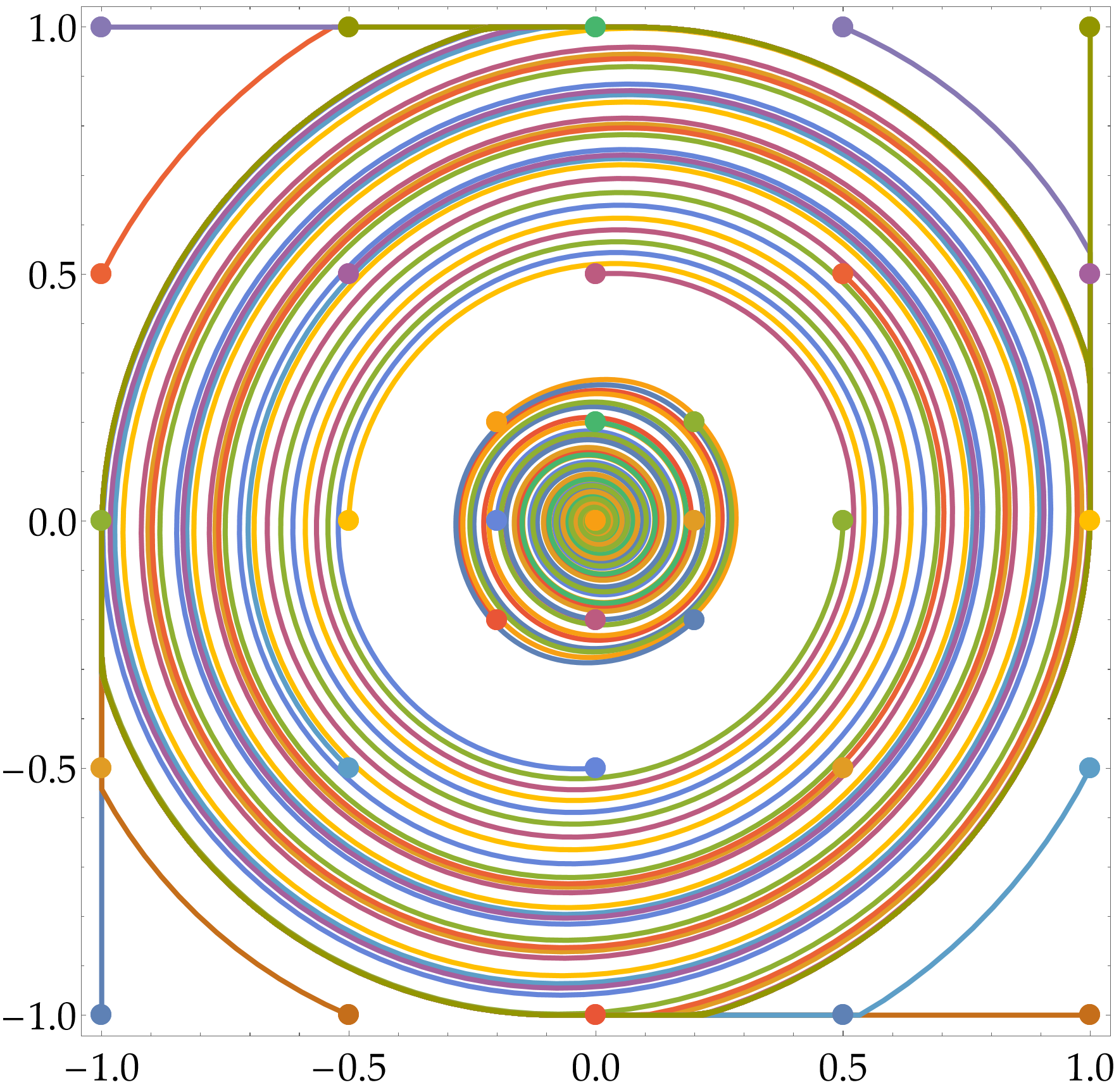}
    \caption{$f_2(\theta, \omega)$.}
  \end{subfigure}

  \caption{(Algorithm: EG/OGDA) 
  \textbf{(a)} we observe that for every initial conditions the algorithm converges to the same limit cycle with the only exception of $(0, 0)$ as for GDA in Figure \ref{fig:simulations:GDA}. \textbf{(b)} The behavior of the algorithm for $f_2(\theta, \omega)$ is again similar to the behavior of GDA as we can see in Figure \ref{fig:simulations:GDA} (b). There only two differences with GDA: (1) when initialized close to equilibrium, EG converges very fast, and (2) the region of attraction to the equilibrium is larger compared to GDA.}
  \label{fig:simulations:EG}
\end{figure}

\begin{figure}[!ht]
  \centering
  \begin{subfigure}[b]{0.38\textwidth}
    \centering
    \includegraphics[width=\textwidth]{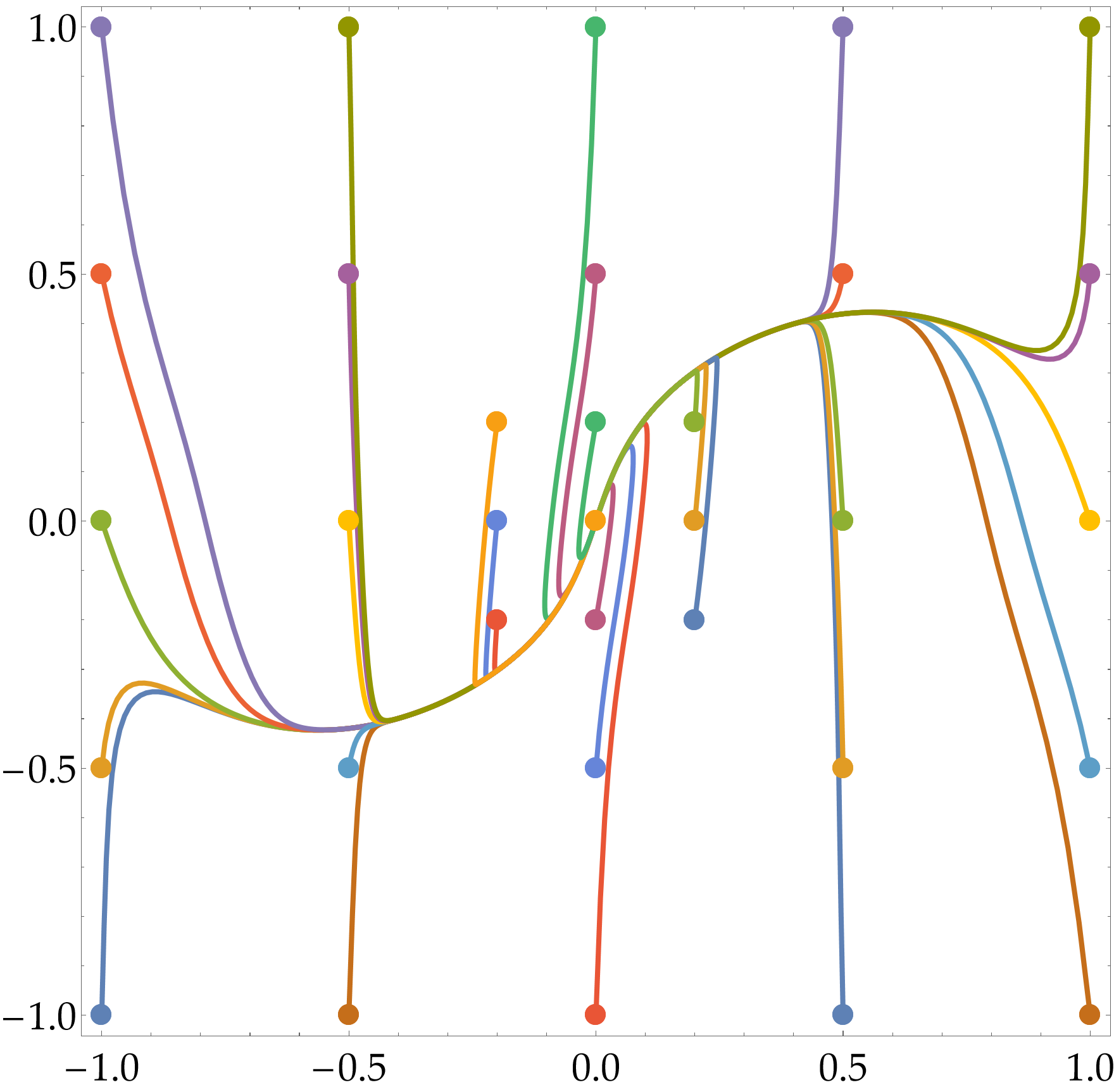}
    \caption{$f_1(\theta, \omega)$.}
  \end{subfigure}
  ~~~~~~~~~
  \begin{subfigure}[b]{0.38\textwidth}
    \centering
    \includegraphics[width=\textwidth]{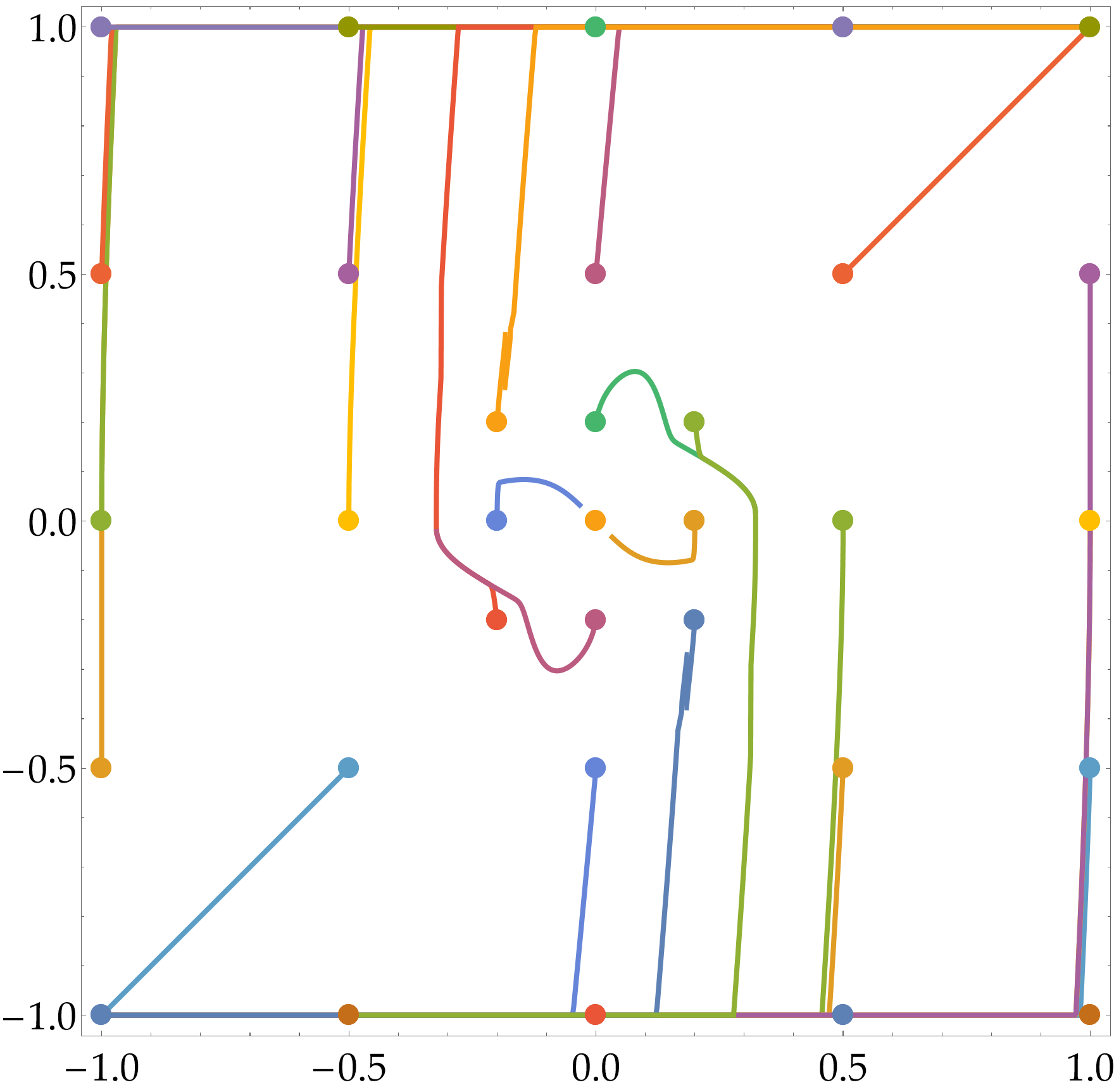}
    \caption{$f_2(\theta, \omega)$.}
  \end{subfigure}

  \caption{(Algorithm: FtR) 
  \textbf{(a)} We observe that for any initial condition the algorithm, in this example, converges to the equilibrium, in contrast with GDA or EG or OGDA. \textbf{(b)} In this example the behavior of the algorithm is very similar with GDA or EG or OGDA. If the algorithm is initialized far away from the equilibrium then it converges to either $(1, 1)$ or $(-1, -1)$ and none of them are equilibrium points. It is only when the algorithm is initialized next to the equilibrium that it converges to the equilibrium. Moreover, the algorithm needs to be initialized even closer than GDA to guarantee convergence. On the other hand, if the algorithm is initialized next to the equilibrium then it converges extremely fast, even faster than EG.}
  \label{fig:simulations:FtR}
\end{figure}

\begin{figure}[!ht]
  \centering
  \begin{subfigure}[b]{0.38\textwidth}
    \centering
    \includegraphics[width=\textwidth]{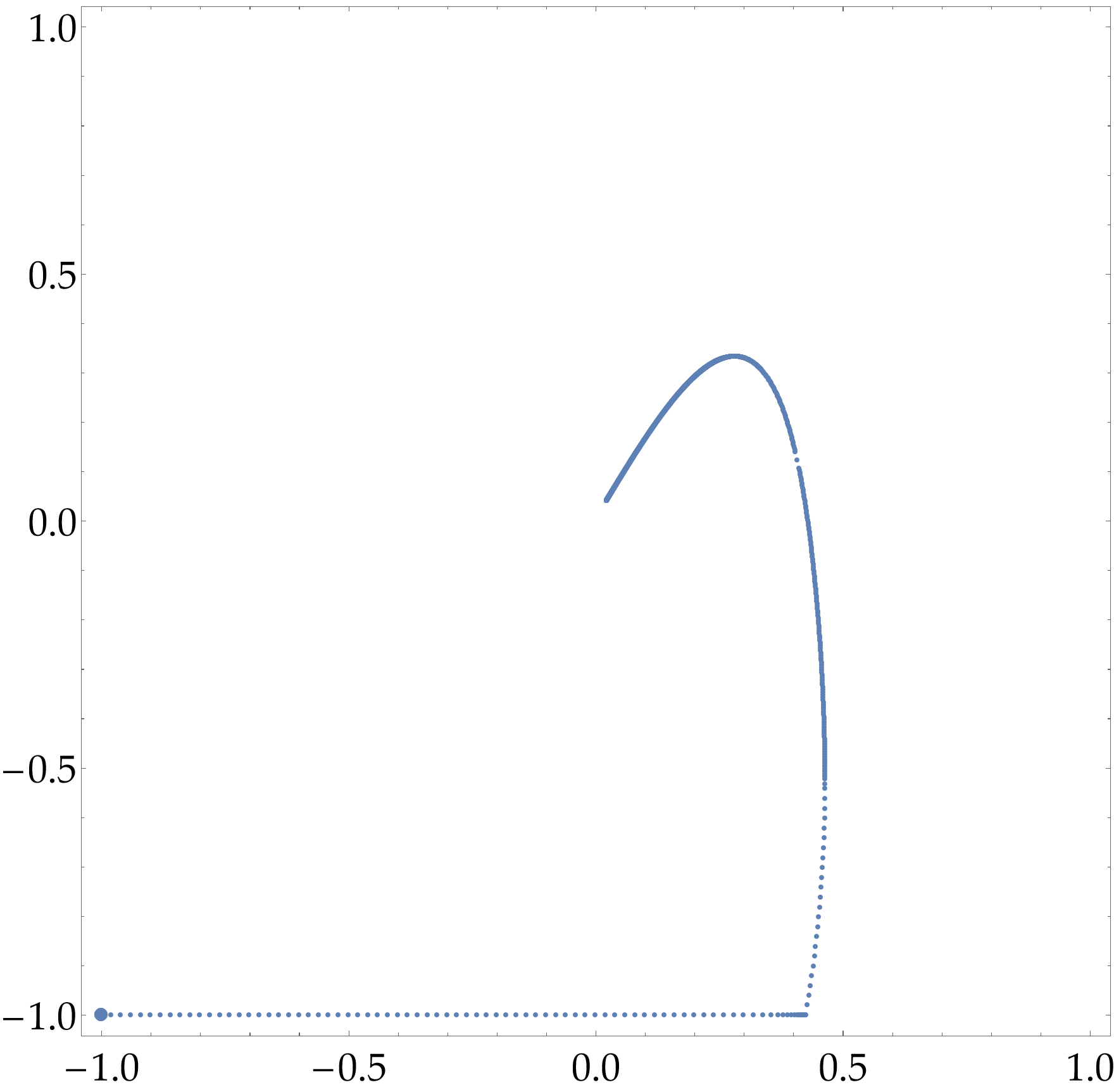}
    \caption{$f_1(\theta, \omega)$.}
  \end{subfigure}
  ~~~~~~~~~
  \begin{subfigure}[b]{0.38\textwidth}
    \centering
    \includegraphics[width=\textwidth]{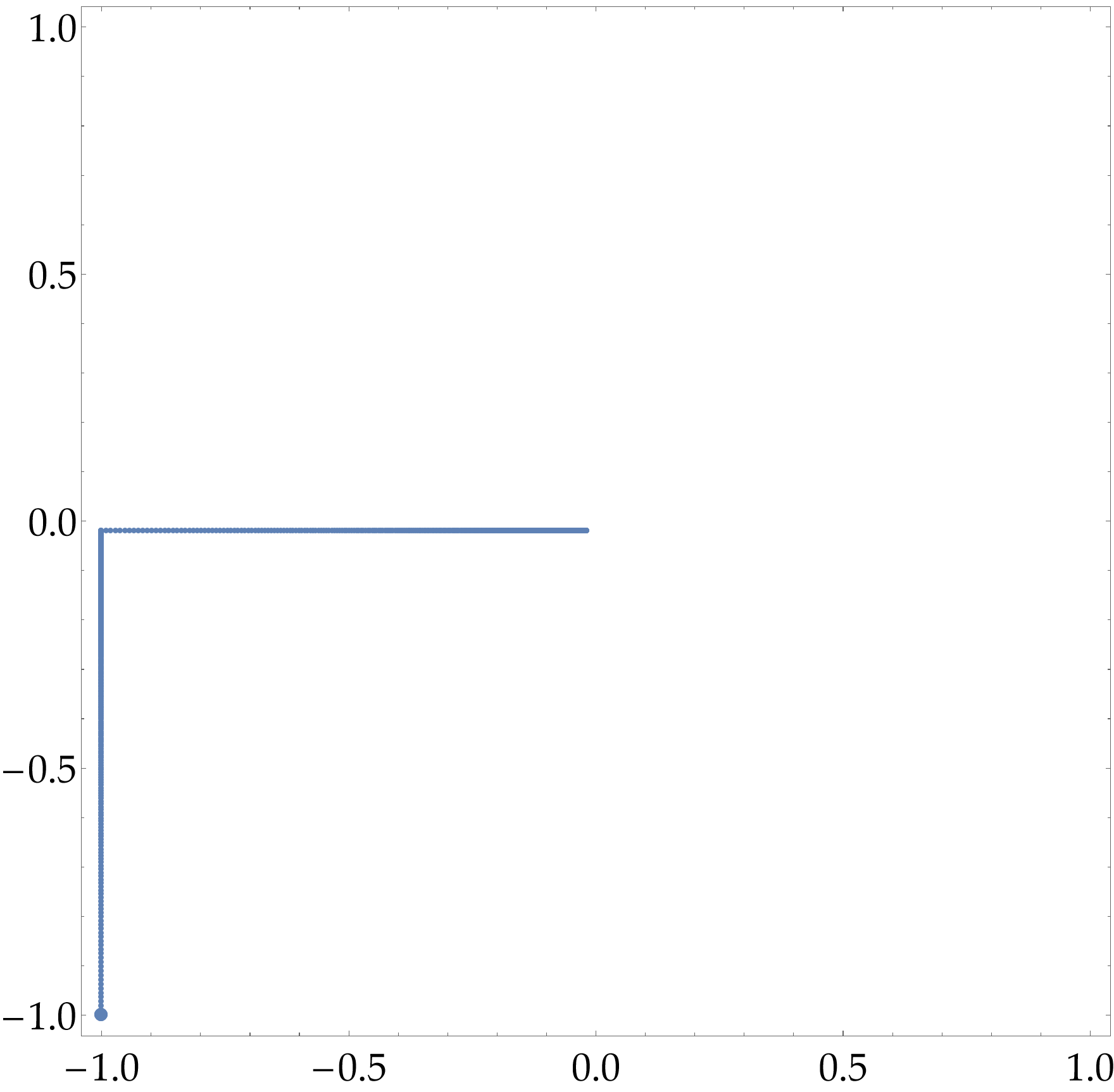}
    \caption{$f_2(\theta, \omega)$.}
  \end{subfigure}

  \caption{(Algorithm: STON'R) The STON'R algorithm is always initialized at $(-1, -1)$ independently of the objective function $f$. Hence, there is a good initialization for STON'R that is trivial to compute. This is contrast with the FtR algorithm that requires to be initialized close to the equilibrium. Such initialization might be as difficult to compute as finding the equilibrium itself. \textbf{(a)} we observe that the algorithm converges to the equilibrium almost directly and in particular it does not even need to spiral around the equilibrium. \textbf{(b)} The same for this example as well. The algorithm converges very fast and directly to the equilibrium although it is initialized far away from it. To the best of our knowledge, none of the known algorithms can achieve such a converge guarantee in this example.}
  \label{fig:simulations:STONR}
\end{figure}


\section{Solution Concept} \label{sec:prelimss}

First, a standard notation that we use  is this: if $m$ is a natural number then  $[m]=\{1,\ldots,m\}$. Although our main goal is to design optimization methods that have guaranteed convergence to local min-max equilibria of smooth objectives in the nonconvex-nonconcave setting, we choose formulate this problem in the language of non-monotone variational inequalities. This not only simplifies our definitions and notations but also makes our framework applicable to more general settings such as multi-player concave games that are easily captured from the framework of variational inequalities \cite{rosen1965existence}.
\medskip

\noindent \textbf{Variational Inequalities (VI).} For $K \subseteq \mathbb{R}^n$, consider a continuous map $V : K \to \R^n$. We say that $x \in K$ is a solution of the variational inequality $\mathrm{VI}(V, K)$ iff: $V(x)^{\top} \cdot (x - y) \le 0$ for all $y \in K$.
\medskip

It is well known that finding local min-max equilibria of smooth objectives can be expressed as a non-monotone VI problem. Specifically, consider the min-max optimization problem~\eqref{eq:costis1}, 
take $K=\Theta \times \Omega$ and simplify notation by using $x\in K$ to denote points $(\theta,\omega) \in K$. Call the subset of coordinates of $x$ identified with $\theta$ the ``\emph{minimizing} coordinates'' and the subset of coordinates of $x$ identified with $\omega$ the ``\emph{maximizing} coordinates.'' Then define $V: K \rightarrow \mathbb{R}^n$ as follows:
$$\text{
For $j\in [n]$: set $V_j(x) :=-\frac{\partial f(x)}{\partial x_j}$, if $j$ is minimizing,  and $V_j(x) :=\frac{\partial f(x)}{\partial x_j}$, otherwise.}$$
With these definitions, it is easy to see that computing $(\varepsilon,\delta)$-local min-max equilibria of smooth objectives, i.e.~points satisfying~\eqref{eq:local min costis2} and~\eqref{eq:local max costis2}, can be reduced to finding solutions to $\mathrm{VI}(V, K)$. (In fact, finding even an approximate VI solution $x$ satisfying $V(x)^{\top} (x-y)\ge -\alpha, \forall y \in K$, would suffice as long as $\alpha>0$ is small enough. For more details see Theorem 5.1 of~\cite{minmaxComplexityArxiv}.)


In view of the above, for the remainder of the paper we focus on solving non-monotone variational inequality problems. For simplicity of exposition throughout we will take our constraint set to be $K = [0, 1]^n$. In this case there is a simple characterization of the solutions to $\mathrm{VI}(V, K)$.

\begin{definition} \label{def:satisfied}
We call a coordinate $i$ at point $x \in [0, 1]^n$, 
\vspace{-5pt}
\begin{enumerate}[leftmargin=15pt]
    \item \textbf{zero-satisfied} if $V_i(x) = 0$,
    
    \item \textbf{boundary-satisfied} if $\left( V_i(x) \le 0\text{ and }x_i = 0\right)$ or $\left( V_i(x) \ge 0\text{ and }x_i = 1\right)$,
    
    \item \textbf{satisfied} if $i$ is  zero- or boundary- satisfied and \textbf{unsatisfied} if it is not satisfied.
\end{enumerate}
\end{definition}

\begin{lemma}[Proof in Appendix~\ref{sec:simple VI characterization lemma}] \label{lem:viSolutions}
  $x$ is a solution of $\mathrm{VI}(V, [0, 1]^n)$ iff  $j$ is satisfied at $x$, $\forall j \in [n]$.
\end{lemma}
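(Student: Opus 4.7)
The proof rests on the product structure of the constraint set $K = [0,1]^n$, which lets the single vector-valued VI inequality decouple into $n$ independent one-dimensional inequalities, one per coordinate. My plan is to first establish this decomposition and then handle each coordinate with an elementary sign analysis.

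Concretely, the first step is to show that the VI condition ``$V(x)^\top(x-y)\le 0$ for all $y\in [0,1]^n$'' is equivalent to the family of scalar conditions ``$V_j(x)(x_j-y_j)\le 0$ for every $j\in [n]$ and every $y_j\in [0,1]$''. The forward implication follows by choosing the test point $y$ to agree with $x$ in all coordinates except one: all but the $j$th summand of $V(x)^\top(x-y)$ vanish, leaving exactly the scalar inequality. The reverse implication is immediate: for any $y\in [0,1]^n$, summing the scalar inequalities over $j\in [n]$ reconstructs $V(x)^\top(x-y)$ as a sum of nonpositive terms.

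Given this coordinate-wise reformulation, the lemma reduces to a three-case analysis on the sign of $V_j(x)$. If $V_j(x)=0$, the scalar inequality is trivial and $j$ is zero-satisfied. If $V_j(x)\neq 0$, letting $y_j$ range over $[0,1]$ in $V_j(x)(x_j-y_j)\le 0$ pins $x_j$ to an endpoint of $[0,1]$ --- specifically, to the endpoint prescribed by the boundary-satisfied clause of Definition~\ref{def:satisfied} for the corresponding sign of $V_j(x)$. The converse direction, that a satisfied coordinate yields the scalar inequality, is checked by plugging each of the three cases (zero-satisfied; $x_j=0$ on the appropriate side; $x_j=1$ on the appropriate side) directly into the product $V_j(x)(x_j-y_j)$ and observing that each factor has a compatible sign for all $y_j\in[0,1]$.

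There is no real obstacle beyond careful bookkeeping of the signs in the boundary-satisfied clause relative to the direction of the VI inequality; once the coordinate-wise decomposition is in place, the argument is mechanical and uses only the fact that a one-dimensional linear functional on $[0,1]$ is extremized at $0$ or $1$. No analytic or topological input is required --- in particular, no continuity of $V$ beyond what is baked into the VI definition.
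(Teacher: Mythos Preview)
Your proposal is correct and follows essentially the same approach as the paper: the paper's proof of the reverse implication sums the per-coordinate sign checks, and its proof of the forward implication tests with vectors $y$ that coincide with $x$ in all but one coordinate---exactly the decoupling you describe. Your explicit framing of an intermediate coordinate-wise equivalence is a slightly cleaner organization, but the content is identical.
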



\noindent Finally, in the rest of the paper we make the following assumptions for $V$:
\begin{itemize}[leftmargin=80pt]
  \item[\textbf{($\boldsymbol{\Lambda}$-Lipschitz)}] $~~~\norm{V(x) - V(y)}_2 \leq \Lambda \cdot  \norm{x - y}_2$, for all $x, y \in [0, 1]^n$.
  \item[\textbf{($\boldsymbol{L}$-smooth)}] $~~~\norm{J(x) - J(y)}_F \leq L \cdot  \norm{x - y}_2$, for all $x, y \in [0, 1]^n$.
\end{itemize}
where $J$ is the Jacobian of V, and $\norm{A}_F$ denotes the Frobenious norm of the matrix $A$.

\medskip

\section{STay-ON-the-Ridge: High-Level Description} \label{sec:stonr}
 \label{sec:stonr high level}

In this section we describe our algorithm and discuss the main design ideas  leading to its convergence properties presented in Section~\ref{sec:stonr:details}. 
As explained in the previous section, our goal is to find a point $x$ such that every coordinate $i \in [n]$ is satisfied at $x$ according to the Definition \ref{def:satisfied}. 

Our algorithm
is initialized at point $x{(0)}=(0,\ldots,0)$ where a number of coordinates may be unsatisfied. The goal of the algorithm is  to satisfy all unsatisfied coordinates one-by-one in lexicographic order (although, as we will see, coordinates may go from being satisfied to being unsatisfied in the course of the algorithm). We say that our algorithm ``starts epoch $i$ at point $x$'' iff all coordinates $\le i - 1$ are satisfied at $x$ and the algorithm's immediate goal is to find a point $x' \neq x$ that satisfies all coordinates $\le i$, namely:
\begin{quote}
  Goal of epoch $i$, starting at point $x$: find  $x' \neq x$  satisfying all coordinates $\le i$.
\end{quote}
We  now describe how the algorithm  tries to meet the afore-described goal. 
 So let us assume that, at time $t$, our algorithm starts epoch $i$ at point $x{(t)}$. Postponing full details to Section~\ref{sec:continuousDynamics}, where we describe our algorithm in detail, for simplicity of exposition  let us assume in this section that, at $x{(t)}$, all coordinates $\le i-1$ are zero-satisfied, i.e.~$V_j(x{(t)}) = 0$ for all $j  \le i-1$. To achieve the goal of epoch $i$ starting at $x{(t)}$, our algorithm will {\em try} to find a point $x' \neq x{(t)}$ where all coordinates $\le i-1$ remain zero-satisfied and  coordinate $i$ is also satisfied as follows:
\begin{itemize}
    \item First, it will try to find such a point in the connected subset $S^i(x{(t)}) \subseteq [0,1]^n$ that contains $x{(t)}$ and all points $z$ satisfying the following: (a) all coordinates $\le i-1$ are zero-satisfied at $z$, and (b) for all $j\ge i+1$, $z_j = x_j{(t)}$. 

    \item Next, let us describe how our algorithm   navigates $S^i(x{(t)})$ in the hopes of identifying a point $x' \neq x{(t)}$ where all coordinates $\le i$ are satisfied. A natural approach is  to run a continuous-time dynamics $\{z(\tau)\}_{\tau \ge 0}$ that is initialized at $z(0)=x{(t)}$ and moves inside $S^i(x{(t)})$. What are possible directions of movement for such dynamics so that it stays within $S^i(x{(t)})$? If the dynamics is at some point $z\in S^i(x{(t)})$, it will remain in this set if it moves, infinitessimally, in a unit direction $d$ satisfying the following constraints:
 %
    \begin{enumerate}
        \item $d_j=0$, for all $j\ge i+1$;~~~~{\em /* this is so that  (b) in the definition of $S^i(x{(t)})$ is maintained */} \label{costis constraint 1}
        
        \item $(\nabla V_j(z))^{\top} \cdot d=0$, for all $j\in \{1,\ldots,j-1\}$.~~~~{\em /* this is so that  (a)  is maintained */} \label{costis constraint 2}

    \end{enumerate}
    \smallskip
    
    Notice that~\ref{costis constraint 1} and~\ref{costis constraint 2} specify $n-1$ constraints on $n$ variables. We will place mild assumptions on $f$ so that there is a unique, up to a sign flip, unit direction satisfying these constraints. (Specifically see Assumption~\ref{a:1} in Section~\ref{sec:formalStatements}, where our main result is formally stated.) Moreover, we will specify a way to break ties so that we  choose one of the two unit directions satisfying our constraints. (Specifically this is done in part 3 of Definition~\ref{d:directionality1} in Section~\ref{sec:continuousDynamics}.) Let us denote by $D^i(z)$ the unit direction that our tie-breaking rule selects at $z$. 
    
    \item With the above choices, the continuous-time dynamics  $\dot{z}(\tau)=D^i(z(\tau))$, initialized at $z(0)=x{(t)}$, is well-defined. We follow this dynamics until the earliest time that one of the following happens (if both events happen at the same time we will say that the good event happened):
    \begin{itemize}
        \item (Good Event): the dynamics stops at a point $x' \neq x{(t)}$ where coordinate $i$ is satisfied;
        \item (Bad Event): the dynamics stops at a point $x'$ lying on the boundary of $[0,1]^n$ (and if it were to continue it would violate the constraints).
    \end{itemize}
\end{itemize}
So  we have described what our algorithm does if, at time $t$, it starts epoch $i$ at some point $x{(t)}$. Suppose $x'$ is the point where the continuous-time dynamics executed during epoch $i$ terminates. If the good event happened, coordinate $i$ is satisfied at $x'$, and our algorithm starts epoch $i+1$ at $x'$. If the bad event happened, our algorithm will in fact {\em start epoch $i-1$} at point $x'$. What does this mean? That it will run the continuous-time dynamics corresponding to epoch $i-1$ on the set $S^{i-1}(x')$ starting at $x'$ in order to find some point $x'' \neq x'$ where all coordinates $\le i-1$ are satisfied. It may fail to do this, in which case it will start epoch $i-2$ next. Or it may succeed, in which case, it will start epoch $i$, and so on so forth until (as we will show!) all coordinates will be satisfied. The high-level pseudocode of our algorithm is given in Dynamics \ref{dyn:Sperner high level}.



\begin{algorithm}
  \caption{STay-ON-the-Ridge (STON'R) --- High-Level Description}\label{dyn:Sperner high level}
 \begin{algorithmic}[1]
 \STATE Initially $x{(0)} \leftarrow (0,\ldots,0)$, $i \leftarrow 1$, $ t \leftarrow 0$.

 \WHILE {$x{(t)}$ is not a VI solution}
        \STATE Initialize epoch $i$'s continuous-time dynamics, $\dot{z}(\tau)=D^i(z(\tau))$, at $z(0)=x{(t)}$.
        \WHILE{exit condition of this dynamics has not been reached}
        \smallskip
          \STATE Execute $\dot{z}(\tau)=D^i(z(\tau))$ forward in time.
        \smallskip
        \ENDWHILE
        \STATE Set $x{(t+\tau)}=z(\tau)$ for all $\tau \in [0, \tau_{\rm exit}]$ (where $\tau_{\rm exit}$ is time exit condition was met).
        \smallskip
        \IF{ $x{(t+\tau_{\rm exit})} \neq x{(t)}$ and coordinate $i$ is satisfied at $x{(t+\tau_{\rm exit})}$}
        \STATE Update the epoch $i \leftarrow i + 1$.
        \ELSE 
        \STATE (Bad event happened so) move to the previous epoch $i \leftarrow i - 1$.
        \ENDIF
        \STATE Set $t \leftarrow t + \tau_{\rm exit}$.
\ENDWHILE
  \STATE \textbf{return }$x(t)$
  \end{algorithmic}
\end{algorithm}


At this point we have described an algorithm that explores the space in a natural way in its effort to satisfy coordinates, but it is unclear why it would succeed in eventually satisfying  all of them, how it would  escape cycles, and how it would not get stuck at non-equilibrium points. Importantly, there is no quantity that seems to be consistently improving during the execution of the algorithm. For example, the number of satisfied coordinates might decrease during the algorithm's execution.
\begin{quote}
  How we can show this algorithm converges since no quantity seems to be consistently improving during its execution?
\end{quote}
To show the convergence of our algorithm we need to use a different kind of argument than the classical arguments used in optimization which are based on some quantity improving. In particular, we use a topological argument that we describe in Section \ref{sec:ppadArgument}. 




\section{A Topological Argument of Convergence} \label{sec:ppadArgument}

  As discussed in Section~\ref{sec:stonr high level}, there seems to be no clear potential function that decreases in the course of our algorithm's execution, which we could track to show that it converges. Indeed, even the number of satisfied coordinates might decrease in the course the algorithm's execution as we explained in Section~\ref{sec:stonr high level}. So how we can show that our algorithm converges?
  
  Our main idea is to use topological arguments that have been successfully employed to show the convergence of other equilibrium computation algorithms. In the celebrated \cite{lemke1964equilibrium} algorithm, e.g., the following argument is used to prove the algorithm's convergence. 
  
  \begin{lemma} \label{lem:ppad}
    Let $G = (N, E)$ be a directed graph such that every node has in-degree at most $1$ and out-degree at most $1$. If there exists some node $v \in N$ with in-degree $0$ and out-degree $1$, then there is unique directed path starting at $v$ and ending at some $v' \in N$ that has in-degree $1$ and out-degree $0$.
  \end{lemma}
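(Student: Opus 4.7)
The plan is to exploit the very restrictive structure forced by the degree bounds: with every node having in-degree at most $1$ and out-degree at most $1$, the graph $G$ decomposes as a disjoint union of simple directed paths and simple directed cycles. Once this structural observation is in place, the lemma essentially reduces to following the unique directed path that contains $v$.

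Concretely, I would build the path starting from $v$ greedily. Let $v_0 = v$, and whenever $v_k$ has out-degree $1$, define $v_{k+1}$ to be the unique out-neighbor of $v_k$, so that $(v_k, v_{k+1}) \in E$. The first key claim is that the sequence $v_0, v_1, v_2, \dots$ never repeats a vertex. Indeed, if some $v_k = v_j$ with $k > j \ge 0$, then $v_k$ has two distinct incoming edges, one from $v_{k-1}$ and one from $v_{j-1}$ (if $j \ge 1$), contradicting in-degree $\le 1$; and if $j=0$ then $v = v_0$ would have an incoming edge from $v_{k-1}$, contradicting $v$'s in-degree being $0$. Hence the sequence visits each vertex at most once. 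Since $N$ is finite, the process must terminate: there is a smallest $k^{\star}$ such that $v_{k^{\star}}$ has out-degree $0$, and we set $v' := v_{k^{\star}}$. By construction $v'$ has in-degree $1$ (the edge from $v_{k^{\star}-1}$) and out-degree $0$, as required.

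For uniqueness, I would argue that any directed path starting at $v$ must agree with the one constructed above at every step, because out-degree $\le 1$ leaves no choice at any intermediate vertex: once $v_k$ is fixed, $v_{k+1}$ is forced (or the path must stop). Thus the constructed path is the only directed path from $v$ that cannot be extended, and in particular the only one ending at an out-degree-$0$ vertex.

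I do not expect any real obstacle here; the argument is combinatorially clean. The only subtle point is the ``no repetition'' step, where one must carefully distinguish the case $j=0$ (where the contradiction uses the hypothesis that $v$ has in-degree $0$) from the case $j \ge 1$ (where it uses the in-degree bound at $v_k=v_j$). If the paper intends $N$ possibly infinite, then an additional hypothesis (or a statement of the form ``if the process terminates'') would be needed, but under the standard finiteness assumption the proof is immediate from the structural decomposition.
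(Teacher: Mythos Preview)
Your proof is correct and follows essentially the same approach as the paper: the paper simply observes (via a figure) that a directed graph with in- and out-degree at most $1$ decomposes into disjoint directed paths, directed cycles, and isolated nodes, so a node with in-degree $0$ and out-degree $1$ must be the start of a finite directed path terminating at a node of out-degree $0$. Your argument spells this out more carefully (the no-repetition step and the uniqueness by forced successors), but the underlying idea is identical.
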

  
  \begin{figure}[!ht]
      \centering
      \includegraphics[width=0.7\textwidth]{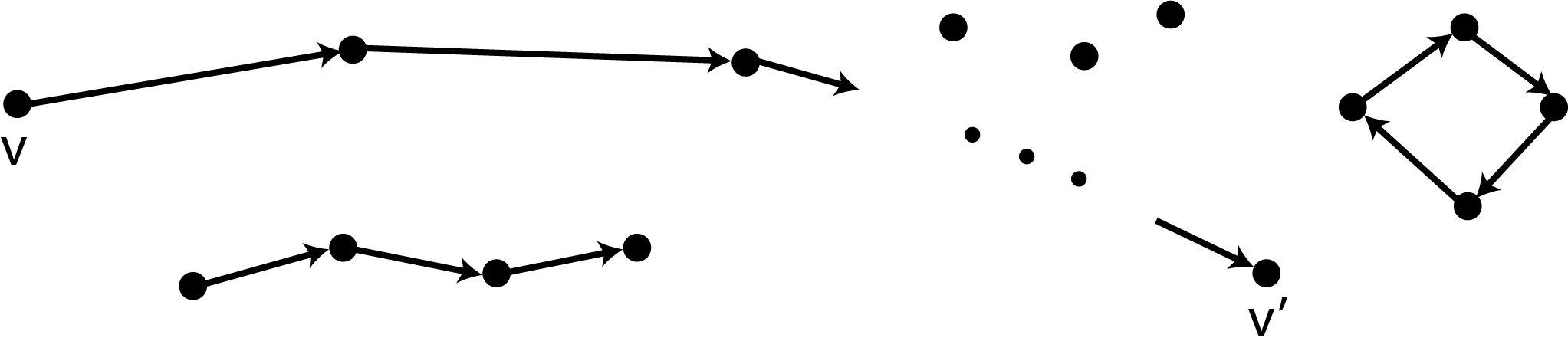}
      \caption{A directed graph whose nodes have in-degree and out-degree at most $1$ is a collection of directed paths, directed cycles, and isolated nodes. Hence, if a node $v$ has in-degree $0$ and out-degree $1$ then it has to be the start of a directed path that must end at a node $v'$ after a finite number of steps. 
      }
      \label{fig:topologicalArgument}
  \end{figure}
  
  \noindent The proof of Lemma~\ref{lem:ppad} is straightforward, as Figure~\ref{fig:topologicalArgument} illustrates. The lemma suggests the following recipe for proving the convergence of some deterministic, iterative algorithm, with update rule $v_{t+1} \leftarrow F(v_t)$, whose iterates lie in a finite set $N$:
  \begin{enumerate}[leftmargin=10pt]
    \item Define a directed graph $G$ whose vertex set is $V$ and edge set is $E=\{(u,v)~|~u\neq v~\text{and}~v=F(u)\}$, i.e.~there is a directed edge from $u$ to $v$ iff $v$ is different from $u$ and $v$ is reached after one iteration of the algorithm starting at $u$.
    \item Argue that every vertex of $G$ has in-degree $\le 1$. It is clear that every vertex has out-degree $\le 1$.
    \item Show that the algorithm can be initialized at some $v_0$ that has in-degree $0$ and out-degree $1$.
    \item Employ Lemma~\ref{lem:ppad} to argue that if the algorithm is initialized at $v_0$ it must, eventually, arrive at some node $v_{\rm end}$ whose out-degree is $0$. Out-degree $0$ means that $v_{\rm end}=F(v_{\rm end})$.
    \item The above prove that if the algorithm starts at $v_0$ it is guaranteed to converge. 
    \end{enumerate}
  
  
  Having this topological argument in place we are ready to formally describe our algorithm and argue its convergence. In the course of our description we will be sure to specify a finite set of points $V$ that will act as the nodes of the finite graph that we will construct to employ the above convergence argument. 
  Intuitively, these are all the points at which our algorithm can possibly start an epoch. The map $F(\cdot)$ that we use to construct our graph is the outcome of the continuous-time process that our algorithm execute when it starts an epoch at such a point.
  
\section{Detailed Description of STON'R and Main Result} \label{sec:stonr:details}

We provide a formal description of our algorithm, state our main convergence theorem, and provide the main components of its proof building on the ideas from Section~\ref{sec:ppadArgument}.


\subsection{STON'R: Detailed Description} \label{sec:continuousDynamics}

We provide a detailed description of our algorithm, building on the framework from Section~\ref{sec:stonr high level}. To simplify our exposition in that section, we only described the behavior of the algorithm when it starts an epoch at some  $x$ where coordinates $\le i-1$ are zero-satisfied and its goal is to identify some $x' \neq x$ at which coordinates $\le i$ are satisfied. To achieve this goal the algorithm executed a continuous-time dynamics constrained by keeping all coordinates $\le i-1$ zero-satisfied. However, in the course of its execution the algorithm might be hitting the boundary in its effort to satisfy coordinates. So, in the general case, when it starts a new epoch, some coordinates will be zero-satisfied and some will be boundary-satisfied. As such, what the algorithm will do in the general case during some epoch is execute a continuous-time dynamics constrained by keeping the zero-satisfied coordinates zero-satisfied and the boundary-satisfied coordinates at the right boundary. 

More precisely, the epochs of our algorithm are, in fact, indexed not only by some coordinate $i$ but also by a subset  of coordinates $S \subseteq [i-1]$ that are zero-satisfied at the point $x$ where the epoch starts. The goal of the epoch is the following.
\vspace{-5pt}
\begin{quote}
  Goal of epoch $(i, S)$, starting at point $x$ (where $S \subseteq [i-1]$, coordinates in $S$ are zero-satisfied and coordinates in $[i-1]\setminus S$ are boundary-satisfied): find  $x' \neq x$ where  all coordinates $\le i$ are satisfied, all coordinates in $S$ are zero-satisfied and all coordinates in $[i-1]\setminus S$ are boundary-satisfied.
\end{quote}
\vspace{-5pt}


As in our high-level description in Section \ref{sec:stonr high level}, epoch $(i, S)$ starting at $x$ might achieve its goal or end before it achieves its goal. In both cases, a new epoch will start. Now, how does the algorithm try to achieve its goal in some epoch? Similar to the special case discussed in Section \ref{sec:stonr high level}, in the general case considered here the algorithm will execute a continuous-time dynamics that maintains all the coordinates $j \in S$ zero-satisfied, all the coordinates $j \in [i-1] \setminus S$ boundary-satisfied, and leaves all coordinates $[n] \setminus (\{i\} \cup S)$ unchanged. 
The following definition captures the tangent unit vector of the curve that this continuous-time dynamics travels.

\begin{definition} \label{d:directionality1}
Given $i \in [n]$, a set of coordinates $S=\{s_1,\ldots,s_m\} \subseteq [i-1]$, and some point $x$,
we say that a unit vector $d\in\mathbb{R}^n$ is {\em admissible} iff it satisfies the following:
\vspace{-10pt}
\begin{enumerate}
    \item $d_j=0$, for all $j \notin S\cup \{i\}$.
    \item $\nabla V_j(x)^\top \cdot d = 0$, for all $j \in S$.
    \item The sign of $\begin{vmatrix}
\frac{\partial V_{s_1}(x)}{\partial x_{s_1}} & \frac{\partial V_{s_2}(x)}{\partial x_{s_1}}&\ldots &\frac{\partial V_{s_m}(x)}{\partial x_{s_1}} & d_{s_1}\\ 
\vdots& \vdots & \vdots & \vdots & \vdots\\
\frac{\partial V_{s_1}(x)}{\partial x_{s_m}} & \frac{\partial V_{s_2}(x)}{\partial x_{s_m}}&\ldots &\frac{\partial V_{s_m}(x)}{\partial x_{s_m}} & d_{s_m}\\ 
\frac{\partial V_{s_1}(x)}{\partial x_{i}} & \frac{\partial V_{s_2}(x)}{\partial x_{i}}&\ldots &\frac{\partial V_{s_m}(x)}{\partial x_{i}} & d_{i}\\ 
\end{vmatrix}$ equals ${\rm sign}\left((-1)^{|S|} \right)$.
\end{enumerate}
If there is a unique unit direction satisfying the above constraints, we denote that direction $D^i_S(x)$.
\end{definition}
We will place mild assumptions on $V$ so that $D^i_S(x)$ is (uniquely) defined for all $x \in [0, 1]^n$ where coordinates $S$ are zero-satisfied. (Specifically see Assumption~\ref{a:1} in Section~\ref{sec:formalStatements}, where our main result is formally stated.) With this definition in place, when our algorithm starts epoch $(i,S)$ at point $x$, it will execute the continuous-time dynamics $\dot{z}(\tau)=D^i_S(z(\tau))$, initialized at $z(0)=x$, forward in time. The algorithm executes this dynamics until the earliest time $\tau_{\rm exit}$ such that $z(\tau_{\rm exit})$ is an exit point, as per the definition below. 




\begin{definition} \label{def:exitConditions}
    Suppose $i\in [n]$, $S \subseteq [i-1]$ and $x'$ is a point where coordinates in $S$ are zero-satisfied and coordinates in $[i-1]\setminus S$ are boundary-satisfied. Then $x'$ is an {\em exit point} for epoch $(i, S)$ iff it satisfies one of the following:
    \vspace{-6pt}
    \begin{itemize}
        \item \textbf{(Good Exit Point)}: 
        Coordinate $i$ is satisfied at $x'$, i.e., $V_i(x') = 0$, or $x'_i = 0$ and $V_i(x') < 0$, or $x'_i = 1$ and $V_i(x') > 0$.
        \vspace{-4pt}
        \item \textbf{(Bad Exit Point)}:  For some $j \in S \cup \{i\}$, it holds that $(D^i_S(x'))_j > 0$ and $x'_j = 1$, or $(D^i_S(x'))_j < 0$ and $x'_j = 0$; in other words, if the dynamics for epoch $(i,S)$ were to continue from $x'$ onward, they would violate the constraints.
        \vspace{-4pt}
        \item \textbf{(Middling Exit Point)}: For some $j \in [i-1]\setminus S$, it holds that $V_j(x') = 0$ and one of the following holds: $\nabla V_j(x')^{\top} D^i_S(x') > 0$ and $x'_j = 0$, or $\nabla V_j(x')^{\top} D^i_S(x') < 0$ and $x'_j = 1$; in other words, if the dynamics for epoch $(i,S)$ were to continue from $x'$ onward, some boundary-satisfied coordinate would become unsatisfied.
    \end{itemize}
\end{definition}
\vspace{-4pt}
We will place mild assumptions on $V$ so that there can be a unique $j$ triggering the condition of Bad Exit Point in Definition~\ref{def:exitConditions} and there can be a unique $j$ triggering the Middling Exit Point condition, when $x'$ is a point where coordinates in $S$ are zero-satisfied and coordinates in $[i-1]\setminus S$ are boundary-satisfied. (Specifically, see Assumptions~\ref{a:2} and~\ref{a:3} in Section~\ref{sec:formalStatements}). Here are the actions that we need to take if one of the above exit conditions has been reached happen.
\medskip

\noindent \textbf{Action at Good Events.} In case of a good event, we start  epoch $(i + 1, S')$ at $x'$, where $S' = S \cup \{i\}$, if $i$ is zero-satisfied at $x'$, and $S' = S$, if $i$ is boundary-satisfied at $x'$.
\medskip

\noindent \textbf{Action at Bad Events.} In case of a bad event, note that the coordinate $j$ responsible for the condition in the bad event to trigger must lie in $S \cup \{i\}$ because in all other coordinates $(D^i_S(x'))_j = 0$ by definition. Depending on which $j$ triggers the condition of the bad event we do one of the following:

\noindent (1) if the triggering $j=i$, then we start  epoch $(i - 1, S \setminus \{i - 1\})$ at $x'$.

\noindent (2) if the triggering $j\neq i$, then we start  epoch $(i, S \setminus \{j\})$ at $x'$.
\medskip



\noindent \textbf{Action at Middling Events.} In the case of a middling event, we start epoch $(i, S \cup \{j\})$ at $x'$ (because the  coordinate $j$ that trigger this event is both zero- and boundary-satisfied at $x'$ so we add it to $S$ to constrain the dynamics to zero-satisfy it next.).

\medskip

Combining all the aforementioned ideas we describe our algorithm in Dynamics \ref{dyn:Sperner}. In Section~\ref{ref:bilinear} we do a step-by-step analysis of what the algorithm would do for a simple min-max optimization problem.

\begin{algorithm}
  \caption{STay-ON-the-Ridge (STON'R)}\label{dyn:Sperner}
 \begin{algorithmic}[1]
 \STATE Initially $x{(0)} \leftarrow (0,\ldots,0)$, $i \leftarrow 1$, $S \leftarrow \emptyset$, $t \leftarrow 0$.

 \WHILE {$x{(t)}$ is not a VI solution}
        \STATE Initialize epoch $(i, S)$'s continuous-time dynamics, $\dot{z}(\tau) = D^i_S(z(\tau))$, at $z(0)=x{(t)}$.
        \WHILE{${z}(\tau)$ is not an exit point as per Definition \ref{def:exitConditions}}
        \smallskip
          \STATE Execute $\dot{z}(\tau)=D^i_S(z(\tau))$ forward in time.
        \smallskip
        \ENDWHILE
        \STATE Set $x{(t+\tau)}=z(\tau)$ for all $\tau \in [0, \tau_{\rm exit}]$ \textit{(where $\tau_{\rm exit}$ is earliest time ${z}(\tau)$ became an exit point)}.
        \smallskip
        \IF{$x{(t+\tau_{\rm exit})}$ is (Good Exit Point) as in Definition \ref{def:exitConditions}}
        \IF{$i$ is zero-satisfied at $x{(t+\tau_{\rm exit})}$}
        \STATE Update $S \leftarrow S \cup \{i\}$.
        \ENDIF
        \STATE Update $i \leftarrow i + 1$.
        \ELSIF{$x{(t + \tau_{\rm exit})}$ is a (Bad Exit Point) as in Definition \ref{def:exitConditions} for $j = i$}
        \STATE Update $i \leftarrow i - 1$ and $S \leftarrow S \setminus \{i - 1\}$.
        \ELSIF{$x{(t + \tau_{\rm exit})}$ is a (Bad Exit Point) as in Definition \ref{def:exitConditions} for $j \neq i$}
        \STATE Update $S \leftarrow S \setminus \{j\}$.
        \ELSIF{$x{(t + \tau_{\rm exit})}$ is a (Middling Exit Point) as in Definition \ref{def:exitConditions} for $j < i$}
        \STATE Update $S \leftarrow S \cup \{j\}$.
        \ENDIF
        \STATE Set $t \leftarrow t + \tau_{\rm exit}$.
\ENDWHILE
  \STATE \textbf{return }$x(t)$
  \end{algorithmic}
\end{algorithm}

\subsection{Our Assumptions and Our Main Theorem} \label{sec:formalStatements}
We next present the assumptions on $V$ that are needed for our convergence proof. We discuss these assumptions further in Appendix~\ref{app:assumptions} explaining why they are mild.

\begin{assumption}\label{a:1}
There exist positive real numbers $0<\sigma_{\min} < \sigma_{\max}$ so that the following holds: 
For all $x \in [0,1]^n$ and set of coordinates $S=\{s_1,\ldots,s_m\}$, if $V_\ell(x) =0$ for all $\ell \in S$, then the singular values of the $m \times m$ matrix
\[
J_S^K(x):=\begin{pmatrix}
\frac{\partial V_{s_1}(x)}{\partial x_{s_1}} & \ldots & \frac{\partial V_{s_1}(x)}{\partial x_{s_m}} \\
\vdots & & \vdots \\
\frac{\partial V_{s_m}(x)}{\partial x_{s_1}} & \ldots & \frac{\partial V_{s_m}(x)}{\partial x_{s_m}} \\
\end{pmatrix}
\]
are greater than $\sigma_{\text{min}}$ and less than $\sigma_{\text{max}}$.
\end{assumption}

\begin{assumption}\label{a:2}
For any $x\in [0,1]^n$, set of coordinates $S=\{s_1,\ldots,s_m\}$, and $i \notin {S}$:  If $\left(V_\ell(x) = 0\text{ for all }\ell \in S\right)$ and $\left(x_\ell \in \{0,1\}~~\text{ for all }\ell \notin {S} \cup \{i\}\right)$ then there is at most one coordinate $j \in S \cup \{i\}$ such that $x_j = 0 $ or $x_j = 1$.
\end{assumption}
One may ensure Assumption~\ref{a:2} holds by restricting the domain of each variable $i$ in the subset $[\alpha_i,1 - \beta_i]$ of $[0,1]$, where $\alpha_i,\beta_i$ are uniformly random in $[0,\epsilon]$. For details we refer to Section~\ref{app:assumptions}.
\begin{assumption}\label{a:3}
For all: (i) collection of coordinates $S=(s_1,\ldots,s_m)$, (ii) coordinate $i \notin {S}$, (iii) point $x\in [0,1]^n$ such that ($V_\ell(x) = 0$ for all $\ell \in S$)
and ($x_\ell \in \{0,1\}$ for all $\ell \notin {S} \cup \{i\}$), and (iv) vector $(d_{s_1},\ldots,d_{s_m},d_i)$ satisfying the equations,
\[\nabla_{S\cup\{i\}}V_j(x)^\top \cdot
    \left(d_{s_1},\ldots,d_{s_m},d_i\right) = 0
    \text{ for all }j \in S,\]
we have that $d_j \neq 0$ if $x_j =0$ or $x_j =1$.
\end{assumption}

\noindent We are now ready to state our main theorem whose is presented in Appendix \ref{sec:convergence_proof}.

\begin{theorem}\label{t:main}
  Under Assumptions~\ref{a:1},~\ref{a:2}, and ~\ref{a:3}, there exists some $\bar{T}=\bar{T}(\sigma_{\rm min}, \sigma_{\rm \max}, n, L, \Lambda)>0$ such that {\sc STay-On-the-Ridge} (Dynamics~\ref{dyn:Sperner}) will stop, at some time $T \le \bar{T}$, at some point $x(T) \in [0, 1]^n$ that is a solution of $\mathrm{VI}(V, [0, 1]^n)$.
\end{theorem}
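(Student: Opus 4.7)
My plan is to instantiate the topological parity argument of Section~\ref{sec:ppadArgument}. I will construct a finite directed graph $G=(N,E)$ whose nodes $N$ are the admissible epoch-start configurations $(x,i,S)$ that Dynamics~\ref{dyn:Sperner} can visit, i.e.\ triples with $S\subseteq[i-1]$, with each $\ell\in S$ zero-satisfied at $x$, each $\ell\in[i-1]\setminus S$ boundary-satisfied at $x$, and the remaining coordinates positioned as permitted by the algorithm's bookkeeping. For each such node, running the continuous dynamics $\dot z(\tau)=D^i_S(z(\tau))$ until the first exit point $x'$ (in the sense of Definition~\ref{def:exitConditions}) and then applying the re-labeling rules on lines 9--18 of Dynamics~\ref{dyn:Sperner} produces a successor $F(x,i,S)\in N$; these $(v,F(v))$ pairs form the edge set. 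The plan is then to verify the four hypotheses of Lemma~\ref{lem:ppad}: every node has out-degree at most $1$, every node has in-degree at most $1$, the initial node $v_0:=((0,\ldots,0),\,1,\,\emptyset)$ has in-degree $0$ and out-degree $1$, and every node with out-degree $0$ corresponds to a VI solution. Combining these with Lemma~\ref{lem:ppad} produces a finite path from $v_0$ to a terminal node which, by construction, is a VI solution.

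Finiteness of $N$ and the out-degree bound are the easier ingredients. For fixed $(i,S)$, the set of admissible $x$ is cut out by $|S|$ equations $V_\ell(x)=0$ together with boundary conditions on coordinates outside $S\cup\{i\}$; by Assumption~\ref{a:1} the Jacobian of this system has singular values bounded below by $\sigma_{\min}$, so a quantitative implicit function theorem together with the $L$-smoothness of $V$ shows that solutions are isolated with spacing $\Omega(\sigma_{\min}/L)$, yielding a crude packing bound $|N|\le (O(L/\sigma_{\min}))^n\cdot 2^{O(n)}$. For the out-degree, Definition~\ref{d:directionality1} together with Assumption~\ref{a:1} uniquely determines $D^i_S(x)$ at every admissible $x$ (items 1--2 pin down the direction up to sign and item~3 fixes the sign via the determinant rule), so the continuous trajectory, its first exit point $x'$, and the subsequent epoch label are all deterministic.

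The heart of the proof, which I expect to be the main obstacle, is the in-degree at most $1$ property. Given a node $w=(x',i',S')\in N$, I must show there is at most one predecessor $(x,i,S)\in N$ with $F(x,i,S)=w$. My plan is to argue that $w$ itself determines which of the four update-rule cases produced it (Good, Bad with $j=i$, Bad with $j\neq i$, or Middling) and the triggering coordinate $j$: Assumption~\ref{a:2} guarantees that at most one coordinate of $w$ can be freshly on the boundary, and Assumption~\ref{a:3} guarantees that this coordinate has a nonzero tangent component, which unambiguously identifies $j$. Inverting the re-labeling rule then recovers a unique $(i,S)$, and reversing the continuous dynamics $\dot z=-D^i_S(z)$ from $x'$ traces a unique arc back to a unique admissible $x$. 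Verifying that the four transition types are mutually exclusive at $w$, and checking that the sign convention in part~3 of Definition~\ref{d:directionality1} is preserved under time-reversal so that the backward arc re-enters the admissible region rather than exits it, is the delicate piece of case analysis I expect to consume the bulk of the technical work.

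It remains to handle the boundary cases of $G$ and to bound the running time. The initial node $v_0$ has in-degree $0$ because no update rule in Dynamics~\ref{dyn:Sperner} can produce the label pair $i=1,S=\emptyset$: the only rule that decreases $i$, namely the Bad exit with $j=i$, requires $i-1\in S$ before the transition. Its out-degree is $1$ because at $v_0$ the direction $D^1_\emptyset=+e_1$ is well-defined and the trajectory runs for a positive amount of time before its first exit. Any node with out-degree $0$ must have every coordinate satisfied at $x$ in the sense of Definition~\ref{def:satisfied}, for otherwise one of the rules in Dynamics~\ref{dyn:Sperner} would fire and produce a successor; by Lemma~\ref{lem:viSolutions} such an $x$ is a VI solution. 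Finally, $\bar T$ is at most $|N|$ times the longest arc length within a single epoch; the arc length inside an epoch is bounded using the $\Lambda$-Lipschitz and $L$-smooth properties of $V$ together with the singular-value bounds $\sigma_{\min},\sigma_{\max}$ from Assumption~\ref{a:1} (which bound $\|D^i_S\|$ and its variation and hence force the trajectory to traverse a finite-length arc before hitting an exit), yielding the claimed $\bar T=\bar T(\sigma_{\min},\sigma_{\max},n,L,\Lambda)$.
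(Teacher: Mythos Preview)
Your overall architecture matches the paper's: build a finite directed graph on epoch-start configurations, show out-degree $\le 1$ and in-degree $\le 1$, identify the initial node as a source, and conclude via Lemma~\ref{lem:ppad}. The finiteness and out-degree arguments are essentially what the paper does (Lemmas~\ref{l:pivot_finite}, \ref{l:outdegree}, \ref{l:hit}).

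There is, however, a genuine gap in your argument that $v_0=((0,\ldots,0),1,\emptyset)$ has in-degree~$0$. You claim that the Bad-exit-with-$j=i$ rule ``requires $i-1\in S$ before the transition.'' It does not: line~14 of Dynamics~\ref{dyn:Sperner} sets $S\leftarrow S\setminus\{i-1\}$ regardless of whether $i-1$ was in $S$. In particular, an epoch $(2,\emptyset)$ or $(2,\{1\})$ that terminates in a Bad exit at $j=2$ would, by your label bookkeeping alone, produce exactly the pair $(1,\emptyset)$. So pure label accounting cannot rule out a predecessor. The paper's proof (Lemma~\ref{l:0}) instead uses the geometry of the specific point $(0,\ldots,0)$: if a trajectory $\dot z=D^i_S(z)$ reaches the origin, then either $S=\emptyset$, in which case $D^i_\emptyset=e_i$ has $i$-th coordinate $+1$ and cannot arrive at $z_i=0$ from inside $[0,1]^n$, or $|S|\ge 1$, in which case $V_j(0,\ldots,0)=0$ for all $j\in S$ while every coordinate of the origin lies on the boundary, contradicting Assumption~\ref{a:2} (two or more coordinates of $S\cup\{i\}$ simultaneously at the boundary). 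You need this geometric argument, or something equivalent, in place of your incorrect label-based one.

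A smaller comment on the in-degree $\le 1$ step: your plan is to show that the target node $w=(x',i',S')$ determines which of the four transition types produced it and hence the predecessor label $(i,S)$. The paper's actual proof (Lemma~\ref{l:indegree}) does not proceed this way. Instead it assumes two predecessors with admissible pairs $(i,S)$ and $(i',S')$ and performs a case analysis on $|M|:=|S\triangle S'|$ and on whether $i=i'$, in each case deriving a contradiction with Assumption~\ref{a:2} or Assumption~\ref{a:3}, or invoking uniqueness of solutions to $\dot z=D^i_S(z)$ when the labels coincide. Your route may well work, but be aware that ``$w$ determines the transition type'' is not immediate: for instance, a point can simultaneously look like a Good exit (coordinate $i'$ becoming satisfied) and a Bad exit (some $j$ hitting the boundary), and it is precisely the uniqueness guaranteed by Assumptions~\ref{a:2} and~\ref{a:3} that resolves the ambiguity. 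The sign-consistency of part~3 of Definition~\ref{d:directionality1} under epoch transitions, which you correctly flag as delicate, is handled in the paper by the determinant computations in Lemmas~\ref{c:main} and~\ref{l:new}.
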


\begin{remark}[Discrete-time Algorithm]
  It is possible to combine the proof of Theorem \ref{t:main} with standard numerical analysis techniques to show the convergence of a simple discrete version of the dynamics assuming that the step size is small enough. For more details about this we refer to Appendix \ref{sec:discrete}.
\end{remark}





\subsection{Sketch of Proof of Theorem~\ref{t:main}}\label{s:sketch_proof}

A sketch of our proof of Theorem \ref{t:main} comes from the recipe that we described in Section \ref{sec:ppadArgument}.
\begin{enumerate}[leftmargin=10pt]
  \item We start with the definition of the set of nodes $N$. The set $N$ contains triples of the form $(i, S, x)$ where $i \in [n]$, $S$ is a subset of $[i - 1]$ and $x \in [0, 1]^n$ that satisfies the following:
  \begin{itemize}
    \item (a) all coordinates in $S$ are zero-satisfied, (b) all coordinates in $[i - 1] \setminus S$ are boundary-satisfied, (c) $x_j = 0$ for all $j \geq i + 1$, and either (d1) $x_i = 0$ or (d2) $x$ is an exit point for epoch $(i, S)$ according to Definition \ref{def:exitConditions} \footnote{The actual set of nodes that we used in the proof does not contain the information of $i$ and $S$ but we refer to the Appendix for the exact proof.}.
  \end{itemize}
  
  We show in the Appendix the size of $N$ is finite (see Lemma~\ref{l:pivot_finite}).

  
  Next we describe a mapping $F : N \to N$ as in Section \ref{sec:ppadArgument}. Let $(i, S, x) \in N$ we use the dynamics $\dot{z} = D^{i}_S(z)$ with initial condition $z(0) = x$ and we find the minimum time $\tau_{\rm exit}$ such that $z(\tau_{\rm exit})$ is an exit point. We then update $i$, $S$ to $i'$, $S'$ according to the rules for actions on exit points of Section \ref{sec:continuousDynamics} and we define $F((i, S, x)) = (i', S', z(\tau_{\rm exit}))$. As we show in the appendix the dynamics $\dot{z} = D^{i}_S(z)$ have a unique solution under our assumptions and hence $F$ is well defined (Lemma~\ref{l:outdegree}).
  
  The set $N$ and the mapping $F$ define the directed graph $G$ as we described in Section \ref{sec:ppadArgument} that is guaranteed to have vertices with out-degree at most $1$. We also show that any $v \in V$ with out-degree 0 is an equilibrium point (Lemma~\ref{l:outdegree}).
  
  \item To show that the in-degree is at most $1$ too we show that we can actually solve the dynamics backwards in time. In particular, if we specify $z(0)$ and there is the smallest time $\tau_{\rm exit}$ such that $z(- \tau_{\rm exit})$ is an exit point then $z(- \tau_{\rm exit})$ is uniquely determined. This means that there exists $F^{-1} : N \to N$ such that if $v' = F(v)$ then $F^{-1}(v') = v$ which means that no vertex in $N$ can have in-degree more than $1$ (Lemma~\ref{l:indegree}).
  
  \item We show that $v_0 = (1, \emptyset, (0, \dots, 0)) \in N$. Also, if run the dynamics $\dot{z} = D^{1}_{\emptyset}(z)$ backwards in time starting at $z(0) = 0$ then we get outside $[0, 1]^n$ and so $v_0$ has in-degree $0$. We also show that the dynamics $\dot{z} = D^{1}_{\emptyset}(z)$ can move forward in time and stay inside $[0, 1]^n$ so $v_0$ has out-degree $1$ (Lemma~\ref{l:0}).
  \item The above show that our algorithm converges.
\end{enumerate}

\bibliographystyle{alpha}
\bibliography{ref}




\newpage
\appendix

\section*{Appendix}

\section{Discussion Of Assumptions~\ref{a:1},~\ref{a:2} and~\ref{a:3}}\label{app:assumptions}

In this section we discuss about the generality of our Assumptions \ref{a:1}, \ref{a:2}, and \ref{a:3}. We follow a general recipe in our arguments. In particular, we consider any VI problem $\mathrm{VI}(V, K)$ that does not satisfy some of our Assumptions, then we argue that there exists a small random perturbation $\mathrm{VI}(\tilde{V}, \tilde{K})$ of $\mathrm{VI}(V, K)$ such that:
(1) any approximate solution of $\mathrm{VI}(\tilde{V}, \tilde{K})$ is also an approximate solution of $\mathrm{VI}(V, K)$ with slightly higher approximation loss, and (2) $\mathrm{VI}(\tilde{V}, \tilde{K})$ satisfies all our Assumptions.

The arguments that we present in the next sections are heuristic but we conjecture that our statements are true in general which we leave as an interesting open problem. The main component that we miss towards this direction is the following: we can so that for a particular problem $\mathrm{VI}(V, K)$ if a particular point $x \in K$ violates some of the assumptions then a random perturbation suffices to make $x$ satisfy all the assumptions. The argument is missing is to show that these random perturbations produce instances that satisfy the assumptions for every point in the space. As we said before we conjecture that this is actually true and we have verified our conjecture in some simple experiments.

\subsection{Assumption~\ref{a:1}}

To understand this assumption, consider the instance $V(x)=3x^2$ which is the simplest single-dimensional VI problem violating our assumption. This corresponds to a local maximization problem with objective function $f(x)=x^3$ as per our discussion in Section~\ref{sec:prelimss}.  In this case, at $x = 0$ we have that $V(0) = 0$ and $V'(0) = 0$ at the same time and hence Assumption ~\ref{a:1} is violated.
However, it is easy to perturb $f$ and $V$ in this problem to a problem that does not have this issue. We can simply add to $f$ a periodic function, e.g., $\alpha \cdot \sin(x + \psi)$, with parameter $\alpha$ very small and in particular $\alpha \le \eps$ and we suppose that we chose $\psi$ uniformly. 
Let $\tilde{f}$ be the modified maximization objective, i.e., $\tilde{f}(x) = f(x) + \alpha \cdot \sin(x + \psi)$. It is not hard to see that any stationary point of $\tilde{f}$ is also approximate stationary points of $f$ and that the probability that $\tilde{f}$ has both first and second derivatives small at the same time is close to zero. This example suggests that in single-dimensional problems adding a periodic function with small magnitude and random period can produce an instance that satisfies Assumption~\ref{a:1}, while preserving the set of solutions.

In higher dimensions the situation is more complicated and a formal argument to show that a \textit{regularization} procedure, as the one we described above, exists becomes more technically challenging. Our conjecture though is that such a procedure exists for high-dimensions as well. To support this conjecture we ran some simple experiments with objective functions that do not satisfy \ref{a:1} and we observe that indeed small random perturbations always produce objective functions that satisfy Assumption \ref{a:1}. A theoretical proof for the possibility of this \textit{regularization} approach is a very interesting open problem.

\subsection{Assumption~\ref{a:2}}

We can ensure that Assumption~\ref{a:2} holds by restricting the domain of each variable $i$ in the subset $[\alpha_i,1 - \beta_i]$ of $[0,1]$, where $\alpha_i,\beta_i$ are uniformly random in $[0,\epsilon]$. Is we choose $\alpha_i$ and $\beta_i$ to be very small, a solution to the VI problem in the restricted domain, corresponds to a $\Theta(\epsilon)$-approximate one in the original domain. Moreover, it is not hard to see that due to the randomness in the $\alpha_i$'s and the $\beta_i$'s, Assumption~\ref{a:2} holds with probability $1$ in the new domain (with the natural adjustment of the assumption statement, taking $\alpha_i$ and $1-\beta_i$ be the boundary values for each coordinate $i$).

As an example, consider the curve $C = \{x\in [0,1]^n \text{ such that } V_1(x)=0,\ldots ,V_{n-1}(x)=0\}$ and assume (because of Assumption~\ref{a:1}) that for all $x \in C$ the matrix 
\[J(x):= \begin{pmatrix}
\frac{\partial V_1(x)}{\partial x_1}& \ldots & \frac{\partial V_1(x)}{\partial x_n}\\
\vdots & & \vdots\\
\frac{\partial V_{n-1}(x)}{\partial x_1}& \ldots & \frac{\partial V_{n-1}(x)}{\partial x_n}
\end{pmatrix}
\]
admits singular values greater than $\sigma_{\text{min}}$ and smaller than $\sigma_{\text{max}}$. If the boundaries $[\alpha_i,1-\beta_i]$ for each coordinate $i$ are selected uniformly at random from the interval $[0,\epsilon]$, then with high probability the curve $C$ hits the random rectangle $[\alpha_1,1-\beta_1] \times \cdots \times [\alpha_n,1-\beta_n]$ only in \textit{pure facets} (only one coordinate $i$ equals $\alpha_i$ or $1-\beta_i$).


\subsection{Assumption~\ref{a:3}}

We argue about the generality of Assumption \ref{a:3} using the same idea as before. We argue that there exists a small random perturbation of every problem so that the resulting VI satisfies Assumption \ref{a:3} with high probability. In particular, consider any VI problem with map $V(x)$ and define $\tilde{V}(x)=V(x)+Ax$, where each entry $A_{ij}$ is selected uniformly at random from $[-\epsilon,\epsilon]$. A VI solution $x^\ast$ for $\tilde{V}$ is a $\Theta(\epsilon n)$-approximate VI solution for $V$.


Now Item~$2$ of Definition~\ref{d:directionality1} defining the notion of direction $d=D^i_S(x)$ 
takes the following form,
\[ \left( \nabla_{S\cup \{i\}}V_j(x) + A^j_{S\cup \{i\}} \right)^\top \cdot (d_{s_1},\ldots,d_{s_m},d_i) = 0
\] 
where $A^j_{S\cup \{i\}}$ denotes the $j$-th row of $A$ restricted to the columns $\ell \in S\cup\{i\}$. Due to the fact that all vectors $\nabla_{S\cup \{i\}}V_j(x)$ are linearly independent and the fact that the entries $A_{ij}$ have been selected uniformly at random in $[-\epsilon,\epsilon]$ we can easily conclude that 
\[\Pr[\text{there exists }j \in S\cup\{i\}\text{ with }d_j =0] = 0\]
which suggests that Assumption \ref{a:3} holds with high probability at $x$.

\section{2-d Example of STOR'N Execution} \label{ref:bilinear}
 
 In Figure \ref{fig:exampleMain}, we show the trajectory that our algorithm follows when it is applied to solve a min-max optimization problem with objective $f(\theta,\omega):= (\theta-1/2)\cdot(\omega-1/2)$ where $\theta$ is the minimizing and $\omega$ is the maximizing variable. We explain below how this trajectory is derived by following Dynamics~\ref{dyn:Sperner}.

\begin{figure}[!ht]
\centering
    \includegraphics[width=0.4\textwidth]{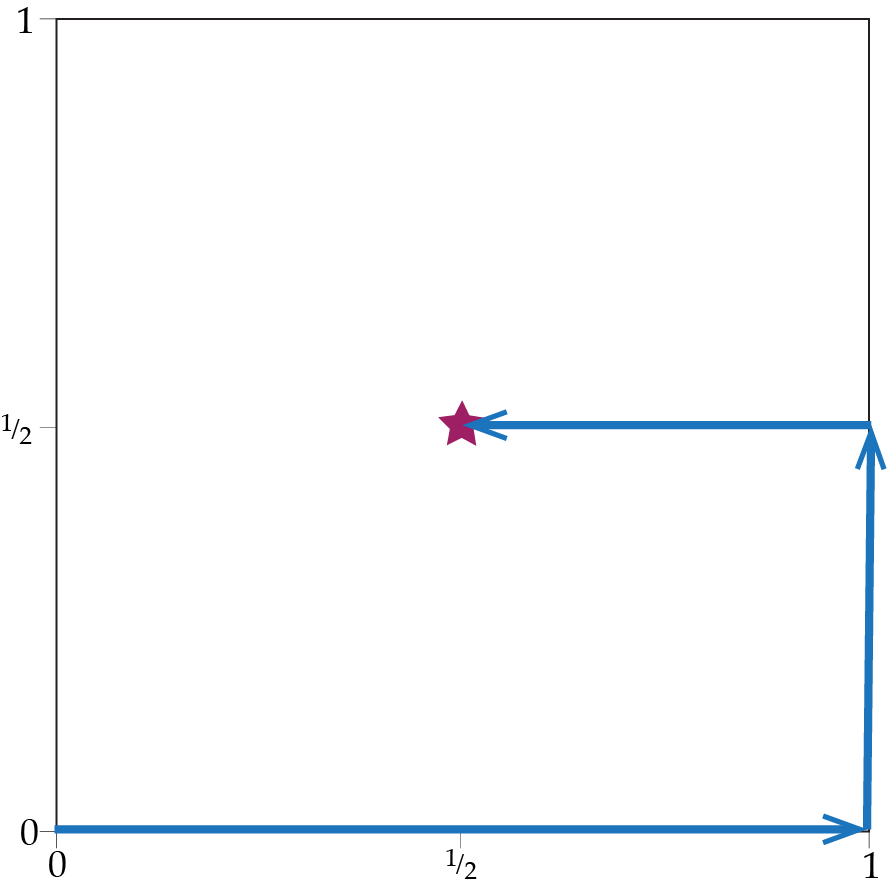}
    \caption{The path of STON'R for $f(\theta, \omega) = (\theta - 1/2) \cdot (\omega - 1/2)$.}
    \label{fig:exampleMain}
\end{figure}

First, using our notation in Section~\ref{sec:prelimss}, let $x_1$ correspond to $\theta$ and $x_2$ correspond to $\omega$. As explained in the same section, finding a local min-max equilibrium can be reduced to a non-monotone VI problem where $V_1(x_1,x_2) := 1/2 -x_2$ and $V_2(x_1,x_2) := x_1 -1/2$. Next we describe the steps that our algorithm follows.

\begin{itemize}
    \item[\textcolor{Sepia}{$\triangleright$}] \textcolor{Sepia}{\underline{$x(0) = (0,0), i = 1, S=\emptyset, t=0 \text{, STON'R goes to Step 3.}$}}
    $V_1(0,0) = 1/2>0$ and $x_1 =0$, hence coordinate $1$ is not satisfied. Thus, the loop of Step~$2$ is activated and STON'R goes to Step~$3$.
    
    \item[\textcolor{Sepia}{$\triangleright$}] \textcolor{Sepia}{\underline{STON'R goes to Step~$5$ and executes $\dot{z}(\tau) = (1 , 0)$ with initialization $z(0) = (0,0)$.}} Note that at $x=(0,0)$ the only unit direction satisfying the constraints of Definition~\ref{d:directionality1} is $(1,0)$ and that the same is true for any point $(\cdot,0)$. Thus, for all these points $D^1_\emptyset((\cdot,0))=(1,0)$, and  the continuous-time dynamics executed at Step~$5$ is $\dot{z}(\tau) = (1 , 0)$. 
    
    \end{itemize}
    
    \begin{itemize}    
    \item[\textcolor{Sepia}{$\triangleright$}] \textcolor{Sepia}{\underline{STON'R goes to Step~$7$ and sets $x(1)=(1,0)$.}}
    For any point $z =(z_1,0)$, $V_1(z) = 1/2$. Thus the continuous-time dynamics of Step~$5$ only terminates when it hits the boundary of the square at point $(1,0)$, which happens at time $\tau_{\rm exit}=1$. At Step~$7$, the algorithm sets $x(1)=z(1)=(1,0)$. 

    \item[\textcolor{Sepia}{$\triangleright$}] \textcolor{Sepia}{\underline{STON'R goes to Step~$12$ and sets $i=2$.}}
     $V_1(x(1)) = 1/2 >0$ thus coordinate~$1$ is boundary-satisfied at this point. Because this is  the good event of Definition~\ref{def:exitConditions}, the condition of the if statement of Step~$8$ triggers. Because coordinate $1$ is boundary-satisfied the condition of the if statement of Step~$9$ is not triggered. Thus the algorithm arrives at Step~$12$ and sets $i=2$.
    
    \item[\textcolor{Sepia}{$\triangleright$}] \textcolor{Sepia}{\underline{STON'R goes to Step~$3$ with $i = 2$, $S=\emptyset$.}}
    At $x(1)=(1,0)$ coordinate~$1$ is boundary-satisfied since $V_1(1,0)=1/2 > 0$ but coordinate $2$ is not satisfied since $V_2(1,0)=1/2 > 0$. Thus, the while condition of Step~$2$ is triggered and STON'R goes to Step~$3$. 

     \item[\textcolor{Sepia}{$\triangleright$}] \textcolor{Sepia}{\underline{STON'R goes to Step~$5$ and executes $\dot{z}(\tau) = (0 , 1)$ with initialization $z(0) = (1,0)$.}} Note that at $x=(1,0)$ the only unit direction satisfying the constraints of Definition~\ref{d:directionality1} is $(0,1)$ and that the same is true for any point $(1,\cdot)$. Thus, for all these points $D^2_\emptyset((1,\cdot))=(0,1)$, and the continuous-time dynamics executed at Step~$5$ is $\dot{z}(\tau) = (0 , 1)$. 
     
     
    \item[\textcolor{Sepia}{$\triangleright$}] \textcolor{Sepia}{\underline{STON'R goes to Step~$7$ and sets $x(1.5)=(1,0.5)$.}}
     For any point $z =(1,z_2)$, $V_1(z) = 1/2-z_2$ and $V_2=1/2$. Thus the continuous-time dynamics of Step~$5$ only terminates when it hits point $(1,0.5)$, which happens at time $\tau_{\rm exit}=1/2$. The reason the continuous-time dynamics terminates at this point is because the middling condition of Definition~\ref{def:exitConditions} is triggered for~$j=1$. Indeed, coordinate~$1$ is boundary satisfied from the beginning of the continuous-time dynamics until it reaches point $(1,0.5)$ but if the continuous-time dynamics were to continue onward, then coordinate~$1$ would become unsatisfied as $V_1$ would turn negative. Thus the continuous-time dynamics stops at time $\tau_{\rm exit}=1/2$, the algorithm moves to Step~$7$ and it sets $x(1.5)=z(0.5)=(1,0.5)$. 

    \item[\textcolor{Sepia}{$\triangleright$}] \textcolor{Sepia}{\underline{STON'R goes to Step~$18$ and sets $S=\{1\}$.}}
    Since the most recently executed continuous-time dynamics at Step~$5$ ended at a middling exit point, the condition of Step~$17$ is activated, so the algorihtm moves to Step $18$ where $S$ is set to $\{1\}$. 
     
      \item[\textcolor{Sepia}{$\triangleright$}] \textcolor{Sepia}{\underline{STON'R goes to Step~$3$ with $i = 2$, $S=\{1\}$.}}
    At $x(1.5)=(1,0.5)$ coordinate~$1$ is both zero- and boundary-satisfied since $V_1(1,0.5)=0$ but coordinate $2$ is still not satisfied since $V_2(1,0.5)=1/2$. Thus, the while condition of Step~$2$ is triggered and STON'R goes to Step~$3$. 

     \item[\textcolor{Sepia}{$\triangleright$}] \textcolor{Sepia}{\underline{STON'R goes to Step~$5$ and executes $\dot{z}(\tau) = (-1 , 0)$ with initialization $z(0) = (1,0.5)$.}} Note that at $x=(1,0.5)$ the only unit direction satisfying the constraints of Definition~\ref{d:directionality1} is $(-1,0)$ and that the same is true for any point $(\cdot,0.5)$. Thus, for all these points $D^2_{\{1\}}((\cdot,0.5))=(-1,0)$, and the continuous-time dynamics executed at Step~$5$ is $\dot{z}(\tau) = (-1 , 0)$.
    
    
  
  \item[\textcolor{Sepia}{$\triangleright$}] \textcolor{Sepia}{\underline{STON'R goes to Step~$7$ and sets $x(2)=(0.5,0.5)$.}}
     For any point $z =(z_1,0.5)$, $V_1(z) = 0$ and $V_2=z_1-1/2$. Thus the continuous-time dynamics of Step~$5$ only terminates when it hits point $(0.5,0.5)$, which happens at time $\tau_{\rm exit}=1/2$. The reason the continuous-time dynamics terminates at this point is because the good condition of Definition~\ref{def:exitConditions} is triggered for~$i=2$ at this point. Thus the continuous-time dynamics stops at time $\tau_{\rm exit}=1/2$, the algorithm moves to Step~$7$ and it sets $x(2)=z(0.5)=(0.5,0.5)$.

    \item[\textcolor{Sepia}{$\triangleright$}] \textcolor{Sepia}{\underline{STON'R goes to Step~$22$ and outputs $(0.5,0.5)$.}} The condition of the if statement of both Steps~$8$ and~$9$ are triggered, so $S=\{1,2\}$ and $i=3$. At $x(2)=(0.5,0.5)$ both coordinate $1$ and coordinate $2$ are satisfied, so the while loop of Step $2$ is not activated. So the algorithm goes to Step~$22$ and returns $(0.5,0.5)$.

\end{itemize}

It is easy to verify that the point $(\theta,\omega) = (1/2,1/2)$ is a (local) min-max equilibrium of $(\theta - 1/2)\cdot (\omega - 1/2)$.

\section{Proof of Lemma~\ref{lem:viSolutions}} \label{sec:simple VI characterization lemma}
\begin{proof}
$(\longleftarrow)$ Let $Z$ denote the zero-satisfied coordinates ($V_i(x) =0$), $\mathrm{BS}^+$ the boundary satisfied coordinates with $x_i =1$ (and thus $V_i(x) >0$) and  
$\mathrm{BS}^-$ the boundary satisfied coordinates with $x_i =0$ (and thus $V_i(x) <0$). For any $y \in [0,1]^n$, we have $\sum_{i=1}^n V_i(x)(x_i - y_i) \geq 0$, which can easily be seen by breaking up the sum into three sums corresponding to indices in $Z$, $\mathrm{BS}^+$ and $\mathrm{BS}^-$.\\
\\
$(\longrightarrow)$ Let $x \in [0,1]^n$ be a solution of the V, i.e.~$V(x)^\top (x - y) \leq 0$ for all $y \in [0,1]^n$. Consider an arbitrary $i \in [n]$ and a vector $y$ such that $y_j = x_j$ for all $j \neq i$. If $x_i=1$, take $y_i=0$, and plug this into $V(x)(x - y) \leq 0$ to get $V_i(x) \geq 0$. If $x_i=0$, take $y_i=1$, and plug this into $V(x)^\top (x - y) \leq 0$ to get $V_i(x) \leq 0$. If $x_i \in (0,1)$ consider first $y_i = x_i + \delta$ for some small $\delta>0$ and plug this into $V(x)^\top (x - y) \leq 0$ we get $V(x_i) \geq 0$. By repeating the same argument for $y_i=x_i-\delta$ we that get $V_i(x)\leq 0$. As a result, $V_i(x) = 0$.   
\end{proof}

\section{Proof of Theorem~\ref{t:main}}\label{sec:convergence_proof}
In this section we present the proof of Theorem~\ref{t:main}. The proof follows closely the sketch exhibited in Section~\ref{s:sketch_proof} with some slight modifications on the definition of the nodes $N$ of the directed graph $G$. 

\subsection{Helpful Definitions and Lemmas}
We start with the definition of pivots that will play the role of nodes $N$. 

\begin{definition}\label{d:pivot_new}
 A point $x\in [0,1]^n$ is called a pivot if and only if the following hold,
\begin{itemize}
    \item If coordinate $i$ is not satisfied then $V_i(x) > 0$.
    \item If $\ell$ is the minimum unsatisfied coordinate then $x_j =0$ for all coordinates $j \geq \ell + 1$.
    \item If $\ell$ is the minimum unsatisfied coordinate then there exists at least one coordinate $j \in M\cup \{\ell\}$ with $x_j =0$ or $x_j =1$ where $M:= \{j \leq \ell -1:~ V_j(x) = 0\}$.
\end{itemize}
\end{definition}

As in the proof sketch of Section~\ref{s:sketch_proof}, given a pivot $x$ (that admits at least one unsatisfied variable) we argue that STOR'N visits another pivot at some finite time. As depicted in Dynamics~\ref{dyn:Sperner}, the latter happens by following the continuous curve $\dot{z}(t) = D_S^i(z(t))$. Recall that in Dynamics~\ref{dyn:Sperner} the pair $(i,S)$ is updated in the previous steps of the algorithm (Steps~$8,13,15$~and~$17$). In the next Definitions~\ref{d:directionality2},~\ref{d:frozen}~and~\ref{d:admissible_pairs}, we provide an alternative way of "computing locally" the pair $(i,S)$ by using only the knowledge of $x(t)$ at Step~$3$ of Dynamics~\ref{dyn:Sperner}.

\begin{definition}\label{d:directionality2}
Consider the direction $D^i_{S}(x) := (d_1,\ldots,d_n)$ of Definition~\ref{d:directionality1} for the set of zero-satisfied coordinates $S=\{j<i \text{ with } V_j(x) = 0\}$ (recall that $d_j =0$ for all $j \notin S \cup \{i\}$). If,
for all $k \in S$, one of the following holds:
(a) $x_k \in (0,1)$, or (b) $x_k =0$ and $d_k \geq 0$, or (c) $x_k =1$ and $d_k \leq 0$, then we define $D^i(x) := D_S^i(x)$. Otherwise, let $j \in S$ be the unique coordinate (uniqueness follows from Assumption~\ref{a:2}) 
such that either $\{x_j = 0$ and $d_j <0\}$ or 
$\{x_j = 1$ and $d_j >0\}$, and we define $D^i(x):= D_{S \setminus  \{j\}}^i(x)$. $D^i(x)$ is called the {\em ideal direction of movement} at point $x \in [0,1]^n$ with respect to coordinate $i$.
\end{definition}

\begin{definition}\label{d:frozen}
Given a point $x \in [0,1]^n$ coordinate $i$ is called frozen if and only if ($x_i =0$ and $[D^i(x)]_i < 0)$ or ($x_i =1$ and $[D^i(x)]_i > 0)$ where $D^i(x)$ is the \textit{ideal direction} at $x$ with respect to coordinate $i$ (Definition~\ref{d:directionality2}).
\end{definition}

\begin{definition}\label{d:admissible_pairs}
Given a pivot $x \in [0,1]^n$ consider
\begin{itemize}
    \item $\ell := \min_{1\leq j \leq n}\{\text{coordinate }j \text{ is not satisfied at } x \}.$
    \item $i := \max_{j \leq \ell}\{\text{coordinate } j \text{ is not frozen at }x\}$.
    
    \item $S \leftarrow$ the set of coordinates such that $D^i(\cdot)  = D_S^i(\cdot)$ (see Definition~\ref{d:directionality2}).
\end{itemize}
The coordinate $i$ is called the \textit{under examination} coordinate, the pair $(i,S)$ is called the admissible pair for pivot $x$. 
\end{definition}

\begin{remark}
Computing the $(i,S)$ admissible pair of the pivot $x(t)$ at Step~$3$ in Dynamics~\ref{dyn:Sperner} is equivalent with Dynamics~\ref{dyn:Sperner} at which $(i,S)$ is updated at Steps~$8,13,15$~and~$17$.
\end{remark}
\subsection{Main Steps of the Proof}
To simplify notation we describe STOR'N using the notion of pivots and admissible pairs $(i,
S)$ of Definition~\ref{d:pivot_new}~and~\ref{d:admissible_pairs}.
\begin{algorithm}
  \caption{STay-ON-the-Ridge (STON'R)}\label{dyn:Sperner_EQ}
 \begin{algorithmic}[1]
 \STATE Initially $x{(0)} \leftarrow (0,\ldots,0)$, $i \leftarrow 1$, $S \leftarrow \emptyset$, $t \leftarrow 0$.

 \WHILE {$x{(t)}$ is not a VI solution}
        \STATE At point $x(t)$ compute the admissible pair $(i,S)$ for pivot $x(t)$.
        \smallskip
        \STATE Follows the continuous-time dynamics, $\dot{z}(\tau) = D^i_S(z(\tau))$, at $z(0)=x{(t)}$.
        \smallskip
        \WHILE{${z}(\tau)$ is not an exit point as per Definition \ref{def:exitConditions}}
        \smallskip
          \STATE Execute $\dot{z}(\tau)=D^i_S(z(\tau))$ forward in time.
        \smallskip
        \ENDWHILE
        \STATE Set $x{(t+\tau)}=z(\tau)$ for all $\tau \in [0, \tau_{\rm exit}]$ \textit{(where $\tau_{\rm exit}$ is earliest time ${z}(\tau)$ became an exit point)}.
        
        \STATE Set $t \leftarrow t + \tau_{\rm exit}$.
\ENDWHILE
  \STATE \textbf{return }$x(t)$
  \end{algorithmic}
\end{algorithm}

We are now ready to present the topological argument described in Section~\ref{s:sketch_proof}. As already mentioned the nodes $N$ of the directed graph $G$ will be the set of pivots while we say that there exists an edge $(x,x')$ from pivot $x$ to pivot $x'$ in case setting $z(0):= x$ and following the direction $\dot{z}(t) = D_S^i(z(t))$ (where $(i,S)$ is the admissible pair of $x$) leads to pivot $x'$ once one of the "if" loops in Steps~$9,11,13$~and~$15$ is activated.

In Lemma~\ref{l:pivot_finite} we establish the fact that the pivots which correspond to the number of nodes of directed graph $G$ are finite.

\begin{lemma}\label{l:pivot_finite}
There exists a finite number of pivots.
\end{lemma}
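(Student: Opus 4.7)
The plan is to partition the set of pivots into finitely many discrete ``types'' and show each type contains finitely many pivots. By Definition~\ref{d:pivot_new}, any pivot $x$ that is not a VI solution is determined, up to a point in a certain solution variety, by the discrete data: (i) the minimum unsatisfied coordinate $\ell \in [n]$; (ii) the set $M \subseteq [\ell-1]$ of coordinates $j$ with $V_j(x) = 0$; (iii) for each $j \in [\ell-1] \setminus M$, the boundary value $b_j \in \{0,1\}$ with $x_j = b_j$; and (iv) a witness pair $(j^*, c) \in (M \cup \{\ell\}) \times \{0,1\}$ with $x_{j^*} = c$ (the third clause of Definition~\ref{d:pivot_new} guarantees some such witness). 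There are at most $n \cdot 2^n \cdot 2^n \cdot 2n$ such tuples, so it suffices to bound the number of pivots of each type.

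For a fixed tuple, pivots of that type are exactly the zeros of the map $\Phi : [0,1]^{|M|} \to \mathbb{R}^{|M|}$, $\Phi(y) := (V_j(\iota(y)))_{j \in M}$, where $\iota(y)$ assigns the free coordinates $(M \cup \{\ell\}) \setminus \{j^*\}$ according to $y$ and fills in the remaining coordinates by $x_j = b_j$ for $j \in [\ell-1] \setminus M$, $x_{j^*} = c$, and $x_j = 0$ for $j > \ell$. By the inverse function theorem, it suffices to prove the Jacobian $D\Phi$ is nonsingular at every zero of $\Phi$; then every zero is isolated in the compact domain $[0,1]^{|M|}$ and hence $\Phi^{-1}(0)$ is finite.

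The main obstacle is verifying nonsingularity of $D\Phi$, and this splits into two cases. If $j^* = \ell$, the free variables are precisely $(x_j)_{j \in M}$ and $D\Phi$ coincides with $J_M^K$, which is nonsingular by Assumption~\ref{a:1}. If $j^* \in M$, let $A$ be the $|M| \times (|M|+1)$ matrix with entries $A_{jk} = \partial V_j / \partial x_k$ for $j \in M$, $k \in M \cup \{\ell\}$; then $D\Phi$ is the submatrix of $A$ obtained by deleting the column indexed by $j^*$. Since $A$ contains $J_M^K$ as an $|M| \times |M|$ block, $A$ has full row rank $|M|$, so its kernel is spanned by a single vector $v$; deleting the $j^*$ column gives a nonsingular matrix if and only if $v_{j^*} \neq 0$. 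The vector $v$ satisfies exactly items~1 and~2 of Definition~\ref{d:directionality1} with $S = M$ and $i = \ell$, and the hypotheses of Assumption~\ref{a:3} hold at any $x \in \Phi^{-1}(0)$ (since $V_j(x) = 0$ for $j \in M$ and $x_k \in \{0,1\}$ for $k \notin M \cup \{\ell\}$ by construction). Assumption~\ref{a:3} then gives $v_{j^*} \neq 0$ because $x_{j^*} = c \in \{0,1\}$.

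Combining the two cases yields nonsingularity of $D\Phi$ everywhere on $\Phi^{-1}(0)$ for every type, so each type contributes finitely many pivots; summing over the finitely many tuples gives the desired bound. The VI-solution pivots (if they count as pivots under the convention used) are handled by the same argument applied to the partition of $[n]$ into zero-satisfied and boundary-satisfied coordinates, using only Assumption~\ref{a:1}. The delicate step is the second case above: without Assumption~\ref{a:3} the removed column $j^*$ could lie in the row span of the remaining columns, producing a continuum of degenerate solutions and breaking isolation.
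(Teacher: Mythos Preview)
Your proof is correct and takes a different route from the paper's. The paper proves an auxiliary quantitative lemma (Lemma~\ref{l:finite}): any two distinct common zeros of a square system $F_1=\cdots=F_i=0$ whose Jacobian has singular values in $[\sigma_{\min},\sigma_{\max}]$ must be at least $\rho=\Theta(\sigma_{\min}^2/(\sqrt{i}\,L\,\sigma_{\max}^2))$ apart, via a second-order Taylor expansion; finiteness then follows by a volume-packing count. You instead argue qualitatively: partition pivots into finitely many discrete types, reduce each type to a square system $\Phi=0$ on a cube, verify $D\Phi$ is nonsingular at every zero, and conclude by the inverse function theorem plus compactness of $[0,1]^{|M|}$.

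Your treatment is in fact more explicit than the paper's about the case $j^*\in M$. There the relevant Jacobian $D\Phi$ has row index set $M$ but column index set $(M\cup\{\ell\})\setminus\{j^*\}$, so it is \emph{not} of the form $J_S^K$ covered by Assumption~\ref{a:1}; you correctly reduce its nonsingularity to $v_{j^*}\neq 0$ for the kernel vector $v$ and invoke Assumption~\ref{a:3}. The paper's one-line reduction (``Lemma~\ref{l:pivot_finite} directly follows by Lemma~\ref{l:finite}'') does not spell this out. What the paper's route buys is an explicit bound on the number of pivots in terms of $\sigma_{\min},\sigma_{\max},L,n$, which is needed downstream for the quantitative $\bar{T}$ in Theorem~\ref{t:main}; your argument yields bare finiteness, though once nonsingularity of $D\Phi$ is established one could recover a quantitative bound by feeding the (compactness-derived) minimum singular value into the same packing lemma.

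One minor inaccuracy that does not affect the logic: pivots of a given type are only a \emph{subset} of $\Phi^{-1}(0)$, not exactly $\Phi^{-1}(0)$, since pivots must also satisfy the open sign conditions ($V_\ell(x)>0$ and the correct signs of $V_j$ at the boundary-satisfied coordinates $j\in[\ell-1]\setminus M$). You only need an upper bound, so this is harmless.
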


In Definition~\ref{d:edges} we formalize the notion of directed edge $(x,x')$ in graph $G$ which we additionally denote as $x' = \mathrm{Next}(x)$.

\begin{definition}\label{d:edges}
Given a pivot $x \in [0,1]^n$ consider the trajectory $\dot{z}(t) =D_S^i(z(t))$ with $z(0) = x$ where $(i,S)$ is the admissible pair of $x$. We say that pivot $x'$ is the next pivot of $x$, i.e. $x' = \mathrm{Next}(x)$ if and only if there exists $t^\ast >0$ such that
\begin{itemize}
    \item $z(t^\ast) = x'$
    \item $z(t)$ is not a pivot for all $t \in (0,t^\ast)$.
\end{itemize}
\end{definition}

In Lemma~\ref{l:outdegree} we establish the fact that any pivot with at least one unsatisfied variable must necessarily admit outdegree equal to $1$. The latter directly implies that any pivot with outdegree $0$ must correspond to a solution since all coordinates are satisfied. 
\begin{lemma}\label{l:outdegree}
For any pivot $x\in [0,1]^n$ with at least one unsatisfied coordinate there exists a pivot $x'$ such that $x' = \mathrm{Next}(x)$. Moreover let $(i,S)$ be the admissible pair for pivot $x$, $z(t)$ be the trajectory $\dot{z}(t) = D_S^i(z(t))$ with $z(0) = x$ and $t'$ be the time at which $x' = z(t')$. Then for all $t \in [0,t']$,
\begin{itemize}
  \item all coordinates $j \in S$ admit $V_j(z(t)) =0$.
  \item all coordinates $j \leq i-1$ are satisfied at $z(t)$.
    \item all coordinates $j \geq i + 1$ admit $z_j(t) =0$.
      \item all coordinates $j$ admit $z_j(t) \in [0,1]$.
\end{itemize}
\end{lemma}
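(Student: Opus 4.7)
The plan is to verify the claim in three stages: (i) local existence and uniqueness of the continuous-time dynamics $\dot z(\tau)=D^i_S(z(\tau))$ with $z(0)=x$, together with preservation of the four stated invariants as long as the flow stays within the feasible set; (ii) a quantitative bound forcing the flow to trigger one of the exit conditions of Definition~\ref{def:exitConditions} in finite time; and (iii) identification of the first exit point $z(t')$ as a pivot, plus the verification that no intermediate $z(\tau)$ for $\tau\in(0,t')$ is a pivot.

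For (i), Assumption~\ref{a:1} guarantees that whenever $V_j(z)=0$ for all $j\in S$, the matrix $J_S^K(z)$ has singular values in $[\sigma_{\min},\sigma_{\max}]$, so the linear constraints of parts~1 and~2 of Definition~\ref{d:directionality1} pin down a unique unit vector up to sign, and the determinant sign condition of part~3 fixes the sign. The $L$-smoothness of the Jacobian then makes $z\mapsto D^i_S(z)$ locally Lipschitz on the constraint manifold, and Picard--Lindelöf yields a unique local solution. The invariant $V_j(z(\tau))=0$ for $j\in S$ is obtained by differentiating: $\frac{d}{d\tau}V_j(z(\tau))=\nabla V_j(z(\tau))^\top D^i_S(z(\tau))=0$ by part~2 of Definition~\ref{d:directionality1}, with initial condition $V_j(x)=0$ from the admissibility of $(i,S)$ at the pivot $x$. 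The invariants $z_j(\tau)=x_j$ for $j\notin S\cup\{i\}$ (and in particular $z_j(\tau)=0$ for $j\ge i+1$, via the pivot structure) and boundary-satisfaction of $j\in[i-1]\setminus S$ follow because $(D^i_S)_j=0$ for $j\notin S\cup\{i\}$, so these coordinates are frozen at their pivot-prescribed values; combined with the first invariant, this gives that every $j\le i-1$ stays satisfied. Feasibility $z(\tau)\in[0,1]^n$ holds by defining $t'$ as the first time an exit condition triggers.

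For (ii), the flow traces an arc on the regular one-dimensional submanifold $\mathcal{M}:=\{z\in[0,1]^n: V_j(z)=0\ \forall j\in S,\ z_j=x_j\ \forall j\notin S\cup\{i\}\}$, whose $1$-dimensionality and regularity follow from Assumption~\ref{a:1}. Since $D^i_S$ is a unit vector, the trajectory moves at unit speed, and uniqueness of ODE solutions rules out revisiting any point. Compactness of $\mathcal{M}\cap[0,1]^n$ together with the smoothness parameters yields a uniform bound $\bar T=\bar T(\sigma_{\min},\sigma_{\max},n,L,\Lambda)$ on the arclength of any connected component of $\mathcal{M}$ contained in $[0,1]^n$, so the flow must hit the boundary of the feasibility region, i.e.\ some exit point of Definition~\ref{def:exitConditions}, at some time $t'\le\bar T$.

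For (iii), let $x'=z(t')$. By the invariants, all coordinates $\le i-1$ remain satisfied at $x'$ and all coordinates $\ge i+1$ remain at $0$, giving two of the three bullets of Definition~\ref{d:pivot_new}. The remaining bullet---existence of a boundary-valued coordinate in $M\cup\{\ell\}$---is verified case-by-case on the exit type: a Good exit makes coordinate $i$ satisfied, so the new minimum-unsatisfied index $\ell'\ge i+1$ has $x'_{\ell'}=0$; a Bad exit forces some $j\in S\cup\{i\}$ onto $\{0,1\}$; a Middling exit brings some $j\in[i-1]\setminus S$ into $M$ while pre-existing boundary coordinates persist. Assumptions~\ref{a:2} and~\ref{a:3} preclude simultaneous degenerate boundary collisions, so $x'$ indeed satisfies Definition~\ref{d:pivot_new}, and by Definition~\ref{d:edges} we have $x'=\mathrm{Next}(x)$. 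Points $z(\tau)$ for $\tau\in(0,t')$ cannot be pivots since $t'$ is the first exit time, so the boundary-collision condition of Definition~\ref{d:pivot_new} fails there. The technically delicate step in this program is the arclength bound in (ii): ruling out pathologically long windings of the level-set manifold inside $[0,1]^n$ requires carefully combining Assumption~\ref{a:1} with the $\Lambda$-Lipschitz and $L$-smooth bounds on $V$ to obtain an explicit $\bar T$, and this is where the main technical work lies.
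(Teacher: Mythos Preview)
Your three-stage plan matches the paper's organization closely: the paper proves the lemma through three auxiliary results, Lemma~\ref{l:main_new} (local existence and the invariants, your stage~(i)), Lemma~\ref{l:hit} (finite boundary-hitting time via a disjoint-balls volume argument, your stage~(ii)), and Lemma~\ref{l:new} (structural properties of the under-examination coordinate, feeding into stage~(iii)). So the decomposition is right, and your identification of~(ii) as the main technical work is accurate.

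There is, however, a real gap in stage~(iii). To certify that $x'=z(t')$ is a pivot you must verify \emph{all three} bullets of Definition~\ref{d:pivot_new}, and you skip the first one: every unsatisfied coordinate at $x'$ must have $V_j(x')>0$. For $j\ge i+1$ this is automatic since $x'_j=0$, but for $j=i$ it is not. You need $V_i(z(\tau))>0$ along the whole trajectory up to the first non-Good exit, and this is exactly what the paper's Lemma~\ref{l:new} establishes: it shows that at the starting pivot either $V_i(x)>0$, or $V_i(x)=0$ with $\nabla V_i(x)^\top D^i_S(x)>0$, so $V_i$ is positive on $(0,\delta)$ and can only return to $0$ at a Good exit. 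Proving this uses the definition of the admissible pair and the notion of frozen coordinates in a nontrivial determinant computation; it is not a consequence of the flow invariants you list in~(i).

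A second, smaller issue: your sentence ``Points $z(\tau)$ for $\tau\in(0,t')$ cannot be pivots since $t'$ is the first exit time'' conflates the exit conditions of Definition~\ref{def:exitConditions} with the pivot conditions of Definition~\ref{d:pivot_new}. At an intermediate $\tau$ all $j\in S\cup\{i\}$ lie in $(0,1)$, but a frozen $j\in[i-1]\setminus S$ could in principle satisfy $V_j(z(\tau))=0$ while remaining boundary-satisfied (derivative pointing the safe way, hence no Middling exit), which would already make $z(\tau)$ a pivot via the third bullet of Definition~\ref{d:pivot_new}. The paper does not actually argue that its found pivot is the \emph{first} one either; it relies on the finiteness of pivots (Lemma~\ref{l:pivot_finite}) together with ODE uniqueness to conclude that the set $\{t>0:z(t)\text{ is a pivot}\}$ is finite and hence has a minimum, which is then $\mathrm{Next}(x)$. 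You should invoke the same mechanism rather than asserting that exit-time and first-pivot-time coincide.
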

Using Lemma~\ref{l:outdegree} we additionally obtain Corollary~\ref{c:1} ensuring that the point $x(t)$ at Step~$3$ of Dynamics~\ref{dyn:Sperner_EQ} is always a pivot and thus Dynamics~\ref{dyn:Sperner_EQ} is well-defined.
\begin{corollary}\label{c:1}
Let $x(t)$ at Step~$3$ of Dynamics~\ref{dyn:Sperner_EQ} be a pivot. Then the point $x(t + \tau_{exit})$ at Step~$8$ of Dynamics~\ref{dyn:Sperner_EQ} is also a pivot. Moreover the point $(0,\ldots,0)$ is a pivot. 
\end{corollary}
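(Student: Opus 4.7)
The plan is to handle the two claims of Corollary~\ref{c:1} separately.

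For the claim that $(0,\ldots,0)$ is a pivot, I would directly verify the three clauses of Definition~\ref{d:pivot_new}. Any unsatisfied coordinate $i$ at the origin must have $V_i > 0$, since $V_i \le 0$ combined with $x_i = 0$ would render $i$ boundary-satisfied; this takes care of clause~1. Clause~2 is trivial as all coordinates are zero. For clause~3, one takes $j = \ell$ itself, observing $x_\ell = 0$.

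For the first claim, the strategy is to apply Lemma~\ref{l:outdegree} to the pivot $x(t)$ with admissible pair $(i,S)$: the lemma yields a pivot $x' = \mathrm{Next}(x(t))$ reached at some time $t' > 0$ along $\dot z = D_S^i(z)$, together with invariants valid throughout $[0,t']$ — coordinates in $S$ stay zero-satisfied, coordinates $\le i-1$ stay satisfied, coordinates $\ge i+1$ stay at $0$, and $z(t)\in[0,1]^n$. I would then show that $x'$ triggers one of the exit conditions of Definition~\ref{def:exitConditions}, so $\tau_{\rm exit}\le t'$; combined with the fact that $z(t)$ is not a pivot on $(0,t')$ and that every exit point is a pivot (content of the case analysis below), this forces $\tau_{\rm exit} = t'$ and hence $x(t+\tau_{\rm exit}) = x'$, which is a pivot.

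The core step — that every exit point is a pivot — proceeds by case analysis on the exit type. In a good exit, coordinate $i$ becomes satisfied, so combined with the Lemma~\ref{l:outdegree} invariants, coordinates $\le i$ are all satisfied and $z_j=0$ for $j\ge i+1$; either $z(\tau_{\rm exit})$ is a VI solution or the minimum unsatisfied coordinate $\ell'\ge i+1$ satisfies $z_{\ell'}=0$, supplying clause~3. In a bad exit triggered by $j\in S\cup\{i\}$, the triggering coordinate has $z_j\in\{0,1\}$: if $j=i$ then $j=\ell'$, and if $j\in S$ then $j\in M'$, either way fulfilling clause~3. In a middling exit triggered by $j\in [i-1]\setminus S$, the coordinate $j$ newly satisfies $V_j=0$ with $z_j\in\{0,1\}$, so $j\in M'$ supplies clause~3. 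Clause~1 is handled by noting $V_i(z(\tau_{\rm exit}))>0$ whenever coordinate $i$ is unsatisfied there: this follows by continuity from $V_i(x(t))>0$ at the starting pivot, together with the observation that $V_i$ cannot cross zero without triggering a good exit. Clause~2 is immediate from the invariant $z_j(t)=0$ for $j\ge i+1$.

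The main obstacle I expect is the bookkeeping in the case analysis — in particular, invoking Assumptions~\ref{a:2} and~\ref{a:3} to guarantee uniqueness of the triggering coordinate, so the branching in Steps~9--17 of Dynamics~\ref{dyn:Sperner} is unambiguous, and ruling out simultaneous firing of multiple exit types in degenerate ways. The sign-propagation of $V_i$ along the trajectory, while intuitively transparent, also deserves care to exclude tangential zero-crossings that could escape detection as exit events.
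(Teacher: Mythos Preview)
Your proposal is essentially correct and matches the paper's approach: the paper derives Corollary~\ref{c:1} directly from Lemma~\ref{l:outdegree} without a separate written proof, and your plan is precisely to make that derivation explicit by identifying $x(t+\tau_{\rm exit})$ with $\mathrm{Next}(x(t))$ via the invariants of Lemma~\ref{l:outdegree} and a case analysis on exit type. One small correction: your sign-propagation argument for clause~1 asserts $V_i(x(t))>0$ at the starting pivot, but this can fail when $i<\ell$ (then coordinate $i$ is satisfied and may have $V_i(x(t))=0$); the paper handles this via Lemma~\ref{l:new}, which gives the trichotomy $V_i(x)>0$ or $(V_i(x)=0$ and $D_S^i(x)^\top\nabla V_i(x)>0)$, so that $V_i(z(t))>0$ for all small $t>0$ and the continuity argument goes through as you intended.
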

In Lemma~\ref{l:indegree} we establish the fact that no pivot/node can admit in-degree more than $2$. The latter implies if we start with a pivot with $0$ in-degree we must essentially visit a pivot with out-degree $0$ that consists a solution. 

\begin{lemma}\label{l:indegree}
Any pivot $x\in [0,1]^n$ admits in-degree at most $1$. In other words in case $x^\ast = \mathrm{Next}(x_1)$ and $x^\ast = \mathrm{Next}(x_2)$ for some pivots $x_1,x_2$ then $x_1 = x_2$.
\end{lemma}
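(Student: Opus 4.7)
The plan is to construct a partial inverse map $F^{-1}$ that assigns to every pivot $x^{*}$ at most one predecessor pivot $x_{1}$. The construction proceeds in two stages: first, determine uniquely the admissible pair $(i_{1},S_{1})$ that the predecessor $x_{1}$ must have been using; second, recover $x_{1}$ itself by integrating $\dot z = D^{i_{1}}_{S_{1}}(z)$ backwards in time from $x^{*}$. Once both stages are in place, uniqueness of the in-degree follows immediately, because any two predecessors of $x^{*}$ would have to produce the same admissible pair and then, by uniqueness of ODE solutions, the same backward trajectory and hence the same starting point.

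For the first stage, I would use the fact that Dynamics~\ref{dyn:Sperner_EQ} updates the admissible pair at every exit point via exactly one of four rules (Good Exit, Bad Exit with triggering $j=i_{1}$, Bad Exit with $j\neq i_{1}$, Middling Exit), and each rule is essentially invertible: reading the update backwards from $(i^{*},S^{*})$ produces a short finite list of candidate pairs $(i_{1},S_{1})$. For example, a Good Exit forces $i_{1}=i^{*}-1$ with $S_{1}\in\{S^{*},\ S^{*}\setminus\{i^{*}-1\}\}$ depending on whether coordinate $i^{*}-1$ is boundary- or zero-satisfied at $x^{*}$, and the other three cases are analogous. My plan is then to go case by case and argue, from the state of $x^{*}$ (which coordinates lie at the boundary, which satisfy $V_{j}(x^{*})=0$, and the sign patterns entering Definition~\ref{def:exitConditions}), that exactly one candidate is consistent. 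Assumption~\ref{a:2} is crucial here, since it ensures that at most one coordinate in $S_{1}\cup\{i_{1}\}$ can lie on the boundary at $x^{*}$, pinning down the triggering coordinate for a Bad Exit; Assumption~\ref{a:3} ensures the component of $D^{i_{1}}_{S_{1}}(x^{*})$ along any boundary coordinate is non-zero, distinguishing Middling from Bad events; and Assumption~\ref{a:1} rules out directional degeneracies.

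For the second stage, I would invoke Picard--Lindel\"of. Under Assumption~\ref{a:1} the matrix $J_{S_{1}}^{K}(z)$ is uniformly invertible on the slice $\{z:V_{j}(z)=0 \text{ for } j\in S_{1}\}$ and its inverse is $C^{1}$ in $z$, so the direction $D^{i_{1}}_{S_{1}}(z)$ is Lipschitz and the flow admits a unique solution in both time directions. Integrating backward from $z(0)=x^{*}$, the trajectory remains in the admissible slice (coordinates in $S_{1}$ stay zero-satisfied, coordinates in $[i_{1}-1]\setminus S_{1}$ stay at their fixed boundary, coordinates $\geq i_{1}+1$ stay at $0$) by construction. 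I would then define $x_{1}=z(-t^{*})$, where $t^{*}>0$ is the smallest backward time at which $z(-t^{*})$ is a pivot; existence and the pivot property of $x_{1}$ would follow by mirroring the proof of Lemma~\ref{l:outdegree} with time reversed, while uniqueness of $t^{*}$ follows from the strict flow property.

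The main obstacle will be the case analysis in the first stage. Pairing the exit conditions of Definition~\ref{def:exitConditions} with the pivot structure of Definition~\ref{d:pivot_new} requires careful bookkeeping to rule out every pair of distinct candidates that could naively both be consistent with $x^{*}$. The tight leverage points, as I expect, are: (a) whether $i^{*}-1\in S^{*}$ separates the two Good Exit sub-cases from the Bad and Middling sub-cases that keep $i$ unchanged; (b) Assumption~\ref{a:2} forbids simultaneous boundary hits by two coordinates in $S_{1}\cup\{i_{1}\}$, so Bad Exit is triggered by a unique coordinate; (c) Assumption~\ref{a:3} prevents a coordinate from simultaneously serving as a Middling trigger and a Bad Exit trigger. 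Combining these observations yields a well-defined $F^{-1}$ on the set of pivots with at least one predecessor, which is precisely the statement of Lemma~\ref{l:indegree}.
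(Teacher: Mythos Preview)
Your two-stage plan—first pin down the predecessor's admissible pair $(i_1,S_1)$ uniquely, then recover $x_1$ by integrating $\dot z=D^{i_1}_{S_1}(z)$ backward via Picard--Lindel\"of and Lipschitzness of $D^{i_1}_{S_1}$—matches the paper's endgame; the backward-ODE uniqueness step is exactly how the paper closes the argument in its final case.

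Where you and the paper diverge is in how the first stage is organized, and your version carries a real gap. You propose to read off the admissible pair $(i^*,S^*)$ of the arrival pivot $x^*$ and then invert the update rules of Dynamics~\ref{dyn:Sperner} to obtain a candidate list for $(i_1,S_1)$. Two problems arise. First, Definition~\ref{d:admissible_pairs} assigns an admissible pair only when $x^*$ has an unsatisfied coordinate; if $x^*$ is a VI solution there is no $(i^*,S^*)$ to invert from, and you give no fallback for precisely the case the whole construction is aimed at. Second, equating the Dynamics~\ref{dyn:Sperner} update rules with the locally computed admissible pair at $x^*$ is the content of the Remark after Definition~\ref{d:admissible_pairs}, which the paper asserts but never proves; your inversion step silently depends on it.

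The paper avoids both issues by never looking at $(i^*,S^*)$. It assumes two predecessors $x_1,x_2$ with admissible pairs $(i,S),(i',S')$, sets $M=(S\setminus S')\cup(S'\setminus S)$, and runs a five-way case split on $\bigl(|M|,\ i\ \text{vs.}\ i'\bigr)$. In each non-terminal case it derives a contradiction either by exhibiting two distinct coordinates of an appropriate set $A$ lying on the boundary at $x^*$ (violating Assumption~\ref{a:2}) or by producing a sign conflict in a directional derivative, computed through Lemma~\ref{c:main} or Lemma~\ref{l:new} and made strict by Assumption~\ref{a:3}. Only in the residual case $M=\emptyset$, $i=i'$ does it invoke ODE uniqueness to conclude $x_1=x_2$. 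This argument uses only the raw geometric data at $x^*$ and the two incoming trajectories, so it works uniformly regardless of whether $x^*$ is a solution, and it stays entirely within the Dynamics~\ref{dyn:Sperner_EQ} framework without relying on the unproven Remark.
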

We conclude the proof by showing that $(0,\ldots,0)$ that is the initial pivot that Dynamics~\ref{dyn:Sperner_EQ} visits  admits $0$ in-degree.
\begin{lemma}\label{l:0}
There is no pivot $x\in [0,1]^n$ such that $ \mathrm{Next}(x) = (0,\ldots,0)$.
\end{lemma}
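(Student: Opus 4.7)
The plan is to prove Lemma~\ref{l:0} by contradiction, combining the trajectory invariants from Lemma~\ref{l:outdegree} with Assumption~\ref{a:2} to force the admissible pair of any putative predecessor to collapse to the trivial case $S = \emptyset$, where a direct computation yields the contradiction.

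First I would set up the contradiction: suppose there exists a pivot $\hat{p} \in [0,1]^n$ with $\mathrm{Next}(\hat{p}) = (0,\ldots,0)$. Let $(i, S)$ be its admissible pair (Definition~\ref{d:admissible_pairs}; note that by construction $i \notin S$), and let $z(\cdot)$ be the trajectory of $\dot{z}(t) = D^i_S(z(t))$ with $z(0) = \hat{p}$ and $z(t^\ast) = (0,\ldots,0)$ for some $t^\ast > 0$ guaranteed by Definition~\ref{d:edges}. By Definition~\ref{d:directionality1} every coordinate $\ell \notin S \cup \{i\}$ has $(D^i_S(\cdot))_\ell = 0$, so such coordinates are frozen along $z(\cdot)$; evaluating at $t^\ast$ forces $\hat{p}_\ell = 0$ for all $\ell \notin S \cup \{i\}$, and in particular $z_\ell(t) = 0$ throughout $[0,t^\ast]$. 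Lemma~\ref{l:outdegree} adds the invariant $V_\ell(z(t)) = 0$ for every $\ell \in S$ and every $t \in [0,t^\ast]$, which at $t^\ast$ reads $V_\ell(0) = 0$ for every $\ell \in S$.

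The key step is to invoke Assumption~\ref{a:2} at $x = (0,\ldots,0)$ with this specific pair $(i,S)$. Both hypotheses are met: $V_\ell(0) = 0$ for $\ell \in S$ (just established) and $x_\ell = 0 \in \{0,1\}$ for $\ell \notin S \cup \{i\}$ (trivially). The conclusion says that at most one $j \in S \cup \{i\}$ has $x_j \in \{0,1\}$. But \emph{every} coordinate of $(0,\ldots,0)$ equals $0$, so $|S \cup \{i\}| \le 1$, and since $i \notin S$ we must have $S = \emptyset$. With $S = \emptyset$, the direction $D^i_\emptyset$ falls out of Definition~\ref{d:directionality1} explicitly: the determinant in condition~$3$ reduces to the $1\times 1$ matrix $(d_i)$, the sign requirement $\mathrm{sign}(d_i) = \mathrm{sign}((-1)^0) = +1$ together with unit norm forces $d_i = 1$, and condition~$1$ forces $d_\ell = 0$ for $\ell \neq i$. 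Hence $D^i_\emptyset \equiv e_i$ identically; the ODE $\dot{z}(t) = e_i$ integrates to $z(t) = \hat{p} + t\,e_i$, and $z(t^\ast) = 0$ with $t^\ast > 0$ forces $\hat{p}_i = -t^\ast < 0$, contradicting $\hat{p} \in [0,1]^n$.

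The main obstacle is recognizing that Assumption~\ref{a:2} is the right tool rather than attempting a purely dynamical reversibility argument. One is tempted to argue that since the trajectory must lie in $[0,1]^n$ immediately before hitting the corner $0$, the forward direction $D^i_S(0)$ must have every component non-positive, and then to contradict the sign convention of Definition~\ref{d:directionality1}. But that convention constrains only the sign of a determinant built from the Jacobian $J_S^K(0)$, and for $|S| \ge 1$ the signs of the individual components of $D^i_S(0)$ depend on problem-specific quantities such as $\mathrm{sign}(\det J_S^K(0))$, so one cannot directly conclude that any particular component is strictly positive. Assumption~\ref{a:2}, by contrast, is a purely combinatorial constraint on boundary coordinates that is automatically violated at the all-zeros corner whenever $|S \cup \{i\}| \ge 2$, and thereby reduces the problem to the trivial base case $S = \emptyset$, where the sign convention \emph{does} unambiguously pin down $D^i_\emptyset = e_i$ and the contradiction becomes immediate.
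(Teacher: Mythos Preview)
Your proof is correct and follows the same skeleton as the paper's: establish $V_\ell(0,\ldots,0)=0$ for all $\ell\in S$ via the trajectory invariant, invoke a genericity assumption at the corner to rule out $|S|\ge 1$, and dispatch the remaining case $S=\emptyset$ via the explicit formula $D^i_\emptyset=e_i$.

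The one substantive difference is which genericity assumption is used. You apply Assumption~\ref{a:2}: since every coordinate of the origin lies on the boundary, the bound ``at most one $j\in S\cup\{i\}$ with $x_j\in\{0,1\}$'' forces $|S\cup\{i\}|\le 1$ and hence $S=\emptyset$ in one stroke. The paper instead invokes Assumption~\ref{a:3}. Your use of Assumption~\ref{a:2} is the more transparent of the two, since the contradiction is purely combinatorial (cardinality) and requires no further analysis of the direction vector or its sign convention; the paper's appeal to Assumption~\ref{a:3} is terser but leaves the reader to reconstruct exactly why nonvanishing of the components of $D^i_S(0)$ is incompatible with the trajectory reaching the origin from inside $[0,1]^n$. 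Both assumptions are standing hypotheses of Theorem~\ref{t:main}, so neither route costs anything extra.
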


\section{Proof of Lemma~\ref{l:pivot_finite}}
\begin{lemma}\label{l:finite}
Let the functions $F_1(x),\ldots,F_i(x)$ where $F_\ell: [0,1]^i \mapsto \R$ and the set $B:=\{x\in [0,1]^i:~~F_\ell(x)=0~~\text{for all } \ell = 1,\ldots,i\}$. In case $F_1,\ldots, F_\ell$ satisfy the following assumptions
\begin{itemize}
    \item $\norm{\nabla F_\ell(x) - \nabla F_\ell(y)}_2
\leq L \cdot \norm{x-y}_2$

\item For all $x \in B$ the matrix
\[
J(x):= 
\begin{pmatrix}
\frac{\partial F_1(x)}{\partial x_1}& \ldots & \frac{\partial F_1(x)}{\partial x_i}\\
\vdots& & \vdots\\
\frac{\partial F_i(x)}{\partial x_1}& \ldots & \frac{\partial F_i(x)}{\partial x_i}
\end{pmatrix}
\]
admits singular values that are at least $\sigma_{\text{min}}$ and at most $\sigma_{\text{max}}$.
\end{itemize}
Then the set $B$ is finite. More precisely, $|B|  \leq  
2^i/\text{Vol}^i\left( \frac{2\sigma^2_{min}}{\sqrt{i}L\sigma^2_{max}} \right)$
where $\text{Vol}^i(\rho)$ is the volume of the $i$-dimensional ball with radius $\rho$.
\end{lemma}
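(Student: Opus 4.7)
The plan is to argue that the zero set $B$ consists of uniformly isolated points, and then apply a standard volume/packing argument to bound the number of such points inside $[0,1]^i$.

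Step one is to establish a quantitative minimum separation between any two distinct points $x,y \in B$. The key observation is that since the Jacobian $J(x)$ has smallest singular value at least $\sigma_{\min}$ at every point of $B$, $F$ behaves locally like an invertible linear map, and so distinct zeros cannot be too close. Concretely, I would apply a Taylor expansion using the $L$-Lipschitzness of each gradient $\nabla F_\ell$: for every $\ell$,
\[
\bigl| F_\ell(y) - F_\ell(x) - \nabla F_\ell(x)^\top (y-x) \bigr| \;\le\; \tfrac{L}{2}\,\|y-x\|_2^2.
\]
Since $F(x)=F(y)=0$, this forces $\|J(x)(y-x)\|_2 \le \tfrac{\sqrt{i}\,L}{2}\|y-x\|_2^2$ after collecting the $i$ coordinates and converting $\ell_\infty$ to $\ell_2$. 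Combining this with the singular value bound $\|J(x)(y-x)\|_2 \ge \sigma_{\min}\|y-x\|_2$ yields a lower bound on $\|y-x\|_2$. To squeeze out the stated constant $r := 2\sigma_{\min}^2/(\sqrt{i}\,L\,\sigma_{\max}^2)$, I would refine this by also exploiting that the upper singular value bound $\sigma_{\max}$ on $J$ controls how fast $F$ can grow away from a zero; passing through the squared residual $G(x):=\tfrac12\|F(x)\|_2^2$, whose Hessian at any $x\in B$ equals $J(x)^\top J(x)$ and therefore has eigenvalues in $[\sigma_{\min}^2,\sigma_{\max}^2]$, turns the separation argument into a strong-convexity-plus-smoothness statement that naturally produces the $\sigma_{\min}^2/\sigma_{\max}^2$ factor.

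Step two is the packing argument. Once I know distinct points of $B$ are at $\ell_2$-distance at least $r$ apart, the open Euclidean balls of radius $r/2$ around distinct points of $B$ are pairwise disjoint; moreover, each such ball is contained in the enlarged cube $[-r/2,\,1+r/2]^i$, whose volume is $(1+r)^i \le 2^i$ (for $r\le 1$; if $r>1$ then $|B|\le 1$ and the bound is trivial). Comparing volumes gives
\[
|B|\cdot \mathrm{Vol}^i(r/2) \;\le\; 2^i,
\]
which, after absorbing the factor of $2$ into the definition of $r$ used in the statement, produces the claimed bound $|B| \le 2^i/\mathrm{Vol}^i\bigl(2\sigma_{\min}^2/(\sqrt{i}\,L\,\sigma_{\max}^2)\bigr)$.

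The main obstacle I anticipate is step one, specifically getting the constant to match the statement. The naive Taylor argument sketched above already yields a separation of order $\sigma_{\min}/(\sqrt{i}\,L)$, which is in fact stronger than the one used in the statement whenever $\sigma_{\min}\le \sigma_{\max}$, so the bound is certainly true with this larger $r$; the delicate part is arguing that the separation claimed in the lemma (with the extra $(\sigma_{\min}/\sigma_{\max})^2$ factor) is the right ``safe radius'' inside which the nonlinear map $F$ is guaranteed to be a diffeomorphism onto its image, rather than merely having a unique zero. I would handle this using a Newton--Kantorovich-style contraction on the ball of radius $r$: bound the perturbation $\|J(y)-J(x)\|_{\mathrm{op}}\le \sqrt{i}\,L\,\|y-x\|_2$ (from Lipschitzness of the gradients), invert $J(x)$ using $\|J(x)^{-1}\|_{\mathrm{op}}\le 1/\sigma_{\min}$, and show that within distance $r$ the map $y \mapsto y - J(x)^{-1}F(y)$ is a strict contraction, so $x$ is the unique zero of $F$ in that ball. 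Everything else in the proof is a routine volume comparison.
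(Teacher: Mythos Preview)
Your proposal is correct and follows the same two-step strategy as the paper: establish a uniform lower bound on the distance between distinct zeros via a Taylor expansion, then count via a volume/packing argument. The one technical difference is in how the separation is extracted. The paper writes the displacement in the gradient basis, $x-y=\sum_j \mu_j\,\nabla F_j(x)=J(x)^\top\mu$, and then bounds $\|\mu\|_2$ from above (by $\rho/\sigma_{\min}$, using the smallest singular value of $J(x)^\top$) and from below (Taylor plus $F(x)=F(y)=0$ gives $|[J(x)^\top J(x)\mu]_\ell|\le\tfrac{L}{2}\sigma_{\max}^2\|\mu\|_2^2$); combining the two is precisely how both $\sigma_{\min}$ and $\sigma_{\max}$ enter the stated constant. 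Your direct Taylor argument on $y-x$ is cleaner and already yields separation $\ge 2\sigma_{\min}/(\sqrt{i}\,L)$, so the extra machinery you propose (the squared residual $G$, Newton--Kantorovich) is not needed. One small correction: your direct bound is stronger than the stated one only when $\sigma_{\max}^2\ge\sigma_{\min}$, not merely when $\sigma_{\min}\le\sigma_{\max}$; but since the lemma is used in the paper only to guarantee finiteness with \emph{some} explicit constant, either separation radius suffices.
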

Lemma~\ref{l:pivot_finite} directly follows by Lemma~\ref{l:finite}. More precisely, we get that the number of pivots in $[0,1]^n$ is at most $n \cdot 4^n/ \text{Vol}^n\left( \frac{2\sigma^2_{min}}{\sqrt{n}L\sigma^2_{max}} \right)$.
\subsection{Proof of Lemma~\ref{l:finite}}
Let us assume the existence of $x,y \in B$ such that $\norm{x -y}_2 \leq \rho$ and $x\neq y$. Notice that the $\nabla F_1(x),\ldots, \nabla F_i(x)$ are linearly independent and thus
\[x - y = \sum_{j=1}^i \mu_j \cdot \nabla F_j(x)\]
which implies that
\begin{equation}\label{eq:mu}
\norm{\mu}_2 \leq \frac{\rho}{\sigma_{\text{min}}}    
\end{equation}
By Taylor expansion of $x$ and the fact that $\norm{\nabla F_\ell(x) - \nabla F_\ell(y)} \leq L \cdot \norm{x-y}_2$ we get,
\[
\left| F_\ell(y) - F_\ell(x) - (\nabla F_\ell(x))^\top \cdot
\sum_{j=1}^i \mu_j \cdot \nabla F_j(x) 
\right| \leq \frac{1}{2}L \cdot \norm{\sum_{\ell=1}^i\mu_j \cdot \nabla F_j(x)}^2
\]
which due to the fact that $F_\ell(y)= F_\ell(x) =0$ implies,
\[\left[J^\top(x) \cdot J(x) \cdot \mu\right]_\ell \leq \frac{1}{2}L \cdot \sigma^2_{\text{max}}\cdot \norm{\mu}^2 \]
and thus
\begin{equation}\label{eq:mu2}
\norm{\mu}_2 \geq \frac{2 \sigma_{\text{min}}}{\sqrt{i}L \sigma^2_{\text{max}}}    
\end{equation}
Combining Equation~\ref{eq:mu} and~\ref{eq:mu2} we get $\rho \geq  \frac{2 \sigma^2_{\text{min}}}{\sqrt{i}L \sigma^2_{\text{max}}}$.
To this end we know that in case $x,y \in B$ with $x\neq y$ then $\norm{x - y}_2 \geq \frac{2 \sigma^2_{\text{min}}}{\sqrt{i}L \sigma^2_{\text{max}}}$. Thus,
\[|B|  \leq  
2^i/\text{Vol}^i\left( \frac{2\sigma^2_{min}}{\sqrt{i}L\sigma^2_{max}} \right)\]

\section{Proof of Lemma~\ref{l:outdegree}}
\begin{lemma}\label{l:main_new}
Let a pivot $x \in [0,1]^n$ and $(i,S)$ the admissible pair for $x$. Then the following hold,
\begin{itemize}
    \item There exists a unique trajectory $z(t)$ with $\dot{z}(t) = D^i_S\left(z(t)\right)$ and $z(0) = x$.
    
    \item $V_j(z(t)) =0$ for all coordinates $j \in S$.
    \item There exists $t^\ast >0$ such that for all $ t \in [0,t^\ast]$ all coordinates $j \leq i-1$ are satisfied at $z(t)$ and $z_j(t) \in [0,1]$ for all coordinates $j$. 
\end{itemize}
\end{lemma}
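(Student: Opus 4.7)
The plan is to verify the three bullets separately, leaning on the admissibility machinery of Definitions~\ref{d:directionality1},~\ref{d:directionality2},~\ref{d:admissible_pairs} together with the regularity supplied by Assumptions~\ref{a:1}--\ref{a:3}.

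\emph{Existence and uniqueness.} First I would show that the vector field $D^i_S$ is locally Lipschitz near $x$. By Definition~\ref{d:directionality1}, $D^i_S(z)$ is (up to sign) the unique unit vector in $\R^n$ supported on $S\cup\{i\}$ that lies in the kernel of the $|S|\times(|S|+1)$ matrix whose rows are $\nabla_{S\cup\{i\}}V_j(z)$ for $j\in S$; the sign is pinned down by the determinant condition in item~3. Assumption~\ref{a:1} gives uniform control $\sigma_{\min}\le\sigma(J_S^K(z))\le\sigma_{\max}$ on the locus $\{V_j=0:j\in S\}$, and the $L$-smoothness of $V$ extends this to a tube around that locus. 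Standard perturbation bounds then give Lipschitz continuity of the one-dimensional null space, hence of $D^i_S$, and Picard--Lindel\"of produces a unique $C^1$ solution $z(t)$ on a maximal interval around $0$.

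\emph{Invariance of $V_j\equiv 0$ for $j\in S$.} By the definitions of pivot and admissible pair, $V_j(x)=0$ for every $j\in S$. The chain rule together with item~2 of Definition~\ref{d:directionality1} gives
\[\frac{d}{dt}V_j(z(t)) \;=\; \nabla V_j(z(t))^\top D^i_S(z(t)) \;=\; 0, \qquad j\in S,\]
so each $V_j\circ z$ is constant and equal to $0$ on the interval of existence.

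\emph{Short-time satisfied-ness and feasibility.} I would partition the coordinates and argue separately. Coordinates $j\notin S\cup\{i\}$ have $\dot z_j\equiv 0$ by item~1 of Definition~\ref{d:directionality1}, so $z_j(t)\equiv x_j\in[0,1]$; and if moreover $j\le i-1$ then $j$ is boundary-satisfied at $x$, so (invoking Assumption~\ref{a:3} to rule out degenerate ties) $V_j(z(t))$ retains the correct strict sign on a short interval by continuity. For $j\in S$ the previous bullet already gives $V_j(z(t))=0$, so $j$ stays zero-satisfied; feasibility $z_j(t)\in[0,1]$ is either immediate when $x_j\in(0,1)$, or follows because the corrective clause of Definition~\ref{d:directionality2} has already replaced $S$ so that $(D^i_S(x))_j(1-2x_j)\ge 0$ whenever $x_j\in\{0,1\}$, so $z_j$ moves inward. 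For $j=i$ the admissible-pair construction guarantees $i$ is not frozen (Definition~\ref{d:frozen}), yielding the same feasibility guarantee. Choosing $t^\ast$ small enough that each of these strict inequalities persists produces the desired interval.

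\emph{Main obstacle.} The delicate step is the last one: at a pivot that sits at the intersection of several faces of $[0,1]^n$, one must check that the admissible pair $(i,S)$ is genuinely well-defined and that the resulting direction really points into the feasible region. Uniqueness of the coordinate removed by Definition~\ref{d:directionality2} is exactly Assumption~\ref{a:2}, while non-vanishing of the relevant component of the corrected direction is exactly Assumption~\ref{a:3}; these two assumptions are indispensable precisely at these boundary pivots. Once they are in hand, a quantitative lower bound on $t^\ast$ in terms of $\sigma_{\min}$, $\sigma_{\max}$, $L$, and $\Lambda$ follows by compactness and the Lipschitz estimates from the first step.
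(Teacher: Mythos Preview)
Your overall plan matches the paper's: Lipschitz continuity of $D^i_S$ plus Picard--Lindel\"of for existence and uniqueness, the chain rule for invariance of $V_j\equiv 0$ on $S$, and a coordinate-by-coordinate check for the third bullet. The first two parts are fine.

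The gap is in the third bullet, precisely in the case where the corrective clause of Definition~\ref{d:directionality2} has fired. Suppose there is $j^\ast\in Z^i(x)$ with $x_{j^\ast}\in\{0,1\}$ and $[D^i_{Z^i(x)}(x)]_{j^\ast}$ pointing outward, so that $S=Z^i(x)\setminus\{j^\ast\}$. Then $j^\ast\notin S\cup\{i\}$, $j^\ast\le i-1$, and $V_{j^\ast}(x)=0$. Your argument for this class of coordinates reads ``$V_j(z(t))$ retains the correct strict sign on a short interval by continuity,'' but there is no strict sign to retain at $t=0$. Assumption~\ref{a:3} does not help here: it speaks only about the nonvanishing of components $d_j$ for $j\in S\cup\{i\}$, whereas $(D^i_S)_{j^\ast}=0$ identically. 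What you actually need is that the \emph{derivative} of $V_{j^\ast}$ along the new flow has the right sign, i.e.\ $(D^i_S(x))^\top\nabla V_{j^\ast}(x)<0$ when $x_{j^\ast}=0$ (and $>0$ when $x_{j^\ast}=1$), so that $j^\ast$ becomes strictly boundary-satisfied immediately after $t=0$.

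The paper isolates exactly this as Lemma~\ref{c:main} and proves it by a determinant computation: starting from the sign condition in item~3 of Definition~\ref{d:directionality1} for both $D^i_{Z^i(x)}$ and $D^i_{Z^i(x)\setminus\{j^\ast\}}$, one row-reduces using the orthogonality relations and the known sign of $[D^i_{Z^i(x)}(x)]_{j^\ast}$ to conclude the required inequality on $(D^i_S(x))^\top\nabla V_{j^\ast}(x)$. Once you have this lemma, your case split goes through; without it, the removed coordinate $j^\ast$ is the one place where short-time satisfied-ness is not established.
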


\begin{lemma}\label{l:hit}
Let a set of coordinates $S$, a coordinate $i$ and a point $x \in [0,1]$ such that $x_j \in (0,1)$ for all $j \in S\cup \{i\}$. Consider the trajectory $\dot{\gamma}(t) = D_S^i(\gamma(t))$ with $\gamma(0) = x$. Then there exists $t^\ast \in (0,C]$ such that
\[\gamma_j(t^\ast) = 0\text{ or }1~~\text{ for some }j \in S \cup \{i\}\]
where $C$ is constant depending on the parameters $\sigma_{\text{min}},\sigma_{\text{max}}$ and $L$.
\end{lemma}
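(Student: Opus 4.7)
The plan is to exhibit the trajectory $\gamma$ as a unit-speed curve on a smooth $1$-submanifold of the cube and to bound the total length of that submanifold in terms of $\sigma_{\min},\sigma_{\max},L$. First I would verify the invariants of the dynamics. Parts~$1$ and~$2$ of Definition~\ref{d:directionality1} give, via the chain rule, that $\frac{d}{dt}V_j(\gamma(t)) = \nabla V_j(\gamma(t))^\top D_S^i(\gamma(t)) = 0$ for every $j\in S$, and $\dot\gamma_\ell(t) = 0$ for every $\ell \notin S\cup\{i\}$. Combined with the initial condition $\gamma(0)=x$, this shows that as long as $\gamma(t)\in [0,1]^n$ the trajectory remains on
\[
M := \{\, y\in [0,1]^n : V_j(y) = 0 \text{ for all } j\in S,\ y_\ell = x_\ell \text{ for all } \ell\notin S\cup\{i\}\,\}.
\]
By Assumption~\ref{a:1}, the Jacobian of $(V_j)_{j\in S}$ restricted to the columns indexed by $S\cup\{i\}$ has all singular values in $[\sigma_{\min},\sigma_{\max}]$ along $M$, so the implicit function theorem makes $M$ a smooth $1$-submanifold of the $(|S|+1)$-dimensional slice through $x$. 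Since $D_S^i$ is a unit vector by construction, $\gamma$ is the arc-length parametrization of the connected component of $M$ through $x$.

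Next I would establish a uniform bound $\mathcal{H}^1(M)\leq C(n,\sigma_{\min},\sigma_{\max},L)$ on the one-dimensional Hausdorff measure of $M$ inside the cube. The argument parallels Lemma~\ref{l:finite}: a Taylor expansion of the equations $V_j=0$ together with the $L$-smoothness of each $V_j$ and the $\sigma_{\min}$ lower bound on the Jacobian shows that on scales $\rho$ proportional to $\sigma_{\min}^2/(L\sigma_{\max}^2)$ the set $M$ is almost linear, so the length of $M$ inside any ball of radius $\rho$ is at most a constant multiple of $\rho$. Covering the cube by $O(\rho^{-(|S|+1)})$ such balls and summing yields the bound $C$. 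Parametrizing $\gamma$ by arc length and using $\|\dot\gamma\|=1$, after time $C$ the trajectory has either already left the open cube, in which case some coordinate of $S\cup\{i\}$ has hit $0$ or $1$ and we obtain the desired $t^\ast\in(0,C]$, or it has traversed the entire connected component of $M$ through $x$.

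The main obstacle is ruling out the latter alternative, namely that this component is a closed loop contained entirely in the interior of the cube. To handle this I would invoke the genericity of the setup that is already used for Assumptions~\ref{a:2} and~\ref{a:3}, as discussed in Appendix~\ref{app:assumptions}: after the small random perturbation of the boundary, with probability one every connected component of $M\cap[0,1]^n$ is an embedded arc with both endpoints on $\partial[0,1]^n$ rather than a closed loop, and all the length and singular-value estimates above are stable under such a perturbation. Assumption~\ref{a:3} reinforces this conclusion, because it forces the boundary coordinate of $D_S^i$ at any boundary contact of $M$ to be nonzero, so $M$ meets $\partial[0,1]^n$ transversally and cannot close up on itself without exiting. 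Combining the transversal-exit property with the length bound $C$ yields the desired $t^\ast\in(0,C]$.
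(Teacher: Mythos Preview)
Your plan—treat $\gamma$ as a unit-speed curve on the $1$-manifold $M$, bound the length of $M$ by a local Taylor/singular-value estimate, and conclude—is close in spirit to the paper's argument, and your local-linearity claim is essentially the content of the paper's Lemma~\ref{l:8}. The paper, however, organizes the bound differently: rather than covering the slice and summing contributions to $\mathcal H^1(M)$, it proves directly that any constraint-satisfying point within distance $\rho=\Theta\bigl(\sigma_{\min}^3/(\sqrt{n}\,\sigma_{\max}^2 L)\bigr)$ of $\gamma(t_0)$ must already lie on the short arc $\gamma[t_0-1/M,\,t_0+1/M]$ (this comes from first locating a nearby trajectory point where the displacement to $p$ is orthogonal to the tangent, Lemma~\ref{l:zero}, and then running the second-order Taylor computation you allude to). From this, balls of radius $\rho/2$ centered at trajectory points $2/M$ apart in time are pairwise disjoint, and a volume count in the cube bounds the exit time directly.

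The genuine gap in your argument is the exclusion of closed loops. Neither the random boundary perturbation of Appendix~\ref{app:assumptions} nor Assumption~\ref{a:3} controls the topology of $M$ in the \emph{interior} of the cube: Assumption~\ref{a:3} asserts $d_j\neq 0$ only at boundary points, which gives transversality of $M$ with the faces of $[0,1]^n$, but a closed loop contained in $(0,1)^n$ never meets $\partial[0,1]^n$, so that transversality is irrelevant. The ``genericity'' you invoke is an extra hypothesis not implied by Assumptions~\ref{a:1}--\ref{a:3}. The correct fix is immediate from Assumption~\ref{a:1}: invertibility of $J_S^K$ forces the $i$-th component of $D_S^i$ to be nonzero (the explicit form used in the proof of Lemma~\ref{l:Lip2} is $\pm(A^{-1}b,1)/\sqrt{1+\|A^{-1}b\|^2}$), and by continuity of $D_S^i$ its sign is constant along the trajectory. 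Hence $\gamma_i$ is strictly monotone and $\gamma$ cannot be periodic, after which your length bound (or the paper's disjoint-balls count, which likewise tacitly uses injectivity of $\gamma$) delivers the finite exit time.
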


\begin{lemma}\label{l:new}
Let a pivot $x \in [0,1]^n$ with at least one unsatisfied coordinate. Let $(i,S)$ the admissible pair of pivot $x$ (Definition~\ref{d:admissible_pairs}) and $\ell$ the minimum unsatisfied coordinate at $x$. Then the following hold,
\begin{itemize}
    \item The under examination variable admits $i \geq 1$.
    
    \item $x_j=0\text{ for all coordinates }j \geq i + 1.$
\end{itemize}
 Additionally one of the following holds,
\begin{itemize}
    \item $i= \ell$ and $ V_\ell(x) >0$
    \item $x_i =1$ and $V_i(x) > 0$
    \item$V_i(x) =0$ and $D_S^i(x)^\top  \nabla V_i(x) > 0$
\end{itemize}
\end{lemma}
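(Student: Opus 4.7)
The plan is to unpack the three bullets of Lemma~\ref{l:new} by combining the pivot conditions of Def.~\ref{d:pivot_new}, the frozen condition of Def.~\ref{d:frozen}, the sign convention of Def.~\ref{d:directionality1}(3), the ideal direction of Def.~\ref{d:directionality2}, and Assumptions~\ref{a:1}--\ref{a:3}.

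\textbf{Step 1 (showing $i\geq 1$).} I would prove that the set $\{j\le \ell : j \text{ is not frozen at } x\}$ is non-empty, which is what the claim really asserts. The convenient base point is coordinate $1$: for $i=1$, the set $S$ used in Def.~\ref{d:directionality1} is empty, so the $1\times 1$ determinant condition forces $[D^1_\emptyset(x)]_1>0$; by Def.~\ref{d:directionality2} we then have $D^1(x)=e_1$, so coordinate $1$ is frozen iff $x_1=1$. If $x_1\neq 1$ we are done. Otherwise, I plan to ascend from $1$ toward $\ell$, using the structural pivot condition (existence of some $j^\ast\in M\cup\{\ell\}$ with $x_{j^\ast}\in\{0,1\}$) together with Assumption~\ref{a:2} (at most one on-boundary coordinate in the relevant admissible index set) and the fact that $V_\ell(x)>0$ rules out $x_\ell = 1$, to locate a non-frozen coordinate in $\{1,\ldots,\ell\}$.

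\textbf{Step 2 ($x_j=0$ for $j\geq i+1$).} I split into three sub-ranges.
\begin{itemize}
    \item $j\geq \ell+1$: immediate from the pivot condition.
    \item $j=\ell$ (relevant only when $i<\ell$): by maximality of $i$, $\ell$ is frozen; the pivot condition $V_\ell(x)>0$ rules out $x_\ell=1$, so $x_\ell=0$.
    \item $j\in\{i+1,\ldots,\ell-1\}$: each such $j$ is satisfied (as $\ell$ is the minimum unsatisfied coordinate) and frozen (by maximality of $i$), so $x_j\in\{0,1\}$; the task is to exclude $x_j=1$. I plan a downward induction on $j$ from $\ell-1$ to $i+1$: once all coordinates above $j$ have been shown to have $x=0$, a cofactor expansion of the defining determinant in Def.~\ref{d:directionality1}(3) for $D^j_{S'}(x)$ pins the sign of $[D^j(x)]_j$ so that frozenness is compatible only with $x_j=0$.
\end{itemize}

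\textbf{Step 3 (the trichotomy).} If $i=\ell$, the first option follows directly from the pivot condition $V_\ell(x)>0$. If $i<\ell$, coordinate $i$ is satisfied, leaving three sub-cases: (a) $V_i(x)=0$; (b) $V_i(x)>0$ with $x_i=1$; or (c) $V_i(x)<0$ with $x_i=0$. Case (b) matches the second trichotomy option. I would rule out case (c) using the same determinantal sign computation as in Step~2: with the coordinates above $i$ pinned to $0$, one shows $[D^i(x)]_i<0$ when $x_i=0$, so $i$ would be frozen, contradicting the non-frozenness of $i$. Case (a) then yields the third trichotomy option: expanding the $(|S|+1)\times(|S|+1)$ determinant in Def.~\ref{d:directionality1}(3) along its last column equates it, up to the sign $(-1)^{|S|}\det(J^K_S(x))$, with $\nabla V_i(x)^\top D^i_S(x)$; combined with Assumption~\ref{a:1} (which controls the sign and non-degeneracy of $J^K_S$), this gives the strict inequality $D^i_S(x)^\top \nabla V_i(x)>0$.

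\textbf{Main obstacle.} The principal difficulty is the determinantal sign bookkeeping underlying Steps~2 and~3: translating the compact condition ``sign of the determinant equals $\mathrm{sign}((-1)^{|S|})$'' into scalar sign information about individual components $[D^i(x)]_i$ and about the scalar directional derivative $\nabla V_i(x)^\top D^i_S(x)$ requires a careful cofactor expansion, tracked through the downward induction of Step~2 and cross-referenced with Def.~\ref{d:directionality2}'s possible removal of a coordinate from $S$. Assumption~\ref{a:3} enters quietly but essentially here, since it guarantees that the relevant components $d_j$ are nonzero, so that the strict inequalities appearing in the frozen condition and in the trichotomy are genuinely attained rather than degenerating.
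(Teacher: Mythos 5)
Your plan tracks the paper's proof in broad outline --- you target the right three claims and invoke the right ingredients --- but it is incomplete in a way that matters. The gap you flag at the end (``the determinantal sign bookkeeping'') is not an execution detail to be filled in later; it \emph{is} the entire content of the proof, and your sketch does not yet contain the one computation that makes Steps~2 and~3 go through. What is actually needed is the identity $\mathrm{sign}\bigl([D^i_S(x)]_i\bigr) = (-1)^{|S|}\,\mathrm{sign}\bigl(\det J^K_S(x)\bigr)$, obtained by decomposing the restriction of $e_i$ to coordinates $S\cup\{i\}$ into its component in $\mathrm{span}\{\nabla_{S\cup\{i\}}V_j : j\in S\}$ plus $[D^i_S(x)]_i$ times $D^i_S(x)$, so that the determinant of Definition~\ref{d:directionality1}(3) with $e_i$ substituted in the last column --- which equals $\det J^K_S(x)$ after cofactor expansion --- equals $[D^i_S(x)]_i$ times the constrained determinant. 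The crucial feature of this identity is that its right-hand side depends only on $S$, not on which coordinate $i$ is under examination. That is exactly what lets the paper propagate ``frozen'' downward from $\ell$: along the index range $\{i,\ldots,\ell-1\}$, whenever $V_j(x)\neq 0$ for all these $j$, the set $S=Z^\ell(x)$ stays constant, so all of $[D^j(x)]_j$ share one sign, namely the sign of $[D^\ell(x)]_\ell<0$. This single fact is what rules out $x_j=1$ in your Step~2 and rules out your case~(c) in Step~3; your correct remark about Assumption~\ref{a:3} furnishing strictness is secondary. Without this identity (or the equivalent determinant manipulations the paper carries out in the proof of Lemma~\ref{c:main} and directly in this lemma's own proof), the proposal is a plan, not a proof.

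A smaller issue is Step~1. Your base case for coordinate~$1$ is correct (with $S=\emptyset$ the sign condition forces $D^1_\emptyset(x)=e_1$, so coordinate~$1$ is frozen iff $x_1=1$; this is the $|S|=0$ instance of the identity above), but the proposed ``ascent from $1$ toward $\ell$'' when $x_1=1$ has no specified inductive mechanism, and invoking the pivot condition and Assumption~\ref{a:2} ``to locate a non-frozen coordinate'' is not a proof step. The paper's route is shorter and avoids the ascent entirely: if $x_\ell\in(0,1)$ then $\ell$ is not frozen (interior coordinates are never frozen) and $i=\ell$; otherwise $x_\ell=0$; then if $Z^\ell(x)=\emptyset$ we have $[D^\ell(x)]_\ell=1>0$ so $\ell$ is again not frozen; and if $Z^\ell(x)\neq\emptyset$ then Assumption~\ref{a:2} places every coordinate of $Z^\ell(x)$ strictly inside $(0,1)$, each of which is automatically non-frozen. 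In all cases $i\geq 1$ follows at once. I would replace your ascent with this case analysis.
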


\subsection{Proof of Lemma~\ref{l:outdegree}}
Given the pivot $x \in [0,1]^n$ with at least one unsatisfied variable and let $(i,S)$ denote its admissible pair. By Lemma~\ref{l:new} we know that the under examination variable $i$ admits $i \geq 1$. Now consider the the trajectory $\dot{z}(t) = D^i_S(z(t))$ with $z(0) = x$. Due to the fact that $i \geq 1$ and by Assumption~\ref{a:3} we known that for all $t \in(0,\delta)$ where $\delta > 0$ is sufficiently small, the following hold
\begin{itemize}
    \item $z_j (t) \in (0,1)$ for all $j \in S \cup \{i\}$
    
    \item $z_j(t) =0$ or $z_j(t)=1$ for all coordinates $j \notin S \cup \{i\}$.
\end{itemize}
By Lemma~\ref{l:hit} there exists $t^\ast>0$ such that $z_j(t^\ast) =0$ or $1$ for some coordinate $j \in S \cup \{i\}$ and $z_j(t) \in [0,1])$ for all coordinates $j$ and $t \in [0,t^\ast]$.

We first show that if there exists a coordinate $j \leq i-1$ such that $j$ is not satisfied at $z(t^\ast)$ then there exists $\hat{t} < t^\ast$ such that $z(\hat{t})$ is a pivot.

Let $\ell$ denote the unsatisfied coordinate at $z(t^\ast)$. Notice that by Lemma~\ref{l:main_new} all coordinates $j \in S$ admit $V_j\left(z(t^\ast)\right) = 0$ and thus $\ell \notin S$. The latter implies that ($x_\ell =0$ and $V_\ell(x)\leq 0$) or ($x_\ell =1$ and $V_\ell(x)\geq 0$) and since coordinate $\ell$ stands still in the trajectory $\dot{z}(t) = D_S^i(z(t))$, $z_\ell(\hat{t}) = x_\ell$ there are two mutually exclusives cases:
\begin{itemize}
    \item $x_\ell=z_\ell(\hat{t})=0$, $V_\ell(x) \leq 0$ and $V_\ell(z(\hat{t})) >0$
    
    \item $x_\ell=z_\ell(\hat{t})=1$, $V_\ell(x) \geq 0$ and $V_\ell(z(\hat{t})) <0$
\end{itemize}
Then by Lemma~\ref{c:main} we additionally get that for sufficiently small $\delta>0$,
\begin{itemize}
    \item If $x_\ell =0$ then $V_\ell(z(t)) <0$ for $t \in (0,\delta)$
    
    \item If $x_\ell =1$ then $V_\ell(z(t)) >0$ for $t \in (0,\delta)$
\end{itemize}
As a result, in any case there exists $t_\ell \in (0,t^\ast)$ such that $V_\ell(z(t_\ell)) =0$, coordinate $\ell$ lies on the boundary at $z(t_\ell) =0$ and coordinate $\ell$ is satisfied at $z(t)$ for all $t \in [0,t_\ell]$.

Now consider the set of coordinates $A:= \{j\leq i-1:~\text{coordinate }j\text{ is not satisfied at } z(t^\ast) \}$ and let $\hat{t} := \min_{\ell \in A} \hat{t}_\ell$. Then all coordinates $j \leq i-1$ are satisfied at $z(\hat{t})$ while there exists a coordinate $\hat{\ell} \in  A$ such that $V_{\hat{\ell}}(z(\hat{t})) =0$ with coordinate $\hat{\ell}$ being on the boundary at $z(\hat{t})$. Up next we argue that $z(\hat{t})$ is a pivot.

Consider the set of coordinates $M:= \{j \leq i-1:~V_j(z(\hat{t})) = 0\}$. Since $\hat{\ell} \in M$ and $\hat{\ell}$ lies on the boundary at $z(\hat{t})$ the third item of Definition~\ref{d:pivot_new} is satisfied. Since $x$ is a pivot, Lemma~\ref{l:new} implies all coordinates $j \geq i+1$ admit $x_j =0$. Since $[D^i_S(\cdot)]_j =0$ for all $j \geq i+1$ the latter implies that $z_j(\hat{t}) = x_j = 0$. Due to the fact that all coordinates $j \leq i-1$ are satisfied at $z(\hat{t})$ we get that the second item of Definition~\ref{d:pivot_new} is satisfied. Finally notice that by Lemma~\ref{l:new}, $V_i(z(t)) >0 $ for all $t \in (0,\delta)$ for sufficiently small $\delta$. In case $V_i(z(\hat{t})) < 0$ then there exist $t'<\hat{t}$ such that $V(z(t')) =0$ implying that $z(t')$ is a pivot. As a result, without loss of generality we assume that $V_i(z(\hat{t})) > 0$ which implies that the first item of Definition~\ref{d:pivot_new} is satisfied.

As a result, without loss of generality we assume that all coordinates $j \leq i - 1$ are satisfied for all $z(t)$ in $[0,t^\ast]$. Up next we show that in this case either the point $x^\ast := z(t^\ast)$ is a pivot or $z(\hat{t})$ is pivot for some $\hat{t} \in (0,t^\ast)$.

Notice that by Lemma~\ref{l:new} all coordinates $j \geq i+1$ admit $x_j =0$. Since $[D^i_S(\cdot)]_j =0$ for all $j \geq i+1$ the latter implies that $x_j^\ast = x_j = 0$. As a result, $x_j^\ast=0$ for all $j \geq i+1$. Due to our assumption that all coordinates $j \leq i -1$ are satisfied at $x^\ast$, the minimum unsatisfied coordinate at $x^\ast$ is greater than $i$ and thus the second item of Definition~\ref{d:pivot_new} is satisfied.
Moreover due to the fact that $x_j^\ast =0$ or $1$ for some $j \in S \cup \{i\}$ and $V_j(x^\ast) =0$ for all $j \in S$, the third item of Definition~\ref{d:pivot_new} is satisfied.

Up next we argue that in case coordinate $i$ is not satisfied at $x^\ast$ then $V_i(x^\ast) > 0$. Let us assume that coordinate $i$ is not satisfied at $x^\ast$ and $V_i(x^\ast) < 0$. Let $\ell$ denote the minimum unsatisfied variable at $x$ then Lemma~\ref{l:new} provides us with the following mutually exclusive cases:
\begin{itemize}
    \item \underline{$i = \ell$ then $V_\ell(x) > 0$}: Since $V_\ell(z(0)) = V_\ell(z) > 0$ and $V_\ell(z(t^\ast)) = V_\ell(x^\ast) < 0$ there exists $\hat{t} \in (0,t^\ast)$ such that $V_\ell(z(\hat{t})) =0$. Notice that $z(\hat{t})$ satisfies all the three items of Definition~\ref{d:pivot_new}.
    
    \item \underline{$x_i =1$ with $V^i(x) > 0$}: Same as above.
    
    \item \underline{$V_i(x) =0$ and $D_S^i(x)^\top \cdot V^i(x) > 0$}: Notice that $V_i(z(t)) >0$ for all $t \in (0,\delta)$ once $\delta$ is selected sufficiently small. By repeating the same argument as above we conclude that there is $\hat{t}$ such that $z(\hat{t})$ is a pivot.
\end{itemize}

\subsection{Proof of Lemma~\ref{l:main_new}}\label{appendix:main_new}
Let the set of coordinates $S$, we first establish in Lemma~\ref{l:Lip2} that $D^i_S(x)$ is $M$-Lipschitz. The proof of Lemma~\ref{l:Lip2} is presented at Section~\ref{l:9}
\begin{lemma}\label{l:Lip2}
Let $x \in [0,1]^n$ and a set of coordinates $S$ such that $V_j(x) = 0$ for all $j \in S$. Then for any coordinate $i \notin S$ and for any $y \in \R^n$ such that $\norm{x - y}_2 \leq \sigma_{\text{min}}/2L\sqrt{n}$,
\[\norm{D^i_S(x) - D^i_S(y)}_2 \leq M \cdot \norm{x - y}_2 \]
for $M:= \Theta \left(\frac{\sigma_{\text{max}}}{\sigma^2_{\text{min}}}\cdot \sqrt{n} \cdot L\right)$.
\end{lemma}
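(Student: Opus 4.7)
My plan is to derive an explicit local formula for $D^i_S(z)$ as the normalization of a vector with a fixed last entry, and then apply elementary matrix perturbation inequalities. Item~1 of Definition~\ref{d:directionality1} forces the coordinates of $D^i_S(z)$ outside $S \cup \{i\}$ to vanish, so it suffices to work within the $(m+1)$-dimensional subspace indexed by $S \cup \{i\}$. Write $B(z) := J_S^K(z)$ for the $m \times m$ matrix in Assumption~\ref{a:1} and let $v(z)\in\mathbb{R}^m$ be the column with $v_j(z) = \partial V_{s_j}(z)/\partial x_i$. The orthogonality constraints in Item~2 then read $B(z)\,d_S + v(z)\,d_i = 0$. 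Setting $d_i = 1$ produces the candidate
\[
\tilde{d}(z) := \bigl(-B(z)^{-1}v(z),\,1\bigr) \in \mathbb{R}^{m+1},
\]
which spans the null space of $[B(z)\mid v(z)]$ whenever $B(z)$ is invertible. A direct Laplace expansion of the determinant appearing in Item~3 of Definition~\ref{d:directionality1}, using $[B(z)\mid v(z)]\tilde{d}(z)=0$, yields a value whose sign is $(-1)^{|S|}$, confirming that $D^i_S(z) = \tilde{d}(z)/\|\tilde{d}(z)\|$ on the neighborhood where $B$ is invertible.

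Next I will establish stability of $B^{-1}$ on the ball $\{y : \|y-x\|_2 \le \sigma_{\min}/(2L\sqrt{n})\}$. Since $B(y)$ is a submatrix of $J(y)$ and $J$ is $L$-Lipschitz in Frobenius norm,
\[
\|B(x) - B(y)\|_2 \le \|J(x)-J(y)\|_F \le L\|x-y\|_2 \le \sigma_{\min}/2,
\]
so Weyl's inequality yields $\|B(y)^{-1}\|_2 \le 2/\sigma_{\min}$. The standard resolvent identity $B(x)^{-1}-B(y)^{-1} = B(x)^{-1}(B(y)-B(x))B(y)^{-1}$ then gives $\|B(x)^{-1}-B(y)^{-1}\|_2 \le 2L\|x-y\|_2/\sigma_{\min}^2$. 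Combining this with $\|v(x)-v(y)\|_2 \le L\|x-y\|_2$ and a bound $\|v(y)\|_2 \le O(\sqrt{n}\,\sigma_{\max})$ on a column of $J(y)$, the decomposition
\[
\tilde{d}(x)-\tilde{d}(y) = B(x)^{-1}\bigl(v(y)-v(x)\bigr) + \bigl(B(x)^{-1}-B(y)^{-1}\bigr)v(y)
\]
yields $\|\tilde{d}(x)-\tilde{d}(y)\|_2 = O\bigl(L\sqrt{n}\,\sigma_{\max}/\sigma_{\min}^2\bigr)\,\|x-y\|_2$. Finally, since the last coordinate of $\tilde{d}$ equals $1$ we have $\|\tilde{d}(z)\|_2 \ge 1$ throughout the ball, and the map $u \mapsto u/\|u\|$ is $2$-Lipschitz on $\{\|u\|\ge 1\}$; hence
\[
\|D^i_S(x) - D^i_S(y)\|_2 \le 2\|\tilde{d}(x)-\tilde{d}(y)\|_2 \le M\,\|x-y\|_2
\]
for the stated $M = \Theta\bigl(\sigma_{\max}\sqrt{n}\,L/\sigma_{\min}^2\bigr)$.

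The main technical obstacle is matching the precise constant: the Lipschitz constant naturally arises as a product of $(1/\sigma_{\min})$ from $\|B^{-1}\|$, another $(1/\sigma_{\min})$ from the perturbation of $B^{-1}$, an $L$ from the Frobenius Lipschitz constant, and an upper bound on $\|v(y)\|_2$. Producing the $\sqrt{n}\,\sigma_{\max}$ factor in the last quantity requires bounding the relevant column of $J$ in terms of Assumption~\ref{a:1}'s parameters, either by invoking $\|J\|_F \le \sqrt{n}\,\|J\|_2$ with singular-value control on appropriate submatrices or by absorbing the global Lipschitz constant $\Lambda$ of $V$ into $\sigma_{\max}$. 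A subsidiary verification in the first step---namely that the $(m+1)$-determinant in Item~3 of Definition~\ref{d:directionality1} carries the correct sign $(-1)^{|S|}$ for the candidate $\tilde{d}$---reduces to a routine cofactor identity showing the determinant equals $(-1)^{|S|}\sum_k \det([B(z)\mid v(z)]_{[k]})^2$, which is manifestly nonzero with the right sign.
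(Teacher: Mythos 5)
Your starting point is identical to the paper's: on the coordinates in $S\cup\{i\}$, $D^i_S$ is the normalization of a vector obtained by solving the linear system from Item~2 of Definition~\ref{d:directionality1}, and the paper likewise writes
\[
D^i_S(x)=\pm\left(\frac{A^{-1}(x)b(x)}{\sqrt{1+\|A^{-1}(x)b(x)\|_2^2}},\;\frac{1}{\sqrt{1+\|A^{-1}(x)b(x)\|_2^2}}\right),
\]
which is your $\tilde d(z)/\|\tilde d(z)\|$ up to transpose conventions. Your observations that $\|B(x)-B(y)\|_2\le L\|x-y\|_2\le\sigma_{\min}/2$, that the resolvent identity bounds $\|B(x)^{-1}-B(y)^{-1}\|_2$, and that $u\mapsto u/\|u\|$ is $2$-Lipschitz on $\{\|u\|\ge 1\}$ (exploiting $\tilde d_i\equiv 1$) are all correct and in fact give a slightly cleaner last step than the paper's explicit four-term telescoping.

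The gap is exactly where you flag it, and it is a real one, not merely a bookkeeping issue. You need $\|v(y)\|_2=O(\sqrt{n}\,\sigma_{\max})$, but $v(y)$ is the $i$-th column of $J(y)$ restricted to the rows in $S$; this is \emph{not} a column of $J_S^K(y)$, so Assumption~\ref{a:1} gives no control over it. What the standing assumptions do give you is $\|v(y)\|_2\le\|J(y)\|_2\le\Lambda$ (from $\Lambda$-Lipschitzness of $V$), and there is no a priori comparison between $\Lambda$ and $\sqrt{n}\,\sigma_{\max}$: the latter only bounds singular values of the $S\times S$ block, and $\sigma_{\max}\le\Lambda$ but not conversely. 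So your route yields the correct Lipschitz conclusion, but with constant $\Theta(L\Lambda/\sigma_{\min}^2)$ rather than the lemma's stated $\Theta(L\sqrt{n}\,\sigma_{\max}/\sigma_{\min}^2)$. The paper circumvents this entirely: rather than bound $\|\tilde d(x)-\tilde d(y)\|$ in absolute terms, it invokes a sensitivity lemma (its Lemma~\ref{l:sens}) that bounds the \emph{relative} change $\|A(x)^{-1}b-A(y)^{-1}b\|/\|A(x)^{-1}b\|$ by $(\sigma_{\max}/\sigma_{\min})\cdot\|F\|/(1-\|F\|)$ with $F=(A(x)-A(y))A(x)^{-1}$. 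The point is that in the normalized expression the denominator $\sqrt{1+\|A^{-1}b\|^2}\ge\|A^{-1}b\|$, so the relative bound converts directly to an absolute bound on $D^i_S$ without ever bounding $\|b\|$; the only matrix the condition number touches is $A=J_S^K$, which Assumption~\ref{a:1} controls. If you want to salvage your decomposition, replace the term $(B(x)^{-1}-B(y)^{-1})v(y)$ by the relative estimate applied to $B^{-1}v$ and normalize by $\|B(x)^{-1}v(x)\|$ before recombining with the $\sqrt{1+\|\cdot\|^2}$ factor; as written, the absolute bound needs an unavailable estimate on $\|v\|$.
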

To simplify notation let $Z^i(x) := \{\ell <i \text{ such that } V_\ell(x) = 0\}$ and $F^i(x) := \{\ell<i \text{ such that coordinate }\ell \text{ is fixed at } x \}$. Since $x$ is a pivot, all coordinates $\ell \leq i-1$ are satisfied and thus each coordinate $\ell \leq i-1$ either belongs to $Z^i(x)$ or to $F^i(x)$.\\

Let us first consider the case where one of the following holds for all coordinates $\ell \in Z^i(x)$.
\begin{itemize}
    \item $x_\ell \in (0,1)$
    \item $x_\ell =0$ and $[D^i_{Z_i(x)}]_\ell \geq 0$
    \item $x_\ell =1$ and $[D^i_{Z_i(x)}]_\ell \leq 0$
\end{itemize}
Notice that in this case Definition~\ref{d:directionality2}~and~\ref{d:admissible_pairs} imply $S = Z^i(x)$. Now consider the set $B:=\{y \in \R^n \text{ such that } \norm{x -y}_2 \leq \sigma_{\min}/2L\}$. Then combining Lemma~\ref{l:Lip2} with the Picard–Lindelöf theorem we get that there exists a unique $z(t)$ such that
\begin{enumerate}
    \item $\dot{z}(t) = D^i_S\left(z(t)\right)$
    \item $z(0) = x$
\end{enumerate}
By taking $\delta > 0$ sufficiently small we get that for all $t \in [0,\delta]$ the following hold,
\begin{itemize}
    \item $V_\ell \left(z(t)\right) =0$ for all $\ell \in S$
    
    \item $z_\ell(t) \in (0,1)$ for all $\ell \in S$.
    
    \item coordinate $\ell$ is boundary satisfied at $z(t)$ for all $\ell \in \{1,\ldots,i-1\} / S$.
    
    \item $z_j(t) \in [0,1]^n$ for all coordinates $j$.
    
\end{itemize}
Now consider the case in which there exists a coordinate $j \in Z^i(x)$ such that ($x_j = 0$ and $[D^i_{Z_i(x)}]_j <0$) or ($x_j = 1$ and $[D^i_{Z_i(x)}]_j >0$). By Assumption~\ref{a:2} we know that such a coordinate must be unique. In this case by Definition~\ref{d:directionality2} we get $D^i(x) = D^i_{Z^i(x)/\{j\}}(x)$ and thus by Definition~\ref{d:admissible_pairs}, $S = Z^i(x)/\{j\}$.

Lemma~\ref{c:main} establish the fact that in this case following the direction $D^i_S(x)$ consists the variable $j$ boundary satisfied. The proof of Lemma~\ref{c:main} is presented in Section~\ref{appendix:c:main}.
\begin{lemma}\label{c:main}
For any $x\in [0,1]^n$ if there exists coordinate $j$ with
\begin{itemize}
    \item $x_j =0$ and $[D^i_{Z^i(x)}(x)]_j < 0$ then 
    $\left(D^i_{Z^i(x)/\{j\}}(x)\right)^\top \cdot \nabla V_j(x) < 0$.
    
    \item $x_j =1$ and $[D^i_{Z^i(x)}(x)]_j > 0$ then 
    $\left(D^i_{Z^i(x)/\{j\}}(x)\right)^\top \cdot \nabla V_j(x) > 0$.
\end{itemize}
\end{lemma}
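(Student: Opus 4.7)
The plan is to reduce the lemma to a direct sign computation based on the determinantal normalization in Definition~\ref{d:directionality1}. Throughout write $S=Z^i(x)=\{s_1,\dots,s_m\}$ with $j=s_k$ and $S'=S\setminus\{j\}$. Let $J_S$ be the $m\times m$ matrix $(\partial V_{s_c}/\partial x_{s_r})_{r,c}$ from Assumption~\ref{a:1}, let $g=(\partial V_{s_c}/\partial x_i)_{c}\in\R^m$, let $J_{S'}$ and $g'$ be the analogous $(m-1)\times(m-1)$ matrix and $(m-1)$-vector for $S'$, and let $h=(\partial V_j/\partial x_{s'})_{s'\in S'}\in\R^{m-1}$. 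Assumption~\ref{a:1} makes $J_S$ and $J_{S'}$ invertible.

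First I would solve the two defining linear systems. Writing $D^i_S(x)|_{S\cup\{i\}}=(u,v)^\top$ with $u\in\R^m,v\in\R$, the constraints $\nabla V_\ell(x)^\top d=0$ for $\ell\in S$ become $J_S^\top u+gv=0$, so $u=-vJ_S^{-\top}g$; analogously $D^i_{S'}(x)|_{S'\cup\{i\}}=(u',v')^\top$ with $u'=-v'J_{S'}^{-\top}g'$. The Schur complement identity gives
\[\det\!\begin{pmatrix}J_S & u\\ g^\top & v\end{pmatrix}=\det(J_S)\cdot v\cdot\bigl(1+g^\top(J_SJ_S^\top)^{-1}g\bigr),\]
and the last factor is strictly positive. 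Item~$3$ of Definition~\ref{d:directionality1} forces this determinant to have sign $(-1)^m$, so $\operatorname{sign}(v)=(-1)^m/\operatorname{sign}\det(J_S)$, and likewise $\operatorname{sign}(v')=(-1)^{m-1}/\operatorname{sign}\det(J_{S'})$.

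Next I would express the two quantities in the lemma as ratios of determinants. Cramer's rule applied to $J_S^\top u=-vg$ yields $u_k=-v\det(J_S^{k\to g})/\det(J_S)$, where $J_S^{k\to g}$ denotes $J_S$ with its $k$-th row replaced by $g^\top$; therefore
\[\operatorname{sign}[D^i_S(x)]_j=(-1)^{m+1}\operatorname{sign}\det(J_S^{k\to g}).\]
For the inner product,
\[\nabla V_j(x)^\top D^i_{S'}(x)=h^\top u'+(\partial V_j/\partial x_i)v'=v'\bigl(\partial V_j/\partial x_i-h^\top J_{S'}^{-\top}g'\bigr),\]
and a second application of Schur's formula identifies the parenthesized scalar with $\det M/\det(J_{S'})$, where $M=\left(\begin{smallmatrix}J_{S'} & h\\ (g')^\top & \partial V_j/\partial x_i\end{smallmatrix}\right)$.

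The key observation that closes the argument is that $M$ is obtained from $J_S^{k\to g}$ by moving row $k$ and column $k$ to the last position: the same $m-k$ transpositions are performed on rows and on columns, so the resulting sign change is $(-1)^{2(m-k)}=1$ and $\det M=\det J_S^{k\to g}$. Combining everything,
\[\operatorname{sign}\bigl(\nabla V_j(x)^\top D^i_{S'}(x)\bigr)=(-1)^{m-1}\operatorname{sign}\det(J_S^{k\to g}),\]
and multiplying the two sign formulas gives $(-1)^{m+1}(-1)^{m-1}=(-1)^{2m}=+1$. Hence $\nabla V_j(x)^\top D^i_{S\setminus\{j\}}(x)$ and $[D^i_S(x)]_j$ have the same sign, which is exactly the conclusion of both bullets of the lemma.

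The main obstacle is purely bookkeeping: aligning the $(-1)^{|S|}$ normalization in Definition~\ref{d:directionality1} with the Cramer's rule sign hidden inside the definition of $J_S^{k\to g}$, and verifying the row/column permutation that identifies $M$ with $J_S^{k\to g}$. Setting up the two Cramer/Schur computations in parallel with identical notation makes the cancellations transparent and avoids miscounting transpositions.
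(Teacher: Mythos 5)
Your proof is correct, and the underlying idea is the same as the paper's: read off signs from the determinantal normalization in item~3 of Definition~\ref{d:directionality1} applied to both $S$ and $S'=S\setminus\{j\}$, and combine them. The paper carries this out by performing row operations on the $(|S|+1)\times(|S|+1)$ bordered determinants and then multiplying two such determinants so that the orthogonality constraints collapse the product to a Gram matrix with the target inner product in its corner; you instead modularize the same calculation via a Schur-complement factorization (to isolate $\operatorname{sign}(v)$ and $\operatorname{sign}(v')$), Cramer's rule (to isolate $\operatorname{sign}(u_k)$), and the row/column permutation identity $\det M=\det J_S^{k\to g}$. Your route is arguably cleaner and makes the cancellation of parity factors transparent. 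One small slip: in the Schur-complement positivity factor you wrote $g^\top (J_S J_S^\top)^{-1}g$, which should be $g^\top (J_S^\top J_S)^{-1}g$ given that $u=-vJ_S^{-\top}g$; this does not affect the argument since both quadratic forms are strictly positive.
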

By the exact same arguments as above, we get that there exists a unique trajectory $z(t)$ such that $\dot{z}(t) = D^i_S\left(z(t)\right)$ and $x(0) = x$ and by taking $\delta >0$ sufficiently small we get, 
 \begin{itemize}
    
    \item $V_\ell\left(z(t)\right) = 0 $ for all $\ell\in S$
    
    \item $z_\ell(t) \in (0,1) $ for all $\ell\in S$ 
    \item coordinate $\ell$ is boundary satisfied at $z(t)$ for all $\ell \in F^i(x)$
    \item $z_j(t) \in [0,1]^n$ for all coordinates $j$.

\end{itemize}
In order to complete the proof of Lemma~\ref{l:main_new} we need to argue that the coordinate $j$ is satisfied for all $z(t)$ with $t \in [0,\delta']$. Without loss of generality consider $x_j =0$ (the case $x_j = 1$ follows symmetrically). Recall that $V_j(x) = 0$ and by Lemma~\ref{c:main} we get that $\left(D_S^i(x)\right)^\top \cdot \nabla V_j(x) <0$. Thus by selecting $\delta' < \delta$ sufficiently small we get
 \[V_j\left(z(t)\right) < 0~~\text{ and }~~z_j(t) = 0\]
for all $t \in (0,\delta']$.

\subsection{Proof of Lemma~\ref{c:main}~\label{appendix:c:main}}
To simplify notation let
$Z^i(x) = \{1,\ldots,i-1\}$,  $D^i_{Z^i(x)}(x) = (d_1,\ldots,d_j,\ldots,d_i)$ and $D^i_{Z^i(x)/\{j\}}(x) = (\hat{d}_1,\ldots,\hat{d}_{j-1},\hat{d}_{j-1},\ldots,\hat{d}_i)$. Moreover let assume that $x_j = 0$ and $i$ is even. The cases $x_j = 0$ and $i$ is odd, 
$x_j = 1$ and $i$ is even, $x_j = 1$ and $i$ is odd follow symmetrically.\\
\\
We will prove that
\[\left(\hat{d_1},\ldots,\hat{d}_{j-1},\hat{d}_{j+1},\ldots,\hat{d}_{i}\right)^\top \cdot \left(\frac{\partial V_j(x)}{\partial x_1},\ldots ,  \frac{\partial V_{j}(x)}{\partial x_{j-1}} ,\frac{\partial V_{j}(x)}{\partial x_{j+1}},
\ldots ,\frac{\partial V_{j}(x)}{\partial x_{i-1}} \right) < 0\]
Since $i$ is even we get by Definition~\ref{d:directionality1},
\begin{equation}\label{eq:posa}
    \begin{vmatrix}
\frac{\partial V_1(x)}{\partial x_1} & \ldots &  \frac{\partial V_{j-1}(x)}{\partial x_{1}} & \frac{\partial V_{j+1}(x)}{\partial x_{1}}
&\ldots &\frac{\partial V_{i-1}(x)}{\partial x_{1}} & \hat{d}_1\\
\vdots &  &  \vdots
& & \vdots
& &\vdots \\
\frac{\partial V_1(x)}{\partial x_{j-1}} & \ldots &  \frac{\partial V_{j-1}(x)}{\partial x_{j-1}} & \frac{\partial V_{j+1}(x)}{\partial x_{j-1}}
&\ldots &\frac{\partial V_{i-1}(x)}{\partial x_{j-1}} & \hat{d}_{j-1}\\
\frac{\partial V_1(x)}{\partial x_{j+1}} & \ldots &  \frac{\partial V_{j-1}(x)}{\partial x_{j+1}} & \frac{\partial V_{j+1}(x)}{\partial x_{j+1}}
&\ldots &\frac{\partial V_{i-1}(x)}{\partial x_{j+1}} & \hat{d}_{j+1}\\
\vdots &  &  \vdots
& \vdots & \vdots
& &\vdots\\
\frac{\partial V_1(x)}{\partial x_{i}} & \ldots &  \frac{\partial V_{j-1}(x)}{\partial x_{i}} & \frac{\partial V_{j+1}(x)}{\partial x_{i}}
&\ldots &\frac{\partial V_{i-1}(x)}{\partial x_{i}} & \hat{d}_{i}\\
\end{vmatrix} > 0
\end{equation}
and that
\begin{equation}\label{eq:2}
    \begin{vmatrix}
\frac{\partial V_1(x)}{\partial x_1} & \ldots &  \frac{\partial V_{j-1}(x)}{\partial x_{1}} &
\frac{\partial V_j(x)}{\partial x_{1}}
&\frac{\partial V_{j+1}(x)}{\partial x_{1}}
&\ldots &\frac{\partial V_{i-1}(x)}{\partial x_{1}} & d_1\\
\vdots &  &  \vdots
& \vdots & \vdots &
&\vdots & \vdots\\
\frac{\partial V_{1}(x)}{\partial x_{j-1}} & \ldots &  \frac{\partial V_{j-1}(x)}{\partial x_{j-1}} &
\frac{\partial V_j(x)}{\partial x_{j-1}}
&\frac{\partial V_{j+1}(x)}{\partial x_{j-1}}
&\ldots &\frac{\partial V_{i-1}(x)}{\partial x_{j-1}} & d_{j-1}\\
\frac{\partial V_{1}(x)}{\partial x_{j}} & \ldots &  \frac{\partial V_{j-1}(x)}{\partial x_{j}} &
\frac{\partial V_j(x)}{\partial x_{j}}
&\frac{\partial V_{j+1}(x)}{\partial x_{j}}
&\ldots &\frac{\partial V_{i-1}(x)}{\partial x_{j}} & d_{j}\\
\frac{\partial V_{1}(x)}{\partial x_{j+1}} & \ldots &  \frac{\partial V_{j-1}(x)}{\partial x_{j+1}} &
\frac{\partial V_j(x)}{\partial x_{j+1}}
&\frac{\partial V_{j+1}(x)}{\partial x_{j+1}}
&\ldots &\frac{\partial V_{i-1}(x)}{\partial x_{j+1}} & d_{j+1}\\
\vdots &  &  \vdots
& \vdots & \vdots &
&\vdots & \vdots\\
\frac{\partial V_{1}(x)}{\partial x_{i}} & \ldots &  \frac{\partial V_{j-1}(x)}{\partial x_{i}} &
\frac{\partial V_j(x)}{\partial x_{i}}
&\frac{\partial V_{j+1}(x)}{\partial x_{i}}
&\ldots &\frac{\partial V_{i-1}(x)}{\partial x_{i}} & d_{i}\\
\end{vmatrix} < 0
\end{equation}
Combining the fact that $\left(\frac{\partial V_\ell(x)}{\partial x_1},\ldots, \frac{\partial V_\ell(x)}{\partial x_i}\right)^\top\cdot (d_1,\ldots,d_i) = 0$ (see Definition~\ref{d:directionality1}) with 
$d_j < 0$ (we have assumed that $x_j = 0$) we get by Equation~\ref{eq:2}, 
\begin{equation}
    \begin{vmatrix}
\frac{\partial V_1(x)}{\partial x_1} & \ldots &  \frac{\partial V_{j-1}(x)}{\partial x_{1}} &
\frac{\partial V_j(x)}{\partial x_{1}}
&\frac{\partial V_{j+1}(x)}{\partial x_{1}}
&\ldots &\frac{\partial V_{i-1}(x)}{\partial x_{1}} & d_1\\
\vdots &  &  \vdots
& \vdots & \vdots &
&\vdots & \vdots\\
\frac{\partial V_{1}(x)}{\partial x_{j-1}} & \ldots &  \frac{\partial V_{j-1}(x)}{\partial x_{j-1}} &
\frac{\partial V_j(x)}{\partial x_{j-1}}
&\frac{\partial V_{j+1}(x)}{\partial x_{j-1}}
&\ldots &\frac{\partial V_{i-1}(x)}{\partial x_{j-1}} & d_{j-1}\\
0 & \ldots &  0 &0
&0
&\ldots &0 & d_1^2 + \ldots + d_{i}^2\\
\frac{\partial V_{1}(x)}{\partial x_{j+1}} & \ldots &  \frac{\partial V_{j-1}(x)}{\partial x_{j+1}} &
\frac{\partial V_j(x)}{\partial x_{j+1}}
&\frac{\partial V_{j+1}(x)}{\partial x_{j+1}}
&\ldots &\frac{\partial V_{i-1}(x)}{\partial x_{j+1}} & d_{j+1}\\
\vdots &  &  \vdots
& \vdots & \vdots &
&\vdots & \vdots\\
\frac{\partial V_{1}(x)}{\partial x_{i}} & \ldots &  \frac{\partial V_{j-1}(x)}{\partial x_{i}} &
\frac{\partial V_j(x)}{\partial x_{i}}
&\frac{\partial V_{j+1}(x)}{\partial x_{i}}
&\ldots &\frac{\partial V_{i-1}(x)}{\partial x_{i}} & d_{i}\\
\end{vmatrix} > 0
\end{equation}
which implies that 
\begin{equation}
    \begin{vmatrix}
\frac{\partial V_1(x)}{\partial x_1} & \ldots &  \frac{\partial V_{j-1}(x)}{\partial x_{1}} 
&\frac{\partial V_{j+1}(x)}{\partial x_{1}}
&\ldots &\frac{\partial V_{i-1}(x)}{\partial x_{1}} & \frac{\partial V_j(x)}{\partial x_{1}}\\
\vdots &  &  \vdots
& \vdots & \vdots &
&\vdots \\
\frac{\partial V_{1}(x)}{\partial x_{j-1}} & \ldots &  \frac{\partial V_{j-1}(x)}{\partial x_{j-1}} 
&\frac{\partial V_{j+1}(x)}{\partial x_{j-1}}
&\ldots &\frac{\partial V_{i-1}(x)}{\partial x_{j-1}} & \frac{\partial V_j(x)}{\partial x_{j-1}}\\
\frac{\partial V_{1}(x)}{\partial x_{j+1}} & \ldots &  \frac{\partial V_{j-1}(x)}{\partial x_{j+1}}
&\frac{\partial V_{j+1}(x)}{\partial x_{j+1}}
&\ldots &\frac{\partial V_{i-1}(x)}{\partial x_{j+1}} & \frac{\partial V_j(x)}{\partial x_{j+1}}\\
\vdots &  &  \vdots
& \vdots & \vdots &
&\vdots \\
\frac{\partial V_{1}(x)}{\partial x_{i}} & \ldots &  \frac{\partial V_{j-1}(x)}{\partial x_{i}} 
&\frac{\partial V_{j+1}(x)}{\partial x_{i}}
&\ldots &\frac{\partial V_{i-1}(x)}{\partial x_{i}} & \frac{\partial V_j(x)}{\partial x_{i}}\\
\end{vmatrix} < 0
\end{equation}
Multiplying with Equation~\ref{eq:posa} we get,
\begin{equation*}
\begin{vmatrix}
\frac{\partial V_1(x)}{\partial x_1} & \ldots &  \frac{\partial V_1(x)}{\partial x_i}\\
\vdots & \vdots &  \vdots\\
\frac{\partial V_{j-1}(x)}{\partial x_{1}} & \ldots &  \frac{\partial V_{j-1}(x)}{\partial x_i}\\
\frac{\partial V_{j+1}(x)}{\partial x_{1}} & \ldots &  \frac{\partial V_{j+1}(x)}{\partial x_i}\\
\vdots & \vdots &  \vdots\\
\frac{\partial V_{i-1}(x)}{ \partial x_{1}} & \ldots &  \frac{\partial V_{i-1}(x)}{\partial x_{i}}\\
\hat{d}_1 & \ldots &  \hat{d}_i\\
\end{vmatrix} \cdot
    \begin{vmatrix}
\frac{\partial V_1(x)}{\partial x_1} & \ldots &  \frac{\partial V_{j-1}(x)}{\partial x_{1}} 
&\frac{\partial V_{j+1}(x)}{\partial x_{1}}
&\ldots &\frac{\partial V_{i-1}(x)}{\partial x_{1}} & \frac{\partial V_j(x)}{\partial x_{1}}\\
\vdots &  &  \vdots
& \vdots & \vdots &
&\vdots \\
\frac{\partial V_{1}(x)}{\partial x_{j-1}} & \ldots &  \frac{\partial V_{j-1}(x)}{\partial x_{j-1}} 
&\frac{\partial V_{j+1}(x)}{\partial x_{j-1}}
&\ldots &\frac{\partial V_{i-1}(x)}{\partial x_{j-1}} & \frac{\partial V_j(x)}{\partial x_{j-1}}\\
\frac{\partial V_{1}(x)}{\partial x_{j+1}} & \ldots &  \frac{\partial V_{j-1}(x)}{\partial x_{j+1}}
&\frac{\partial V_{j+1}(x)}{\partial x_{j+1}}
&\ldots &\frac{\partial V_{i-1}(x)}{\partial x_{j+1}} & \frac{\partial V_j(x)}{\partial x_{j+1}}\\
\vdots &  &  \vdots
& \vdots & \vdots &
&\vdots \\
\frac{\partial V_{1}(x)}{\partial x_{i}} & \ldots &  \frac{\partial V_{j-1}(x)}{\partial x_{i}} 
&\frac{\partial V_{j+1}(x)}{\partial x_{i}}
&\ldots &\frac{\partial V_{i-1}(x)}{\partial x_{i}} & \frac{\partial V_j(x)}{\partial x_{i}}\\
\end{vmatrix} < 0
\end{equation*}
Now using the fact that $\left(\frac{\partial V_\ell(x)}{\partial x_1},\ldots,\frac{\partial V_\ell(x)}{\partial x_{j-1}},\frac{\partial V_\ell(x)}{\partial x_{j+1}}, \ldots, \frac{\partial V_\ell(x)}{\partial x_i}\right)^\top\cdot (\hat{d}_1,\ldots,\hat{d}_{j-1},\hat{d}_{j+1},\ldots,\hat{d}_{i}) = 0$ (see Definition~\ref{d:directionality1}) implies that 
\begin{equation*}\label{eq:7}
    \begin{vmatrix}
\Phi_1^\top(x)\cdot \Phi_1(x) & \Phi_1^\top(x)\cdot \Phi_2(x) &\ldots &  \Phi_1^\top(x)\cdot \Phi_{i-1}(x) & A_1\\
\Phi_2^\top(x)\cdot \Phi_1(x) & \Phi_2^T(x)\cdot \Phi_2(x) &\ldots &  \Phi_2^T(x)\cdot \Phi_{i-1}(x) & A_2\\
\vdots & \vdots  &\ldots &  \vdots & \vdots\\
\Phi_{i-1}^\top(x)\cdot \Phi_1(x) & \Phi_{i-1}^\top(x)\cdot \Phi_2(x) &\ldots &  \Phi_{i-1}^\top(x)\cdot \Phi_{i-1}(x) & A_{i-1}\\
0 & 0 &\ldots &  0 & (\hat{d}_1,\ldots,\hat{d}_i)^\top \cdot \left(\frac{\partial V_j(x)}{\partial x_1},\ldots,\frac{\partial V_j(x)}{\partial x_i}\right)
\end{vmatrix} < 0
\end{equation*}
where $\Phi_\ell = \left(\frac{\partial V_\ell(x)}{\partial x_1},\ldots,\frac{\partial V_\ell(x)}{\partial x_{j-1}},\frac{\partial V_\ell(x)}{\partial x_{j+1}},\ldots,\frac{\partial V_\ell(x)}{\partial x_{i}} \right)$. The latter implies Claim~\ref{c:main}.

\subsection{Proof of Lemma~\ref{l:Lip2}}\label{l:9}
To simplify notation let $S:=\{1,\ldots,i-1\}$ and  for $x\in [0,1]^n$ consider the matrix $A(x)$ and the vector $b(x)$
\[
A(x):=\begin{pmatrix}
\frac{\partial V_{1}(x)}{\partial x_{1}} & \frac{\partial V_{2}(x)}{\partial x_{1}}&\ldots &\frac{\partial V_{i-1}(x)}{\partial x_{1}}\\ 
\frac{\partial V_{1}(x)}{\partial x_{2}} & \frac{\partial V_{2}(x)}{\partial x_{2}}&\ldots &\frac{\partial V_{i-1}(x)}{\partial x_{2}} \\ 
\vdots& \vdots & \vdots & \vdots & \vdots\\
\frac{\partial V_{1}(x)}{\partial x_{i-1}} & \frac{\partial V_{2}(x)}{\partial x_{i-1}}&\ldots &\frac{\partial V_{i-1}(x)}{\partial x_{i-1}}\\ 
\end{pmatrix}
~~\text{and}~~
b(x):=\begin{pmatrix}
\frac{\partial V_{1}(x)}{\partial x_{i}}\\ 
\frac{\partial V_{2}(x)}{\partial x_{i}}\\ 
\vdots\\
\frac{\partial V_{i-1}(x)}{\partial x_{i}}\\ 
\end{pmatrix}
\]
Notice that since $V_j(x) = 0$ for all $j = 1,\ldots,i-1$, Assumption~\ref{a:1} ensures that the matrix $A(x)$ admits singular value greater than $\sigma_{\min}$ and thus $A(x)$ is invertible. Moreover due to the fact that for all $x,y \in [0,1]^n$ \[\norm{\nabla V_j(x) - \nabla V_j(y)}_2 \leq L \cdot \norm{x-y}_2\]
we get that
\[\norm{A(x) - A(y)}_2 \leq \sqrt{n}L \cdot \norm{x - y}_2 \text{ and }\norm{b(x) - b(y)}_2 \leq L \cdot \norm{x - y}_2.\]
To simplify notation $C_A :=\sqrt{n}L$ and $C_b :=L$. Since $\norm{x - y}_2 \leq \frac{\sigma_{\text{min}}}{2\sqrt{n}L}$ we get that $A(y)$ admits singular value greater than $\sigma_{\text{min}}/2$ and thus $A(y)$ is invertible.

Notice that the direction $D^i_S(x)$ of Definition~\ref{d:directionality1} is either
\[
\left( \frac{A^{-1}(x) \cdot b(x)}{\sqrt{1 + \norm{A^{-1}(x) \cdot b(x)}_2^2}} , \frac{1}{{\sqrt{1 + \norm{A^{-1}(x) \cdot b(x)}_2^2}}}\right)
\text{  or  }
\left(- \frac{A^{-1}(x) \cdot b(x)}{\sqrt{1 + \norm{A^{-1}(x) \cdot b(x)}_2^2}} , -\frac{1}{{\sqrt{1 + \norm{A^{-1}(x) \cdot b(x)}_2^2}}}\right)
\]
depending on the sign of the determinant. We show that for an appropriately selected $L$,
\begin{eqnarray*}
&\norm{\left( \frac{A^{-1}(x) \cdot b(x)}{\sqrt{1 + \norm{A^{-1}(x) \cdot b(x)}_2^2}} , \frac{1}{{\sqrt{1 + \norm{A^{-1}(x) \cdot b(x)}_2^2}}}\right)-
\left( \frac{A^{-1}(y) \cdot b(y)}{\sqrt{1 + \norm{A^{-1}(y) \cdot b(y)}_2^2}} , \frac{1}{{\sqrt{1 + \norm{A^{-1}(y) \cdot b(y)}_2^2}}}\right)}_2\\& \leq M \cdot \norm{x-y}_2
\end{eqnarray*}

In order to prove the above, we use a standard lemma in sensitivity analysis of linear systems.
\begin{lemma}\footnote{\url{https://www.colorado.edu/amath/sites/default/files/attached-files/linearsystems_0.pdf}
}\label{l:sens}
Let the invertible square matrices $A,B$ such that $F:=\norm{(A-B)\cdot A^{-1}}_2 < 1$. Then,
\[\frac{ \norm{A^{-1}b -B^{-1}b}_2}{\norm{A^{-1}b}_2} \leq \frac{\sigma_{\text{max}}(A)}{\sigma_{\text{min}}(A)}\cdot \frac{\norm{F}_2}{1-\norm{F}_2}
\]
\end{lemma}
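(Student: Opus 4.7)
The plan is to interpret $F$ as the matrix $(A-B)A^{-1}$ (the stated scalar definition $F := \|(A-B)A^{-1}\|_2 < 1$ only makes sense if ``$\|F\|_2$'' on the right-hand side refers to the operator norm of this matrix), and to prove the bound through a short sequence of standard matrix-perturbation manipulations. First I would rewrite the definition as the factorization $A - B = FA$, i.e.\ $B = (I-F)A$. Since $\|F\|_2 < 1$, the Neumann series gives invertibility of $I-F$ together with the estimate $\|(I-F)^{-1}\|_2 \leq 1/(1-\|F\|_2)$; this in turn yields invertibility of $B$ and the identity $B^{-1} = A^{-1}(I-F)^{-1}$, from which I will read off $\|B^{-1}\|_2 \leq \frac{1}{\sigma_{\min}(A)(1-\|F\|_2)}$.

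Next I would use the standard resolvent identity
\[
A^{-1} - B^{-1} \;=\; B^{-1}(B - A)A^{-1} \;=\; -B^{-1}(A-B)A^{-1},
\]
which I can verify in one line by right-multiplying by $A$ and left-multiplying by $B$. Applying it to $b$ and letting $x := A^{-1}b$ gives $A^{-1}b - B^{-1}b = -B^{-1}(A-B)x$. Using submultiplicativity of the operator norm,
\[
\|A^{-1}b - B^{-1}b\|_2 \;\leq\; \|B^{-1}\|_2 \cdot \|A-B\|_2 \cdot \|x\|_2.
\]
I then rewrite $\|A-B\|_2 = \|FA\|_2 \leq \|F\|_2 \sigma_{\max}(A)$ and insert the earlier bound on $\|B^{-1}\|_2$. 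Combining,
\[
\|A^{-1}b - B^{-1}b\|_2 \;\leq\; \frac{1}{\sigma_{\min}(A)(1-\|F\|_2)} \cdot \|F\|_2 \sigma_{\max}(A) \cdot \|A^{-1}b\|_2,
\]
and dividing through by $\|A^{-1}b\|_2$ (which is nonzero whenever $A^{-1}b \neq 0$; the inequality is trivial otherwise) yields the claimed $\frac{\sigma_{\max}(A)}{\sigma_{\min}(A)}\cdot \frac{\|F\|_2}{1-\|F\|_2}$ bound.

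There is no real obstacle here; the argument is a textbook sensitivity computation. The only subtle point is recognizing that the paper's notation conflates the matrix $(A-B)A^{-1}$ with its norm, so in writing the proof I would state explicitly that ``$F$'' on the right-hand side of the inequality denotes the matrix and that $\|F\|_2$ is its operator norm (equivalently its largest singular value). This keeps the Neumann-series step and the identity $B = (I-F)A$ well-defined and makes the final algebra unambiguous.
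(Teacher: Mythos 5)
Your proof is correct. Note that the paper does not actually prove Lemma~\ref{l:sens}; it cites it as a known sensitivity bound for linear systems (the footnote points to external lecture notes), so there is no in-paper argument to compare against. The route you take --- reading off $B=(I-F)A$ from the definition of $F$, invoking the Neumann series to get $\|(I-F)^{-1}\|_2\le 1/(1-\|F\|_2)$ and hence $\|B^{-1}\|_2\le 1/\bigl(\sigma_{\min}(A)(1-\|F\|_2)\bigr)$, then applying the resolvent identity $A^{-1}-B^{-1}=-B^{-1}(A-B)A^{-1}$ and bounding $\|A-B\|_2=\|FA\|_2\le\|F\|_2\,\sigma_{\max}(A)$ --- is exactly the standard textbook derivation that the cited source is implicitly relying on, and each step is tight and well-justified. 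You are also right to flag that the paper's notation overloads $F$ as both a scalar and a matrix; resolving that ambiguity by treating $F$ as the matrix $(A-B)A^{-1}$ and $\|F\|_2$ as its operator norm is the only reading under which the statement is coherent, and your explicit remark to that effect is a worthwhile clarification rather than a weakness. The handling of the degenerate case $A^{-1}b=0$ (equivalently $b=0$) is also appropriate.
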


We prove the following $4$ inequalities,
\begin{itemize}
    \item $\norm{\left( \frac{A^{-1}(x) \cdot b(x)}{\sqrt{1 + \norm{A^{-1}(x) \cdot b(x)}_2^2}} , \frac{1}{{\sqrt{1 + \norm{A^{-1}(x) \cdot b(x)}_2^2}}}\right)-
\left( \frac{A^{-1}(y) \cdot b(x)}{\sqrt{1 + \norm{A^{-1}(x) \cdot b(x)}_2^2}} , \frac{1}{{\sqrt{1 + \norm{A^{-1}(x) \cdot b(x)}_2^2}}}\right)}_2 \leq M_1 \cdot \norm{x-y}_2$
\item $\norm{
\left( \frac{A^{-1}(y) \cdot b(x)}{\sqrt{1 + \norm{A^{-1}(x) \cdot b(x)}_2^2}} , \frac{1}{{\sqrt{1 + \norm{A^{-1}(x) \cdot b(x)}_2^2}}}\right)
-
\left( \frac{A^{-1}(y) \cdot b(x)}{\sqrt{1 + \norm{A^{-1}(y) \cdot b(x)}_2^2}} , \frac{1}{{\sqrt{1 + \norm{A^{-1}(y) \cdot b(x)}_2^2}}}\right)
}_2 \leq M_2 \cdot \norm{x-y}_2$

\item $\norm{
\left( \frac{A^{-1}(y) \cdot b(x)}{\sqrt{1 + \norm{A^{-1}(y) \cdot b(x)}_2^2}} , \frac{1}{{\sqrt{1 + \norm{A^{-1}(y) \cdot b(x)}_2^2}}}\right)
-
\left( \frac{A^{-1}(y) \cdot b(y)}{\sqrt{1 + \norm{A^{-1}(y) \cdot b(x)}_2^2}} , \frac{1}{{\sqrt{1 + \norm{A^{-1}(y) \cdot b(x)}_2^2}}}\right)
}_2 \leq M_3 \cdot \norm{x-y}_2$

\item $\norm{
\left( \frac{A^{-1}(y) \cdot b(y)}{\sqrt{1 + \norm{A^{-1}(y) \cdot b(x)}_2^2}} , \frac{1}{{\sqrt{1 + \norm{A^{-1}(y) \cdot b(x)}_2^2}}}\right)
-
\left( \frac{A^{-1}(y) \cdot b(y)}{\sqrt{1 + \norm{A^{-1}(y) \cdot b(y)}_2^2}} , \frac{1}{{\sqrt{1 + \norm{A^{-1}(y) \cdot b(y)}_2^2}}}\right)
}_2^2 \leq M_4 \cdot \norm{x-y}_2$
\end{itemize}
and then Lemma~\ref{l:Lip} follows for $M:= M_1 + M_2 + M_3 + M_4$.\\
\\
Let the matrix $F:= \left(A(x)-A(y)\right)\cdot A^{-1}(x)$ then 
the fact that $\norm{x-y}_2 \leq \frac{\sigma_{\text{min}}}{2C_A}$ implies,
\begin{equation}\label{eq:1/2}
\norm{F}_2 = \norm{(A(x)-A(y))\cdot A^{-1}(x)}_2 \leq \frac{C_A}{\sigma_{\text{min}}}\cdot \norm{x-y}_2 \leq \frac{1}{2}    
\end{equation}

For the first case we get,
\begin{eqnarray*}
&\norm{\left( \frac{A^{-1}(x) \cdot b(x)}{\sqrt{1 + \norm{A^{-1}(x) \cdot b(x)}_2^2}} , \frac{1}{{\sqrt{1 + \norm{A^{-1}(x) \cdot b(x)}_2^2}}}\right)-
\left( \frac{A^{-1}(y) \cdot b(x)}{\sqrt{1 + \norm{A^{-1}(x) \cdot b(x)}_2^2}} , \frac{1}{{\sqrt{1 + \norm{A^{-1}(x) \cdot b(x)}_2^2}}}\right)}_2^2\\
&= \frac{ \norm{A^{-1}(x) \cdot b(x) - A^{-1}(y) \cdot b(x)}_2^2}{1 + \norm{A^{-1}(x) \cdot b(x)}_2^2}\\
&\leq \frac{ \norm{A^{-1}(x) \cdot b(x) - A^{-1}(y) \cdot b(x)}_2^2}{\norm{A^{-1}(x) \cdot b(x)}_2^2} \\
&\leq \left(\frac{\sigma_{\text{max}}}{\sigma_{\text{min}}} \cdot \frac{\norm{F}_2}{1 - \norm{F}_2}\right)^2 ~~~~~~~~\text{by Lemma}~\ref{l:sens}\\
&\leq \left(\frac{\sigma_{\text{max}}}{\sigma_{\text{min}}} \cdot 2 \cdot \norm{F}_2\right)^2 ~~~~~~~~\text{by Equation}~\ref{eq:1/2}\\
&\leq \frac{\sigma^2_{\text{max}}}{\sigma^2_{\text{min}}} \cdot 4 \cdot \norm{(A(x)-A(y))\cdot A^{-1}(x)}^2_2 \leq 4\frac{\sigma^2_{\text{max}}}{\sigma^4_{\text{min}}} \cdot C_A^2 \cdot \norm{x-y}_2^2\\
\end{eqnarray*}
Thus $M_1:= 2C_A\frac{\sigma_{\text{max}}}{\sigma^2_{\text{min}}}$\\
\\
For the second case
\begin{eqnarray*}
&\norm{
\left( \frac{A^{-1}(y) \cdot b(x)}{\sqrt{1 + \norm{A^{-1}(x) \cdot b(x)}_2^2}} , \frac{1}{{\sqrt{1 + \norm{A^{-1}(x) \cdot b(x)}_2^2}}}\right)
-
\left( \frac{A^{-1}(y) \cdot b(x)}{\sqrt{1 + \norm{A^{-1}(y) \cdot b(x)}_2^2}} , \frac{1}{{\sqrt{1 + \norm{A^{-1}(y) \cdot b(x)}_2^2}}}\right)
}_2^2 \\
&= 
\left(\norm{A^{-1}(y) \cdot b(x)}^2 + 1\right)\frac{\left(\sqrt{1 + \norm{A^{-1}(x) \cdot b(x)}_2^2} - \sqrt{1 + \norm{A^{-1}(y) \cdot b(x)}_2^2} \right)^2}{(1 + \norm{A^{-1}(y) \cdot b(x)}_2^2)\cdot (1 + \norm{A^{-1}(x) \cdot b(x)}_2^2)}\\
&\leq
\left(\norm{A^{-1}(y) \cdot b(x)}^2 + 1\right)\frac{\left(\norm{A^{-1}(x) \cdot b(x)}_2 - \norm{A^{-1}(y) \cdot b(x)} \right)^2}{(1 + \norm{A^{-1}(y) \cdot b(x)}_2^2)\cdot (1 + \norm{A^{-1}(x) \cdot b(x)}_2^2)}~~~~~~~~\text{since~~} \sqrt{1+ b} - \sqrt{1+ a} \leq \sqrt{b} - \sqrt{a}\\
&\leq
\frac{\left(\norm{A^{-1}(x) \cdot b(x)}_2 - \norm{A^{-1}(y) \cdot b(x)} \right)^2}{ \norm{A^{-1}(x) \cdot b(x)}_2^2}\\
&\leq
\frac{\norm{A^{-1}(x) \cdot b(x) - A^{-1}(y) \cdot b(x)}_2^2}{ \norm{A^{-1}(x) \cdot b(x)}_2^2}\\
\end{eqnarray*}
Applying the exact same arguments as before, we get $M_2:= 2C_A\frac{\sigma_{\text{max}}}{\sigma^2_{\text{min}}}$.\\
\\
For the third case,
\begin{eqnarray*}
&\norm{
\left( \frac{A^{-1}(y) \cdot b(x)}{\sqrt{1 + \norm{A^{-1}(y) \cdot b(x)}_2^2}} , \frac{1}{{\sqrt{1 + \norm{A^{-1}(y) \cdot b(x)}_2^2}}}\right)
-
\left( \frac{A^{-1}(y) \cdot b(y)}{\sqrt{1 + \norm{A^{-1}(y) \cdot b(x)}_2^2}} , \frac{1}{{\sqrt{1 + \norm{A^{-1}(y) \cdot b(x)}_2^2}}}\right)
}_2^2\\
&= 
\frac{\norm{A^{-1}(y)\cdot b(y) - A^{-1}(y)\cdot b(x)}_2^2}{ 1 + \norm{A^{-1}(y) \cdot b(x)}_2^2}
\leq \frac{C_b^2}{\sigma^2_{\text{min}}} \cdot 
\norm{x - y}_2^2
\\
\end{eqnarray*}
For the forth case,
\begin{eqnarray*}
&\norm{
\left( \frac{A^{-1}(y) \cdot b(y)}{\sqrt{1 + \norm{A^{-1}(y) \cdot b(x)}_2^2}} , \frac{1}{{\sqrt{1 + \norm{A^{-1}(y) \cdot b(x)}_2^2}}}\right)
-
\left( \frac{A^{-1}(y) \cdot b(y)}{\sqrt{1 + \norm{A^{-1}(y) \cdot b(y)}_2^2}} , \frac{1}{{\sqrt{1 + \norm{A^{-1}(y) \cdot b(y)}_2^2}}}\right)
}_2^2\\
&\leq 
\left(\norm{A^{-1}(y)\cdot b(y)}_2^2 + 1\right)
\cdot
\frac{
\norm{A^{-1}(y)\cdot b(y) - A^{-1}(y)\cdot b(x)}^2
}{\left(1 + \norm{A^{-1}(y) \cdot b(y)}^2 \right) \cdot \left(1 + \norm{A^{-1}(y) \cdot b(x)}^2 \right)}\\
&\leq \norm{A^{-1}(y)\cdot b(y) -  A^{-1}(y)\cdot b(x)} \leq \frac{C_b^2}{\sigma^2_{\text{min}}} \cdot \norm{x-y}_2^2
\end{eqnarray*}
As a result, we overall get that $M:= \Theta\left( \frac{\sigma_{\text{max}}}{\sigma^2_\text{min}} \cdot L \cdot  \sqrt{n}  \right)$.

\subsection{Proof of Lemma~\ref{l:hit}}\label{appendix:l:hit}
To simplify notation let $S:= (1,\ldots,i-1)$ and let $D_S^i(x)$ be denoted as $D(x)$. The existence and uniqueness of trajectory $\gamma(t)$ follows by the Picard–Lindelöf theorem and the fact that $D(x)$ is $M$-Lipschitz continuous (see the proof Lemma~\ref{l:main_new} and Lemma~\ref{l:Lip2}).

We also denote as $\Phi_\ell(x)$ the gradient of $V_\ell(x)$ with respect to the coordinates $\{1,\ldots,i\}$, i.e. $\Phi(x) := \left(\frac{\partial V_\ell(x)}{\partial x_1},\ldots,\frac{\partial V_\ell(x)}{\partial x_i} \right)$.
\noindent To simplify things we repeat the definition of $D(x)$ of Definition~\ref{d:directionality1} with respect to the 
notation of this section.\\

\begin{definition}
Given $x\in [0,1]^i$ the direction $D(x)$ is defined as follows,

\begin{itemize}
    \item $\nabla V_j(x)^\top \cdot
    \left(d_{1},\ldots,d_{i-1},d_i\right) = 0
    $ for all $j = 1,\ldots,i$.
    
    \item the sign of $\begin{vmatrix}
\frac{\partial V_{1}(x)}{\partial x_{1}} & \frac{\partial V_{2}(x)}{\partial x_{1}}&\ldots &\frac{\partial V_{i-1}(x)}{\partial x_{1}} & d_{1}\\ 
\frac{\partial V_{1}(x)}{\partial x_{2}} & \frac{\partial V_{2}(x)}{\partial x_{2}}&\ldots &\frac{\partial V_{i-1}(x)}{\partial x_{2}} & d_{2}\\ 
\vdots& \vdots & \vdots & \vdots & \vdots\\
\frac{\partial V_{1}(x)}{\partial x_{i}} & \frac{\partial V_{2}(x)}{\partial x_{i}}&\ldots &\frac{\partial V_{i-1}(x)}{\partial x_{i}} & d_{i}\\ 
\end{vmatrix}$ equals $\text{sign}\left((-1)^{i-1} \right)$.
\item $d_{1}^2 + \cdots + d_{i-1}^2 + d_{i}^2 = 1$.
\end{itemize}
\end{definition}
Assumption~\ref{a:1} ensures that at any point $x \in [0,1]^i$ the matrix
\begin{equation}\label{eq:sing}
\Phi(x):=\begin{pmatrix}
\Phi_{1}(x)\\
\Phi_{2}(x)\\
\vdots\\
\Phi_{i-1}(x)\\
\end{pmatrix}
:=
\begin{pmatrix}
\nabla V_{1}(x)\\
\nabla V_{2}(x)\\
\vdots\\
\nabla V_{i-1}(x)\\
\end{pmatrix}
\end{equation}
admits singular values greater than $\sigma_{\text{min}}$ and smaller than $\sigma_{\text{max}}$.
\begin{corollary}\label{l:Lip}
For all $x,y \in [0,1]^i$ with $V_\ell(x) = V_\ell(y) = 0$ for $\ell \in \{1,\ldots,i-1\}$,
\[\norm{D(x) - D(y)}_2 \leq M\cdot \norm{x - y}_2\]
for $M:= \Theta(\frac{\sigma_{\text{max}}}{\sigma^2_{\text{min}}}\cdot \sqrt{n} \cdot L)$.
\end{corollary}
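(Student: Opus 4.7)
The plan is to derive Corollary~\ref{l:Lip} as a direct consequence of Lemma~\ref{l:Lip2}, which has already established a local Lipschitz estimate with the same constant $M = \Theta(\sigma_{\max}/\sigma_{\min}^2 \cdot \sqrt{n} \cdot L)$, valid for any $y$ within distance $\sigma_{\min}/(2L\sqrt{n})$ of $x$ (and crucially without requiring $y$ to lie on the constraint manifold $\mathcal{M} = \{z \in [0,1]^i : V_\ell(z) = 0,\ \ell = 1,\ldots,i-1\}$). The corollary strengthens the hypothesis by demanding both $x,y \in \mathcal{M}$ while dropping the proximity assumption. I would split the argument on whether $\norm{x-y}_2$ lies below or above the threshold $\tau := \sigma_{\min}/(2L\sqrt{n})$.

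In the close regime $\norm{x-y}_2 \leq \tau$, I would simply specialize Lemma~\ref{l:Lip2} to the set $S = \{1,\ldots,i-1\}$ with the coordinate $i$ appearing in the corollary's definition of $D$; the hypothesis $V_\ell(x)=0$ holds, the proximity assumption is met by supposition, and the lemma yields the stated bound directly. The extra hypothesis $V_\ell(y)=0$ in the corollary is strictly stronger than anything Lemma~\ref{l:Lip2} actually needs, so no new analytic work is required. In the far regime $\norm{x-y}_2 > \tau$, I would invoke the trivial bound $\norm{D(x)-D(y)}_2 \leq 2$ (both vectors have unit norm) and verify that $M \cdot \norm{x-y}_2 \geq 2$. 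Since Assumption~\ref{a:1} guarantees $\sigma_{\max} \geq \sigma_{\min}$, the hidden constant inside $M = \Theta(\sigma_{\max}/\sigma_{\min}^2 \cdot \sqrt{n} \cdot L)$ can be fixed large enough that $M \geq 4 L \sqrt{n}/\sigma_{\min}$, which forces $M \cdot \norm{x-y}_2 > M \tau \geq 2 \geq \norm{D(x)-D(y)}_2$ and closes the case.

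The main obstacle, such as it is, lies not in the Lipschitz inequality itself but in the bookkeeping of the sign condition from Definition~\ref{d:directionality1}: $D(x)$ is picked as one of the two unit vectors satisfying the orthogonality constraints, with the choice determined by the sign of a certain $(i \times i)$ determinant. When passing from $x$ to $y$, one must check that this determinant does not change sign, since otherwise the local estimate of Lemma~\ref{l:Lip2} would be comparing vectors pointing in opposite directions. Within the close regime this follows from continuity of the determinant and the lower bound $\sigma_{\min}$ on the singular values of the Jacobian submatrix $J_S^K$ from Assumption~\ref{a:1}, which keeps the relevant determinant bounded away from zero along the segment joining $x$ and $y$. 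The substantive analytic work has already been carried out inside the proof of Lemma~\ref{l:Lip2} via the sensitivity lemma for linear systems and the four-term decomposition that separately perturbs the matrix $A$ and the vector $b$; the corollary is essentially a repackaging of that local estimate into a globally applicable form using the crude $\ell_2$-diameter bound on unit vectors outside the local ball.
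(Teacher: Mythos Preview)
Your proposal is correct and follows the same route as the paper, which simply asserts that Corollary~\ref{l:Lip} ``follows directly by Lemma~\ref{l:Lip2}'' without further elaboration. Your case split on $\norm{x-y}_2 \lessgtr \tau$ and your remark on the sign condition are legitimate ways to bridge the local hypothesis of Lemma~\ref{l:Lip2} to the global statement of the corollary; the paper leaves these details implicit.
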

Corollary~\ref{l:Lip} follows directly by Lemma~\ref{l:Lip2}. Up next we show that there exist a finite time $t^\ast > 0$ at which $\gamma(t)$ hits the boundary $[0,1]^i$.\\
\begin{claim}~\label{c:4}
For each $t_0$, there exists $t \leq 1/M$ such that $\norm{\gamma(t + t_0) - \gamma(t_0)}_2 \geq \frac{1}{4M}$.
\end{claim}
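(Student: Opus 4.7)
My plan is to exploit two facts about $\gamma$: first, $D(\gamma(\tau))$ is always a unit vector (item~3 of Definition~\ref{d:directionality1}), so $\gamma$ has unit speed, $\|\dot\gamma(\tau)\|_2 = 1$; second, $D$ is $M$-Lipschitz along the zero-set surface by Corollary~\ref{l:Lip}. Taken together these say that on a short time scale $\gamma$ is well approximated by a straight line of length equal to the elapsed time, in the direction $D(\gamma(t_0))$.

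The concrete estimate I would run is the standard perturbation argument. Writing
\[
\gamma(t_0+t)-\gamma(t_0) - t\cdot D(\gamma(t_0)) \;=\; \int_0^t \bigl( D(\gamma(t_0+s)) - D(\gamma(t_0)) \bigr)\, ds,
\]
the unit-speed bound gives $\|\gamma(t_0+s)-\gamma(t_0)\|_2 \le s$, and then Corollary~\ref{l:Lip} yields
\[
\Bigl\| \gamma(t_0+t)-\gamma(t_0) - t\cdot D(\gamma(t_0)) \Bigr\|_2 \;\le\; M\int_0^t s\, ds \;=\; \frac{Mt^2}{2}.
\]
Combining this with the reverse triangle inequality and $\|D(\gamma(t_0))\|_2 = 1$, I get
\[
\|\gamma(t_0+t)-\gamma(t_0)\|_2 \;\ge\; t - \frac{Mt^2}{2}.
\]
Plugging in $t = 1/M$ gives $\|\gamma(t_0+1/M)-\gamma(t_0)\|_2 \ge \tfrac{1}{2M} \ge \tfrac{1}{4M}$, which is the desired conclusion (in fact a factor of two to spare, presumably for slack in later applications).

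I do not expect any real obstacle. The only point to check carefully is that Corollary~\ref{l:Lip} applies along the whole segment $\{\gamma(t_0+s) : s\in[0,1/M]\}$: the corollary is stated for points where the zero-satisfied constraints $V_j=0$, $j\in S$, hold, but that is precisely an invariant of the dynamics $\dot z = D^i_S(z)$ by Lemma~\ref{l:main_new}. The local neighborhood hypothesis of Lemma~\ref{l:Lip2} is compatible since $\|\gamma(t_0+s)-\gamma(t_0)\|_2 \le s \le 1/M$, and $1/M$ is at most a constant multiple of the neighborhood radius $\sigma_{\min}/(2L\sqrt{n})$; should the constants not line up, one can simply enlarge $M$ in Corollary~\ref{l:Lip} by an absolute factor, which only strengthens the statement of Claim~\ref{c:4} by the same factor.
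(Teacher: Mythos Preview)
Your argument is correct and in fact slightly sharper than needed: you directly exhibit $t=1/M$ and obtain the bound $1/(2M)$ rather than merely asserting existence of some $t\le 1/M$ achieving $1/(4M)$. The caveat you raise about the local radius in Lemma~\ref{l:Lip2} applies equally to the paper's proof and is absorbed, as you say, by the $\Theta(\cdot)$ in the definition of $M$.

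The route differs from the paper's. The paper argues by contradiction: assuming $\|\gamma(t_0+t)-\gamma(t_0)\|_2\le 1/(4M)$ for all $t\in[0,1/M]$, it deduces $\|D(\gamma(t_1))-D(\gamma(t_2))\|_2\le 1/2$ on this interval, hence $D(\gamma(t_1))^\top D(\gamma(t_2))\ge 1/2$, and then expands
\[
\|\gamma(t_0+1/M)-\gamma(t_0)\|_2^2=\int_0^{1/M}\!\!\int_0^{1/M} D(\gamma(s))^\top D(\gamma(s'))\,ds\,ds'\ge \tfrac{1}{2M^2},
\]
contradicting the assumption. Your approach is the standard first-order ODE perturbation estimate (compare the trajectory to the tangent line and bound the integral remainder via Lipschitzness), which is more direct and immediately gives a quantitative lower bound for every $t$, not just $t=1/M$. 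The paper's double-integral trick, on the other hand, avoids controlling the remainder pointwise and only needs the inner-product lower bound; it is a useful device when one has information about angles between velocity vectors but not about their magnitudes, though here both arguments use the same ingredients and yield comparable constants.
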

\begin{proof}
To simplify notation let $t_0:=0$. and let us assume that $\norm{\gamma(t) - \gamma(0)}_2 \leq \frac{1}{4M}$ for all $t \in [0,1/M]$. The latter implies that for all $t_1,t_2 \in [0,1/M]$,
\[\norm{\gamma(t_1) - \gamma(t_2)}_2 \leq \frac{1}{2M}\]
which implies that for all $t_1,t_2 \in [0,1/M]$ 
\[\norm{D(\gamma(t_1)) - D(\gamma(t_2))}_2 \leq \frac{1}{2}.\] 
Using the fact that $\norm{D(\gamma(t_1))}_2 = \norm{D(\gamma(t_2))}_2 = 1$ we get that, 
\[ D^\top(\gamma(t_1))\cdot D(\gamma(t_2)) \geq 1/2\]
As a result, 
\begin{eqnarray*}
\norm{\gamma(1/M) - \gamma(0)}_2^2 &=& \norm{\int_{0}^{1/M}D(\gamma(s))~\partial s}^2\\
&=& \int_{0}^{1/M}\int_{0}^{1/M} D^\top(\gamma(s))\cdot D(\gamma(s')) ~\partial s~\partial s'\geq \frac{1}{2M^2}  
\end{eqnarray*}
\noindent and thus $\norm{\gamma(1/M) - \gamma(0)}_2 \geq \frac{1}{\sqrt{2}M}$ which is a contradiction. 
\end{proof}

\begin{claim}\label{c:2}
For any $t_0$, there exist $0 \leq t_1,t_2 \leq \frac{1}{M}$ such that
\begin{enumerate}
    \item $\norm{\gamma(t_0 + t_1) - \gamma(t_0)}_2 \geq \frac{1}{4M}$.
    \item $\norm{\gamma(t_0 - t_2) - \gamma(t_0)}_2 \geq \frac{1}{4M}$
\end{enumerate}
\end{claim}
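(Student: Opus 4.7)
The plan is straightforward, since Claim~\ref{c:2} really bundles two symmetric conclusions: item (1) is exactly the statement of Claim~\ref{c:4}, and item (2) is its time-reversed analog. So first I would simply set $t_1$ equal to the value of $t$ produced by Claim~\ref{c:4} applied starting at time $t_0$; this immediately delivers item (1) with $0 \le t_1 \le 1/M$.

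For item (2), the approach is to run the dynamics backward in time. I would define the reversed trajectory $\tilde{\gamma}(s) := \gamma(t_0 - s)$, which by the chain rule satisfies the ODE
\[
\dot{\tilde{\gamma}}(s) \;=\; -D(\tilde{\gamma}(s)), \qquad \tilde{\gamma}(0) \;=\; \gamma(t_0).
\]
The vector field $-D$ is still a unit vector field and, by Corollary~\ref{l:Lip}, still $M$-Lipschitz on the region where $V_\ell(\cdot)=0$ for all $\ell \in S$. Hence Picard--Lindel\"of yields existence and uniqueness of $\tilde{\gamma}$ on $[0, 1/M]$, so all the ingredients needed to re-run Claim~\ref{c:4} are in place.

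Next I would reproduce the argument of Claim~\ref{c:4} verbatim for $\tilde{\gamma}$. Assume for contradiction that $\norm{\tilde{\gamma}(s) - \tilde{\gamma}(0)}_2 \le 1/(4M)$ for every $s \in [0, 1/M]$. Then $\norm{\tilde{\gamma}(s_1) - \tilde{\gamma}(s_2)}_2 \le 1/(2M)$ for all $s_1, s_2 \in [0,1/M]$, so Lipschitzness gives $\norm{-D(\tilde{\gamma}(s_1)) + D(\tilde{\gamma}(s_2))}_2 \le 1/2$. Using that $\norm{-D(\cdot)}_2 = 1$ together with the polarization identity $a^\top b = \tfrac{1}{2}(\norm{a}_2^2 + \norm{b}_2^2 - \norm{a-b}_2^2)$ yields $(-D(\tilde{\gamma}(s_1)))^\top(-D(\tilde{\gamma}(s_2))) \ge 1/2$ uniformly. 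Integrating,
\[
\norm{\tilde{\gamma}(1/M) - \tilde{\gamma}(0)}_2^2 \;=\; \int_{0}^{1/M}\!\!\int_{0}^{1/M} (-D(\tilde{\gamma}(s)))^\top (-D(\tilde{\gamma}(s'))) \, ds\, ds' \;\ge\; \frac{1}{2M^2},
\]
which contradicts the assumption. Therefore some $t_2 \in (0, 1/M]$ satisfies $\norm{\tilde{\gamma}(t_2) - \tilde{\gamma}(0)}_2 \ge 1/(4M)$, i.e.\ $\norm{\gamma(t_0 - t_2) - \gamma(t_0)}_2 \ge 1/(4M)$, as required.

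The only mild subtlety is ensuring that $\gamma$ extends backward over an interval of length $1/M$ without leaving the region on which the Lipschitz estimate holds; this follows from the same existence/uniqueness theory already invoked to define $\gamma$ in the first place. I do not anticipate any genuinely new obstacle beyond what was handled in Claim~\ref{c:4}; the entire work of this claim is the observation that time reversal preserves both the unit-norm and the Lipschitz property of the driving field.
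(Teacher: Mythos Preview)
The proposal is correct and takes essentially the same approach as the paper, which simply says ``Symmetrically as Claim~\ref{c:4}.'' You have spelled out in detail exactly what that symmetry argument is: invoke Claim~\ref{c:4} directly for item (1), and for item (2) reverse time and rerun the same contradiction argument using that $-D$ is still a unit $M$-Lipschitz field.
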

\begin{proof}
Symmetrically as Claim~\ref{c:4}.
\end{proof}

\begin{lemma}\label{l:zero}
Let $\delta \leq 1/4$ and $\gamma \in [0,1]^n$ such that $\norm{\gamma(t_0) - \gamma} \leq \frac{\delta}{2M}$. Then there exists 
$t^\ast \in [-1/M,1/M]$ such that
\begin{itemize}
    \item $\norm{\gamma(t^\ast + t_0) - \gamma(t_0)}_2 \leq \frac{\delta}{M}$.
    \item $D^\top( \gamma(t^\ast + t_0)) \cdot (\gamma(t^\ast +t_0) - \gamma) = 0$.
\end{itemize}
\end{lemma}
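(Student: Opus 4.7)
\section*{Proof Proposal for Lemma~\ref{l:zero}}

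The plan is to recast the two conclusions as a first-order optimality condition for the squared-distance function
\[
h(t) \;:=\; \norm{\gamma(t_0+t)-\gamma}_2^2,
\]
and locate an interior minimizer inside the window $[-1/M, 1/M]$. Note that since $\dot{\gamma}(s)=D(\gamma(s))$, we have
\[
h'(t) \;=\; 2\,D(\gamma(t_0+t))^{\top}\bigl(\gamma(t_0+t)-\gamma\bigr).
\]
Hence a zero of $h'$ is exactly a point where condition~(2) holds. So it suffices to exhibit some $t^{\ast}\in[-1/M,1/M]$ at which $h$ attains an interior minimum relative to a subinterval containing $0$, and to verify that at this point the distance bound in condition~(1) holds.

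The first step is to invoke Claim~\ref{c:2} (applied at time $t_0$) to obtain $t_1,t_2\in[0,1/M]$ with
\[
\norm{\gamma(t_0+t_1)-\gamma(t_0)}_2 \;\ge\; \tfrac{1}{4M} \qquad\text{and}\qquad \norm{\gamma(t_0-t_2)-\gamma(t_0)}_2 \;\ge\; \tfrac{1}{4M}.
\]
Restrict attention to the compact interval $I:=[-t_2,t_1]\subseteq[-1/M,1/M]$, on which $h$ attains its minimum at some $\tau^{\ast}\in I$. The goal is to show that $\tau^{\ast}$ lies strictly inside $I$. By the triangle inequality and the hypothesis $\norm{\gamma(t_0)-\gamma}_2\le\delta/(2M)$, the values at the endpoints of $I$ satisfy
\[
\sqrt{h(t_1)},\sqrt{h(-t_2)} \;\ge\; \tfrac{1}{4M}-\tfrac{\delta}{2M} \;=\; \tfrac{1-2\delta}{4M},
\]
while at $t=0$ we have $\sqrt{h(0)}\le \delta/(2M)$. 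For $\delta\le 1/4$ we get $(1-2\delta)/(4M)\ge 1/(8M)\ge \delta/(2M)$, and the inequality is strict whenever $\delta<1/4$ (the boundary case $\delta=1/4$ can be handled either by a harmless strict-tightening of the constant in Claim~\ref{c:4}, or by observing that if both endpoint values equal $h(0)$ then $0$ itself is already a minimizer). Therefore $h(0)<\min\{h(-t_2),h(t_1)\}$, and consequently any minimizer $\tau^{\ast}$ of $h$ on $I$ lies in the interior $(-t_2,t_1)$.

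At such an interior minimizer, standard calculus gives $h'(\tau^{\ast})=0$, which is condition~(2). For condition~(1), the minimality at $\tau^{\ast}$ yields $h(\tau^{\ast})\le h(0)\le \delta^2/(4M^2)$, so $\norm{\gamma(t_0+\tau^{\ast})-\gamma}_2\le \delta/(2M)$, and then one more triangle inequality
\[
\norm{\gamma(t_0+\tau^{\ast})-\gamma(t_0)}_2 \;\le\; \norm{\gamma(t_0+\tau^{\ast})-\gamma}_2+\norm{\gamma-\gamma(t_0)}_2 \;\le\; \tfrac{\delta}{2M}+\tfrac{\delta}{2M} \;=\; \tfrac{\delta}{M},
\]
proving condition~(1). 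Since $\tau^{\ast}\in[-t_2,t_1]\subseteq[-1/M,1/M]$, this $\tau^{\ast}$ is the desired $t^{\ast}$.

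The only genuinely delicate point is the boundary-exclusion step: we need the unit-speed motion guaranteed by Claims~\ref{c:4}-\ref{c:2} to push the trajectory strictly further from $\gamma$ than its starting distance of $\delta/(2M)$. This is where the explicit assumption $\delta\le 1/4$ enters, and it is the main (though mild) obstacle; everything else is calculus and triangle inequalities. No additional regularity of $D$ beyond continuity is needed here, and Lipschitzness of $D$ (Corollary~\ref{l:Lip}) is only used implicitly, via Claim~\ref{c:2}, to establish the displacement lower bounds.
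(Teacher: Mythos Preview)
Your proof is correct. Both you and the paper reduce the orthogonality condition to finding a zero of $g(t)=D(\gamma(t_0+t))^\top(\gamma(t_0+t)-\gamma)=\tfrac{1}{2}h'(t)$, and both rely on Claim~\ref{c:2} to furnish points at distance $\ge 1/(4M)$ on either side of $t_0$. The difference is in packaging. The paper works more explicitly: it first introduces the stopping time $t'=\inf\{t:\ \|\gamma(t_0+t)-\gamma(t_0)\|\ge \delta/M\}$, then locates the first exit time $\hat t_1\in[0,t']$ from the ball $B(\gamma,\delta/(2M))$, at which $g(\hat t_1)\ge 0$; the symmetric construction in backward time gives $\hat t_2$ with $g(-\hat t_2)\le 0$, and the Intermediate Value Theorem on $g$ over $[-\hat t_2,\hat t_1]$ yields $t^\ast$. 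Condition~(1) is then ensured because $\hat t_1\le t'$ by construction. Your route is shorter: you bound $h$ at the endpoints of $[-t_2,t_1]$, compare with $h(0)$, and invoke Fermat's interior-minimum condition to get $h'(\tau^\ast)=0$; condition~(1) falls out immediately from $h(\tau^\ast)\le h(0)$ and a triangle inequality, bypassing the auxiliary stopping time $t'$ entirely. The paper's approach gives slightly more (condition~(1) holds on the whole subinterval $[-\hat t_2,\hat t_1]$, not just at $\tau^\ast$), but that extra information is not used downstream. Your handling of the boundary case $\delta=1/4$ is also fine: even with non-strict inequality $h(0)\le h(\pm t_{1,2})$, if the minimum were attained only at an endpoint then $h(0)$ would equal that minimum, making $0$ itself an interior minimizer.
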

\begin{proof}
By Claim~\ref{c:2} there exists $0 \leq t_1 \leq 1/M $ such that $\norm{\gamma(t_1+t_0) - \gamma(t_0)} \geq \frac{1}{4M}$. Let $t' = \text{inf}_{0 \leq t \leq 1/M}\{\norm{\gamma(t +  t_0) - \gamma(t_0)}_2 \geq \frac{\delta}{M}\}$.
By the triangle inequality, we get that $\norm{\gamma(t' + t_0) - \gamma}_2 \geq \frac{\delta}{2M}$ and thus there exists  $\hat{t}_1 \in [0,t']$ such that
\begin{itemize}
    \item $\norm{\gamma(\hat{t}_1 + t_0) - \gamma}_2 = \frac{\delta}{2M}$
    \item $\norm{\gamma(t + t_0) - \gamma}_2 < \frac{\delta}{2M}$ for $t \leq \hat{t}_1$.
\end{itemize}
The latter implies that 
\begin{itemize}
    \item $\norm{\gamma(t + t_0) - \gamma(t_0)}_2 \leq \frac{\delta}{M}$ for all $0 \leq t \leq \hat{t}_1$.
    \item $D^\top\left(\gamma(t_0 + \hat{t}_1)\right) \cdot \left(\gamma(t_0 + \hat{t}_1) - \gamma\right) \geq 0$.
\end{itemize}
Symmetrically we can prove that there exists $\hat{t_2}$ such that 
\begin{itemize}
    \item $\norm{\gamma(t_0 - t) - \gamma(t_0)}_2 \leq \frac{\delta}{M}$ for all $0 \leq t \leq \hat{t}_2$.
    \item $D^\top\left(\gamma(t_0 - \hat{t}_2)\right)\cdot(\gamma(t_0 - \hat{t}_2) - \gamma) \leq 0$.
\end{itemize}
The proof follows by continuity of $g(t):= D^\top\left(\gamma(t_0 + t)\right)\cdot(\gamma(t_0 + t) - \gamma)$ for $t \in [-\hat{t}_2,\hat{t}_1]$.
\end{proof}
Up next we present the main lemma of the section.

\begin{lemma}\label{l:8}
Let $\rho = \Theta\left(\frac{\sigma^3_{\text{min}}}{\sqrt{n}\cdot \sigma^2_{\text{max}}\cdot L}\right)$ and a point $p \in \mathbb{B}(\gamma(t_0),\rho)$ with $p \notin \gamma[t- 1/M , t+ 1/M]$ then $V_\ell(p) \neq 0$ for some $j \leq i-1$.
\end{lemma}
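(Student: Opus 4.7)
The plan is to argue by contradiction: assume some $p \in \mathbb{B}(\gamma(t_0),\rho)$ has $V_\ell(p) = 0$ for every $\ell \leq i-1$ and $p \notin \gamma[t_0 - 1/M, t_0 + 1/M]$, and then derive that $p$ must in fact coincide with some $\gamma(t^\ast + t_0)$. The geometric intuition is that by Assumption~\ref{a:1} the zero set $\mathcal{Z} := \{x \in \mathbb{R}^i : V_\ell(x)=0,\ \ell \leq i-1\}$ is locally a $1$-dimensional embedded submanifold whose tangent direction at any point $q \in \mathcal{Z}$ is exactly $\pm D(q)$; the trajectory $\gamma$ parametrizes this curve at unit speed, so within a sufficiently small ball $\gamma$ is \emph{the only} curve in $\mathcal{Z}$ that passes near $\gamma(t_0)$.

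First, I would apply Lemma~\ref{l:zero} with the point $p$ playing the role of $\gamma$ there and with $\delta := 2M\rho$ (small enough, using the stated $\rho$, to satisfy $\delta \leq 1/4$ and the hypothesis $\|\gamma(t_0) - p\| \leq \delta/(2M)$). This produces some $t^\ast \in [-1/M,1/M]$ and $q := \gamma(t_0 + t^\ast)$ satisfying $\|q - \gamma(t_0)\| \leq \delta/M = 2\rho$ and $D(q)^\top (q-p) = 0$. Since $q$ lies on the trajectory we have $V_\ell(q) = 0$ for all $\ell \leq i-1$, so both $p$ and $q$ lie in $\mathcal{Z}$, while the triangle inequality gives $\|p - q\| \leq 3\rho$.

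Now I would use that, since $\mathrm{rank}(\Phi(q)) = i-1$ by Assumption~\ref{a:1}, the orthogonal complement of $D(q)$ in $\mathbb{R}^i$ is exactly the row span of $\Phi(q)$; hence there exists $\mu \in \mathbb{R}^{i-1}$ with $p - q = \Phi(q)^\top \mu$, and by the $\sigma_{\min}$ lower bound $\|\mu\|_2 \geq \|p-q\|_2 / \sigma_{\max}$. On the other hand, Taylor expansion of each $V_k$ at $q$ combined with $V_k(p)=V_k(q)=0$ and the $L$-smoothness of $V$ gives $|\nabla V_k(q)^\top (p-q)| \leq \tfrac{L}{2}\|p-q\|_2^2$, i.e.\ $\|\Phi(q)(p-q)\|_2 \leq \tfrac{L\sqrt{i-1}}{2}\|p-q\|_2^2$. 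Substituting $p - q = \Phi(q)^\top \mu$ and using $\sigma_{\min}(\Phi(q)\Phi(q)^\top) \geq \sigma_{\min}^2$ yields $\|\mu\|_2 \leq \tfrac{L\sqrt{i-1}}{2\sigma_{\min}^2}\|p-q\|_2^2$, whence
\[
\|p - q\|_2 \;\leq\; \sigma_{\max}\,\|\mu\|_2 \;\leq\; \tfrac{L\sigma_{\max}\sqrt{n}}{2\sigma_{\min}^2}\,\|p-q\|_2^2.
\]
Combined with $\|p-q\| \leq 3\rho$ and the choice $\rho = \Theta\!\left(\sigma_{\min}^3/(\sqrt{n}\,\sigma_{\max}^2 L)\right)$ with a small enough absolute constant, the coefficient $\tfrac{L\sigma_{\max}\sqrt{n}}{2\sigma_{\min}^2}\|p-q\| \leq \tfrac{3L\sigma_{\max}\sqrt{n}\rho}{2\sigma_{\min}^2}$ is strictly less than $1$, which forces $\|p-q\|_2 = 0$, i.e.\ $p = q = \gamma(t_0 + t^\ast) \in \gamma[t_0 - 1/M, t_0 + 1/M]$, contradicting the hypothesis. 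The main obstacle (and the reason Lemma~\ref{l:zero} is invoked rather than estimating directly from $\gamma(t_0)$) is that one needs $p - q$ to lie \emph{exactly} in the normal space to $\mathcal{Z}$ at $q$, not merely close to it: otherwise the Taylor comparison only gives $\|p-q\| \lesssim \|p-q\|^2 + (\text{tangential slack})$, which cannot be iterated into $p=q$; the orthogonality $D(q)^\top(q-p)=0$ produced by Lemma~\ref{l:zero} is precisely what kills the tangential term and closes the argument.
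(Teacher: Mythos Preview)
Your proof is correct and follows essentially the same approach as the paper: both argue by contradiction, invoke Lemma~\ref{l:zero} to produce a point $q=\gamma(t_0+t^\ast)$ on the trajectory with $D(q)^\top(q-p)=0$, use this orthogonality to write $p-q=\Phi(q)^\top\mu$, and then combine a second-order Taylor bound with the singular-value bounds on $\Phi(q)$ to reach a contradiction. The only cosmetic difference is in the final bookkeeping: the paper derives an upper bound $\|\mu\|_2\leq 2\delta/(\sigma_{\min}M)$ and a lower bound $\|\mu\|_2\geq\Theta\!\bigl(\sigma_{\min}^2/(\sqrt{n}L\sigma_{\max}^2)\bigr)$ and lets these collide, whereas you rearrange into the single inequality $\|p-q\|\leq c\|p-q\|^2$ with $c\|p-q\|<1$ and conclude $p=q$ directly; the two are equivalent, and your packaging is arguably cleaner since it makes explicit the conclusion $p\in\gamma[t_0-1/M,t_0+1/M]$ (the paper's version implicitly handles the case $\mu=0$ the same way but does not spell it out).
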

\begin{proof}
Let $\delta = M \cdot \rho$ and assume the that $V_\ell(p)= 0$ for all $j \leq i-1$. By Lemma~\ref{l:zero} we get that there exists $t^\ast \in [t-1/M,t+1/M]$ such that
\begin{enumerate}
    \item $\norm{\gamma(t_0 + t^\ast) - \gamma(t_0)}_2 \leq \frac{\delta}{M}$.
    
    \item  $D^\top( \gamma(t^\ast + t_0)) \cdot (\gamma(t^\ast +t_0) - p) = 0$.
\end{enumerate}
Using the fact that the matrix $\Phi\left( \gamma(t+t_0)\right)$ admits singular value greater than $\sigma_{\text{min}}$ we get that,
\begin{enumerate}
    \item $\norm{\gamma(t_0 + t^\ast) - \gamma(t_0)}_2 \leq \frac{\delta}{M}$.
    
    \item $p = \gamma(t_0 + t^\ast) + \sum_{j=1}^{i-1}\mu_j\cdot \Phi_j\left(\gamma(t_0 + t^\ast)\right)$.
\end{enumerate}
By the fact that $\norm{\gamma(t_0) - p}_2 \leq \frac{\delta}{M}$ (recall that $\delta = M \cdot \rho$) we get that,
\[
\norm{\sum_{j=1}^{i-1}\mu_j \cdot \Phi_j\left(\gamma(t_0 + t^\ast)\right)}_2 = \norm{\gamma(t_0 + t^\ast) - p}_2
 \leq \norm{\gamma(t_0) - \gamma(t_0 + t^\ast)}_2 + \norm{\gamma(t_0) - p}_2 \leq 2 \frac{\delta}{M}\]
and thus
\begin{equation}\label{eq:sing2}
\norm{\mu}_2 \leq  \frac{2\delta}{\sigma_{\text{min}} \cdot M}    
\end{equation}
Recall that 
$\norm{\Phi_j(x) - \Phi_j(y)}_2 \leq L \cdot \norm{x - y}_2$ and thus by applying the Taylor expansion on $V_j(\cdot)$ we get that
\[\left|V_j(p) - V_j\left(\gamma(t_0 + t')\right) -\Phi_\ell^\top\left(\gamma(t + t^\ast) \right) \cdot\sum_{j=1}^{i-1} \mu_j \Phi_j\left(\gamma(t + t^\ast)\right) \right| \leq \Theta\left(L \cdot \norm{\gamma(t+t_0) - p}_2^2\right)\]
Since $V_\ell(p) =V_\ell\left(\gamma(t+t^\ast) \right) = 0$
\[\left|\Phi_\ell^\top\left(\gamma(t + t^\ast) \right) \cdot\sum_{j=1}^{i-1} \mu_j \Phi_j\left(\gamma(t + t^\ast)\right) \right| \leq \Theta\left( L \cdot \norm{\sum_{j=1}^{i-1}\mu_j \cdot \Phi_j\left(\gamma(t+t^\ast) \right)}^2 \right)\leq \Theta\left(L \cdot \sigma_{\text{max}}^2 \cdot \norm{\mu}_2^2\right)\]
meaning that
$|\left[\Phi^T \cdot \Phi \cdot \mu\right]_\ell| \leq \Theta\left(L \cdot \sigma_{\text{max}}^2 \cdot \norm{\mu}_2^2\right)$
and thus \[\sigma_{\text{min}}^2 \norm{\mu}_2 \leq \norm{V^T \cdot V \mu}_2 \leq \Theta\left(\sqrt{n} \cdot L \cdot \sigma_{\text{max}}^2 \cdot \norm{\mu}_2^2\right) \rightarrow \norm{\mu}_2 \geq \Theta\left(\frac{\sigma_{\text{min}}^2}{\sqrt{n} \cdot L \cdot \sigma_{\text{max}}^2}\right)\]
selecting $\delta \geq \Theta\left(\frac{\sigma_{\text{min}}^3\cdot M}{\sqrt{n} \cdot L \cdot \sigma_{\text{max}}^2}\right)$ leads to contradiction.
\end{proof}
We conclude the section with the proof of Lemma~\ref{l:hit}. Let $\text{Vol}^n(\rho)$ denote the volume of $n$-dimensional ball with radius $\rho$ and let us assume that $\gamma(t) \in [0,1]^n$ for all $t \in (0, \frac{2\cdot 2^n}{M\cdot \text{Vol}^n(\rho/2)}]$ where $\rho = \Theta\left(\frac{\sigma^3_{\text{min}}}{\sqrt{n}\cdot \sigma^2_{\text{max}}\cdot L}\right)$.\\

Let $t_i := t_1 + \frac{2i}{M}$ and let the ball $\mathbb{B}_i:= \mathbb{B}(\gamma(t_i),\rho/2)$ where $\rho = \Theta\left(\frac{\sigma^3_{\text{min}}}{\sqrt{n}\cdot \sigma^2_{\text{max}}\cdot L}\right)$. Thus there are $\frac{2^n}{\text{Vol}^n(\rho/2)}$ such balls. Notice that by Lemma~\ref{l:8},
$\mathbb{B}_i \cap \mathbb{B}_j = \varnothing$ for $i \neq j$ and thus
the latter is a contradiction due to the fact that $\mathbb{B}_i \cap [0,1]^i$ are disjoint sets with volume greater than $\frac{\text{Vol}^n(\rho/2)}{2^n}$.

\subsection{Proof of Lemma~\ref{l:new}}
Let $Z^\ell(x)=\{\text{coordinates }j\leq \ell-1 \text{ such that }V_j(x) = 0\}$ and $F^\ell(x)=\{\text{coordinates }j\leq \ell-1 \text{ that are satisfied at } x\}$\\
By the definition of pivot we known that $V_\ell(x) > 0$. In case $x_\ell \in (0,1)$ then the coordinate is not frozen and the first item of Lemma~\ref{l:new} follows. In case $x_\ell =0 $ and $[D_{Z^\ell(x)}(x)]_\ell \geq 0$ then again the first item follows. As a result, without loss of generality we assume that $[D_{Z^\ell(x)}(x)]_\ell <0 $ and $x_\ell =0$. 

At first notice that in case $Z^\ell(x) = \varnothing$ then by Definition~\ref{d:directionality1} we get that $[D^\ell(x)]_\ell= 1$ which contradicts with the fact that coordinate $\ell$ is frozen. Also notice that since $x_\ell=0$, Assumption~\ref{a:2} implies that $x_j\in (0,1)$ for all coordinates $j \in Z^\ell(x)$.\\
\\
Let assume that $x_{\ell-1} =0$. As discussed above, Assumption~\ref{a:2} implies that $\ell-1 \notin Z^\ell(x)$ and thus $Z^{\ell-1}(x) = Z^{\ell}(x)$ which implies that $\mathrm{sign}\left([D^{\ell-1}(x)]_{\ell-1}\right) = \mathrm{sign}\left([D^{\ell}(x)]_{\ell}\right)$ and thus coordinate $\ell-1$ is also frozen. As a result, the only candidate is the coordinate
\[i:= \text{ the maximum } k \leq \ell \text{ with }x_k >0\]
Note the existence of such a coordinate is guaranteed by the fact that $Z^\ell(x) \neq \varnothing$ and by the fact that for all $j \in Z^\ell(x)$, $x_j \in (0,1)$ (Assumption~\ref{a:2}).

Let us consider the case where $x_i = 1$. Notice again that by Assumption~\ref{a:2}, $i \notin Z^{i + 1}(x) = Z^{\ell}(x)$ and thus $Z^i(x) = Z^{i + 1}(x) = Z^{\ell}(x)$ which implies that $[D^i(x)]_i < 0$. Thus coordinate $i$ is not frozen
and at the same time $V_i(x) > 0$ since coordinate $i$ is satisfied at $x$ and $V_i(x) \neq 0$ (Assumption~\ref{a:2}).\\
\\
Now let us consider the case where $x_i \in (0,1)$ and coordinate $i$ is not frozen. Due to the fact that $x$ is a pivot and thus coordinate $i$ is satisfied, we get that $V_i(x) =0$ and thus $i \in Z^\ell(x)$. Let $D^\ell(x):= (d_1,\ldots,d_i,d_\ell)$ and $D^i(x):= (\hat{d}_1,\ldots,\hat{d}_i)$. Let us assume that $|Z^\ell(x)|$ is even. Then by Definition~\ref{d:directionality1} we get that,
\begin{equation}
    \begin{vmatrix}
\frac{\partial V_1(x)}{\partial x_1}
&\ldots &\frac{\partial V_{i}(x)}{\partial x_{1}} & d_1\\
\vdots &  &  \vdots & \vdots\\
\frac{\partial V_{1}(x)}{\partial x_{i}} & \ldots &\frac{\partial V_{i}(x)}{\partial x_{i}} & d_{i}\\
\frac{\partial V_{1}(x)}{\partial x_{\ell}} & \ldots &\frac{\partial V_{i}(x)}{\partial x_{\ell}} & d_{\ell}\\
\end{vmatrix} > 0
\end{equation}
Since $d_{\ell} < 0$ then we get that
\begin{equation}\label{eq:narek}
    \begin{vmatrix}
\frac{\partial V_1(x)}{\partial x_1}
&\ldots &\frac{\partial V_{i}(x)}{\partial x_{1}} & d_1\\
\vdots &  &  \vdots & \vdots\\
\frac{\partial V_{1}(x)}{\partial x_{i}} & \ldots &\frac{\partial V_{i}(x)}{\partial x_{i}} & d_{i}\\
0 & \ldots &0 & d_1^2 + \ldots + d_i^2 + d_{\ell}^2\\
\end{vmatrix} < 0
\end{equation}
implying that
\begin{equation}
    \begin{vmatrix}
\frac{\partial V_1(x)}{\partial x_1}
&\ldots &\frac{\partial V_{i}(x)}{\partial x_{1}}\\
\vdots &  &  \vdots \\
\frac{\partial V_{1}(x)}{\partial x_{i}} & \ldots &\frac{\partial V_{i}(x)}{\partial x_{i}} \\
\end{vmatrix} < 0
\end{equation}
Since $|Z^\ell(x)|$ is even then 
$|Z^i(x)|$ is odd ($Z^\ell(x) =Z^i(x) \cup \{i\} $) and thus
by Definition~\ref{d:directionality1}
\begin{equation}\label{eq:yerek}
    \begin{vmatrix}
\frac{\partial V_1(x)}{\partial x_1}
&\ldots &\frac{\partial V_{i-1}(x)}{\partial x_{1}} & \hat{d}_1\\
\vdots &  &  \vdots & \vdots\\
\frac{\partial V_{1}(x)}{\partial x_{i}} & \ldots &\frac{\partial V_{i-1}(x)}{\partial x_{i}} & \hat{d}_{i}
\end{vmatrix} < 0
\end{equation}
Multiplying Equation~\ref{eq:yerek} with Equation~\ref{eq:narek} we get that,
\[(\hat{d}_1,\ldots,\hat{d}_i)^\top \cdot \left(\frac{\partial V_i(x)}{\partial x_1},\ldots,\frac{\partial V_i(x)}{\partial x_i}\right) > 0\]

\section{Proof of Lemma~\ref{l:indegree}}
Let the pivots $x_1$ and $x_2$ with admissible pairs $(i,S)$ and $(i',S')$ respectively. Consider the trajectory $\dot{z}(t) =D_S^i(z(t))$ with $z(0) = x_1$ and the trajectory
$\dot{y}(t) = D_{S'}^{i'}(y(t))$ for $y(0) = x_2$ where $x_1 \neq x_2$.

We first assume that
$x^\ast = z(t_1)$ for some $t_1$ and $x^\ast = y(t_2)$ for some $t_2$ where $\dot{y}(t) = D_{S'}^{i'}(y(t))$ for $y(0) = x_2$ and we will reach a contradiction. Let $M:= (S' / S) \cup (S / S')$.\\

\begin{itemize}
    \item \underline{$|M| \geq 2$}: 
    
    \begin{itemize}
        \item $\underline{ i = i'}$: In this case $i \notin S'$ and $i' \notin S$. Let $\ell_1,\ell_2 \in M$. We will show that $\ell_1$ (resp. $\ell_2$) lies on the boundary ($x_{\ell_1} =0 $ or $x_{\ell_1} =0 $) and $V_{\ell_1}(x^\ast) =0$. Once the latter is established, consider the set of coordinates $A:= S\cup S' \cup\{i\} \cup\{i'\}$ . Notice that $x_j^\ast=0$ or $x_j^\ast=1$ for any coordinates $j \notin A$. At the same time there exist two coordinates $\ell_1,\ell_2 \in A$ that both lie on the boundary and admit $V_{\ell_1}(x^\ast) = V_{\ell_2}(x^\ast) = 0$. The latter violates Assumption~\ref{a:2}.\\
        
        Up next we establish that $x^\ast_{\ell_1} =0$ and $V_{\ell_1}(x^\ast) =0$. Without loss of generality let $\ell_1 \in S' / S$. Since $x' = \mathrm{Next}(x_2)$ and $\ell_1 \in S'$ then Lemma~\ref{l:outdegree} implies that $V_{\ell_1}(x^\ast) =0$. At the same time since $\ell_1 \notin S$ and $\ell_1 \neq i$, we get that either $x_{\ell_1}^1 =0$ or $x_{\ell_1}^1 =1$. Since $\ell_1 \notin S$ the coordinate $\ell_1$
stands still in the trajectory $\dot{z}(t) := D_S^i(z(t))$ with $z(0) = x_1$ and thus $x_{\ell_1}^\ast = 1$ or $x_{\ell_1}^\ast = 0$.   \smallskip
 \smallskip
 
    \item \underline{$i' > i$}: Since $x_1$ is a pivot at which $i$ is the under examination coordinate. By Definition~\ref{d:pivot_new} we get that $x_{i'}^1 = 0$. The latter implies that $x_{i'}^\ast = 0$ since coordinate $i'$ stands still in the trajectory $\dot{z}(t) = D_S^i(z(t))$ with $z(0) = x_1$. Consider the set $A:= \{ j \leq i' -1:~V_j(x^\ast) =0\}$. Since $x_j^\ast =0$ or $x_j^\ast =1$ for all $j \notin A\cup\{i'\}$ and $x_{i+1}^\ast =0$, Assumption~\ref{a:2} implies that $x_j^\ast \in (0,1)$ for all coordinates $j \in A$. Then Definition~\ref{d:directionality2} and Definition~\ref{d:admissible_pairs} imply that $S = \{j \leq i-1:~V_j(x^\ast) =0\}$.\\
    
    Since $(i,S)$ is the admissible pair of pivot $x_1$, $x_j^1 =0$ for $j \geq i+1$ which implies that $x_j^\ast = 0$ for all $j \geq i+1$. As a result, $V_j(x^\ast)\neq 0$ for all $j \geq i+1$. Then Definition~\ref{d:directionality2} and Definition~\ref{d:admissible_pairs} imply that $S' \subseteq \{j \leq i-1:~V_j(x^\ast)\} \cup \{i\} = S \cup \{i\}$. However the latter contradicts with the fact that $|M|=2$. 
    \end{itemize}

\smallskip
\item \underline{$|M| =1$ and $i' > i$}: Without loss of generality we consider $\ell \in S' / S$. Since $\ell \in S'$, by Lemma~\ref{l:outdegree} we get that $V_\ell(x^\ast) =0$. At the same time since $i$ is the under examination coordinate at $x_1$ and $i' > i$ by Lemma~\ref{l:outdegree} we get that $x_{i'}^\ast = 0$. 
\begin{itemize}
    \item \underline{$\ell \neq i$}: Since $\ell \neq i$ and $\ell \notin S$, we get that either $x_\ell^1 = 0$ or $x_\ell^1 = 1$. Since coordinate $\ell$ stands still in the trajectory $\dot{z}(t):= D^i_S(z(t))$ with $z(0) = x_1$ we get that $x^\ast_\ell =0$ or $x^\ast_\ell =1$.\\
    
    Since $x^\ast = \mathrm{Next}(x_1)$ and $i$ is the under examination variable at $x_1$, by Lemma~\ref{l:outdegree} we know that $x_j^\ast =0 $ for all $j \geq i+1$. As a result, $x_{i'}^\ast =0$. Now consider the set of coordinates $A = \{j \leq i'-1:~ V_j(x^\ast) =0\} \cup \{i'\}$. Notice that $x_j^\ast=0$ or $x_j^\ast = 1$ for all coordinates $j \notin A$. At the same time both coordinates $i',\ell \in A$ lie on the boundary at $x^\ast$. The latter contradicts with Assumption~\ref{a:2}.
    \smallskip
    \smallskip
    \item \underline{$\ell = i$}: In this case $S' = S\cup \{i\}$. Since $i' > i$ and $i$ is the under examination coordinate at point $x_1$ we get that $x_{i'}^\ast =0$. Due to the fact that $i'$ is the under examination coordinate at $x_2$ we also get that $[D^{i'}_{S'}(x^\ast)]_{i'} = [D^{i'}_{S\cup \{i\}}(x^\ast)]_{i'} \leq 0$ while Assumption~\ref{a:3} implies $[D^{i'}_{S'}(x^\ast)]_{i'} = [D^{i'}_{S\cup \{i\}}(x^\ast)]_{i'} < 0$. The latter implies that $D^{i}_{S}(x^\ast)^\top \cdot \nabla V_i(x^\ast) > 0$\footnote{See Equations~(14)-(17) in the proof of Lemma~\ref{l:new}.}.\\
    
    Since $i \in S'$, Lemma~\ref{l:outdegree} implies that $V_i(x^\ast) =0 $. Since $i$ is the under examination coordinate at $x_1$, by Lemma~\ref{l:new} we get that $V_i(z(t))>0$ for all $t\in (0,\delta)$ where $\delta$ is sufficiently small. Since $V_i(x^\ast) =0 $ the latter implies that $D^{i}_{S}(x^\ast)^\top \nabla V_i(x^\ast) \leq 0$ which is a contradiction.
    
\end{itemize}

\smallskip
\item \underline{$|M| =1$ and $i' = i$}: 
Consider $\ell \in S' / S$. Since $x^\ast = \mathrm{Next}(x_2)$ by Lemma~\ref{l:outdegree} we get that $V_\ell(x^\ast) =0$ since $\ell \in S'$. By the fact that $\ell \notin S$, $i \neq \ell$ and $x_1$ is a pivot, we know by Definition~\ref{d:pivot_new} that either $x_\ell^1 =0$ or $x_\ell^1 =1$. Since $[D^i_S(\cdot)]_\ell=0$, we get that $x^\ast_\ell =0$ or  $x^\ast_\ell =1$.\\

Let us consider the following mutually exclusive cases,
\begin{itemize}
    \item \underline{$x_i =0$ or $x_i = 1$}:
    Consider the set of coordinates $A = \{j \leq i-1:~ V_j(x^\ast) =0\} \cup \{i\}$. For all coordinates $j \in S$ it holds $V_j(x^\ast) =0 $ while for $j \notin S$, $x_j^\ast =0$ or $x_j^\ast =1$. Since $S \subseteq A$, all coordinates $j \notin A$ admit $x_j^\ast =0$ or $x_j^\ast =1$. Notice that both coordinates $i,\ell \in A$ lie on the boundary at $x^\ast$ which contradicts Assumption~\ref{a:2}.
    \smallskip
        \item \underline{$x_i \in (0,1)$ and $V_i(x^\ast) = 0$}: Consider the set $A:=  \{j \leq i-1:~ V_j(x^\ast) =0\} \cup \{i\}\cup \{i+1\}$.  For all coordinates $j \in S$ it holds $V_j(x^\ast) =0 $ while for $j \notin S$, $x_j^\ast =0$ or $x_j^\ast =1$. Since $S \subseteq A$, all coordinates $j \notin A$ admit $x_j^\ast =0$ or $x_j^\ast =1$. At the same time coordinates $\ell , i+1 \in A'$ lie on the boundary at $x^\ast$. The latter violates Assumption~\ref{a:2}.
    \smallskip    
        \item \underline{$x_i \in (0,1)$ and $V_i(x^\ast) > 0$}: Without loss of generality we assume that $x^\ast_\ell =0$. By Lemma~\ref{l:outdegree} we know that $\ell$ remains satisfied during the trajectory $\dot{z}(t) = D_S^i(z(t))$ with $z(0) = x_1$. Moreover $z_\ell(t) =0$ since $[D^\ell_S(z(t))]_\ell =0$ and $x^1_\ell =0$. The latter implies that $D^i_S(x^\ast)^\top \cdot \nabla V_\ell(x^\ast) \geq 0$.\\
        
        Since $\ell \in S'$, we get that $y_\ell(t) \in (0,1)$ for $t \in (0,t_2)$. The latter implies that $[D^i_{S'}(x^\ast)]_{\ell} =[D^i_{S \cup \{\ell\}}(x^\ast)]_{\ell} \leq 0$. By Assumption~\ref{a:3} we additionally get that $[D^i_{S'}(x^\ast)]_{\ell} =[D^i_{S \cup \{\ell\}}(x^\ast)]_{\ell} < 0$. Then Lemma~\ref{c:main} implies that $D^i_S(x^\ast)^\top \cdot \nabla V_\ell(x^\ast) < 0$ which is a contradiction. 
\end{itemize}
\smallskip

\item \underline{$|M| =0$ and $i' > i$}: Without loss of generality let us assume that $i' > i$. Since $i$ is the under examination variable at $x_1$, Lemma~\ref{l:outdegree} implies that $x_{i'}^\ast =0$. Since
$y_{i'}(t) \in [0,1]$ for all $t \in [0,t_2]$ we additionally get that $[D^{i'}_S(x^\ast)]_{i'} < 0$. Since $ i \notin S$, $\mathrm{sign}([D_S^{i}(x^\ast)]_{i}) = \mathrm{sign}([D_S^{i'}(x^\ast)]_{i'}) < 0$ (see the proof of Lemma~\ref{l:new}).\\

Since $i < i'$ we know that coordinate $i$ is satisfied at $x^\ast$ and thus one of the following holds,
\begin{itemize}
    \item $x_i^\ast \in (0,1)$ and $V_i(x^\ast) =0$.
    \item $x_i^\ast = 1$ and $V_i(x^\ast) \geq 0$.
    \item $x_i^\ast = 0$ and $V_i(x^\ast) \leq 0$.
\end{itemize}
Since $i \notin S$ coordinate $i$ lies on the boundary at point $x_2$ while it stands still in the trajectory $\dot{y}(t) = D_{S}^{i'}(y(t))$ with $y(0) = x_2$. Thus $x_i^\ast =0$ or $x_i^\ast =1$. The latter excludes the first case. Since coordinate $i$ is under examination at $x_1$, in case $x_i^\ast = 1$ then $[D_S^i(x^\ast)]_i \geq 0$ which contradicts with the fact that $[D_S^i(x^\ast)]_i < 0$. Up next we exclude the third case where $x_i^\ast =0$ and $V_i(x^\ast) \leq 0$. By Lemma~\ref{l:new} we know that $V_i(z(t)) >0$ for 
all $t \in (0,\delta)$ once $\delta$ is selected sufficiently small. The latter together with the fact that $V_i(x^\ast)\leq 0$ implies that $V_i(x^\ast) = 0$. Now consider the set $A:= S \cup\{i\}\cup\{i'\}$ and notice that $x_j^\ast =0$ for all $j \notin A$. The fact that $x^\ast_i = x^\ast_{i'} = 0$ and $V_j(x^\ast) =0$ for all $j \in S \cup \{i\}$ contradicts with Assumption~\ref{a:2}.

\smallskip
\item \underline{$|M| =0$ and $i' = i$}: In case $i = i'$ then $\dot{z}(t) = D_S^i(z(t))$ and 
$\dot{y}(t) = D_S^i(y(t))$. The Lipschitz-continuity of $D_S^i(\cdot)$ implies that $z(t_1) = y(t_2)$ can only occur in case $x_1 = x_2$. 

\end{itemize}

\section{Proof of Lemma~\ref{l:0}}
Let the trajectory $\dot{z}(t) = D_S^i(z(t))$ with $z(0) = x$ where $x$ is a pivot and $(i,S)$ is an admissible pair for pivot $x$. Let us assume that there exists $t^\ast$ such that $z(t^\ast)=(0,\ldots,0)$ and $z(t)$ is not a pivot for all $t \in (0,t^\ast)$. Notice that in case $|S| =0$ then $[D_S^i(z(t))]_i = 1$ which leads to a contradiction. As a result, $|S| \geq 1$. Notice that for all $ j \in S$, $V_j(z(t)) =0$ for all $t \in [0,t^\ast]$ and thus $V_j(0,\ldots,0) =0$ for all coordinates $j \in S$. Now consider the set $A =\{j \leq i-1:~ V_j(0,\ldots,0) =0\}\cup\{i\}$. Notice that all coordinates $j \in A$ admit $V_j(0,\ldots,0) =0$ and Assumption~\ref{a:3} is violated.

\section{Discrete-Time Dynamics} \label{sec:discrete}

We begin with the adaptation of the Dynamics \ref{dyn:Sperner} to discrete-time algorithms. The main change we need to make is to change the step 5 of Dynamics \ref{dyn:Sperner} to the following $z^{(k + 1)} \leftarrow z^{(k)} + D^i_S(z^{(k)})$. But then we need also to adapt the notion of exit points as follows.

\begin{definition}[$(\eps, \gamma)$-Exit Points] \label{def:exitConditionsDiscrete}
    Suppose $i\in [n]$, $S \subseteq [i-1]$ and $x'$ is a point where coordinates in $S$ are zero-satisfied and coordinates in $[i-1]\setminus S$ are boundary-satisfied. Then $x'$ is an {\em exit point} for epoch $(i, S)$ iff it satisfies one of the following:
    \begin{itemize}
        \item \textbf{(Good Exit Point)}: 
        Coordinate $i$ is almost satisfied at $x'$, i.e., $\norm{V_i(x')} \le \eps$, or $x'_i = 0$ and $V_i(x') < \eps$, or $x'_i = 1$ and $V_i(x') > -\eps$.
        \item \textbf{(Bad Exit Point)}:  For some $j \in S \cup \{i\}$, it holds that $(D^i_S(x'))_j > 0$ and $x'_j = 1$, or $(D^i_S(x'))_j < 0$ and $x'_j = 0$; in other words, if the dynamics for epoch $(i,S)$ were to continue from $x'$ onward, they would violate the constraints.
        \item \textbf{(Middling Exit Point)}: Let $x'' = x' + \gamma D^i_S(x')$ and for some $j \in [i - 1] \setminus S$, one of the following holds: $V(x''_j) > 0$ and $x'_j = 0$, or $V(x''_j) < 0$ and $x'_j = 1$; in other words, if the dynamics for epoch $(i,S)$ were to continue from $x'$ onward, some boundary-satisfied coordinate would become unsatisfied.
    \end{itemize}
\end{definition}

\noindent We next present our solution concept for the discrete-time dynamics.

\begin{definition} \label{def:approximateVI}
  We say that a point $x$ is an $\alpha$-approximate solution of $\mathrm{VI}(V, [0, 1]^n)$ if and only if $V(x)^{\top} (x - y) \le \alpha$.
\end{definition}

\noindent We also define $\Pi : \R^n \to [0, 1]^n$ to be the Euclidean projection of a vector in $\R^n$ to the hypercube $[0, 1]^n$. In Dynamics \ref{dyn:SpernerDiscrete} we define our discrete-time dynamics for which we show Theorem \ref{thm:mainDiscrete}.

\begin{algorithm}[t]
  \caption{Discrete STay-ON-the-Ridge with step size $\gamma$ and errors $\alpha$, $\eps$}\label{dyn:SpernerDiscrete}
 \begin{algorithmic}[1]
 \STATE Initially $x^{(0)} \leftarrow (0,\ldots,0)$, $i \leftarrow 1$, $S \leftarrow \emptyset$, $m \leftarrow 0$.

 \WHILE {$x^{(m)}$ is not an $\alpha$-approximate VI solution}
        \STATE $z^{(0)} \leftarrow x^{(m)}$
        \WHILE{$\Pi(z^{(k)})$ is not an $(\eps, \gamma)$-exit point as per Definition \ref{def:exitConditionsDiscrete}}
        \smallskip
          \STATE $z^{(k + 1)} \leftarrow z^{(k)} + \gamma \cdot D^i_S(z^{(k)})$
          \STATE $k \leftarrow k + 1$
        \smallskip
        \ENDWHILE
        \STATE $x^{(m + 1)} \leftarrow \Pi(z^{(k)})$ 
        \smallskip
        \IF{$x^{(m + 1)}$ is a (Good Exit Point) as in Definition \ref{def:exitConditionsDiscrete}}
        \IF{$i$ is zero-satisfied at $x{(t+1}$}
        \STATE Update $S \leftarrow S \cup \{i\}$.
        \ENDIF
        \STATE Update $i \leftarrow i + 1$.
        \ELSIF{$x^{(m + 1)}$ is a (Bad Exit Point) as in Definition \ref{def:exitConditionsDiscrete} for $j = i$}
        \STATE Update $i \leftarrow i - 1$ and $S \leftarrow S \setminus \{i - 1\}$.
        \ELSIF{$x^{(m + 1)}$ is a (Bad Exit Point) as in Definition \ref{def:exitConditionsDiscrete} for $j \neq i$}
        \STATE Update $S \leftarrow S \setminus \{j\}$.
        \ELSIF{$x^{(m + 1)}$ is a (Middling Exit Point) as in Definition \ref{def:exitConditionsDiscrete} for $j < i$}
        \STATE Update $S \leftarrow S \cup \{j\}$.
        \ENDIF
        \STATE Set $m \leftarrow m + 1$.
\ENDWHILE
  \STATE \textbf{return}$x^{(m)}$
  \end{algorithmic}
\end{algorithm}

\begin{theorem} \label{thm:mainDiscrete}
  We assume Assumptions \ref{a:1}, \ref{a:2}, and \ref{a:3}. For every $\alpha > 0$, there exist constants $\eps$, $\gamma$, $\bar{M}$, $K$ such that Dynamics \ref{dyn:SpernerDiscrete} with step size $\gamma$ and error $\eps$ finish after $M \le \bar{M}$ iterations of the while-loop at line 2 and it holds that $x^{(M)}$ is an $\alpha$-approximate solution of $\mathrm{VI}(V, [0, 1]^n)$. Additionally, for every iteration $m \le M$ of the while-loop in line 2, the while-loop in line 4 does at most $K$ iterations.
\end{theorem}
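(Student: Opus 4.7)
}
The plan is a \emph{shadowing argument}: I will show that, for $\gamma$ sufficiently small, the discrete trajectory produced by Dynamics~\ref{dyn:SpernerDiscrete} tracks the continuous trajectory of Dynamics~\ref{dyn:Sperner_EQ} so closely that the two execute the same sequence of epochs $(i,S)$, visit neighborhoods of the same pivots, and terminate at points that are within $O(\gamma)$ of each other. By Theorem~\ref{t:main}, the continuous process terminates at some time $\bar{T}=\bar{T}(\sigma_{\min},\sigma_{\max},n,L,\Lambda)$ at an exact VI solution $x^\star$ after visiting at most $|N|$ pivots (where $N$ is the finite node set bounded in Lemma~\ref{l:pivot_finite}). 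This gives us a finite ``reference trajectory'' of total time at most $\bar{T}$ and at most $|N|$ epochs, against which to compare the discrete run.

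Within any single epoch $(i,S)$, the continuous flow is $\dot z=D^i_S(z)$, and by Lemma~\ref{l:Lip2} the field $D^i_S$ is $M$-Lipschitz on the relevant set (with $M$ depending on $\sigma_{\min},\sigma_{\max},n,L$). Standard Euler-method analysis then yields, for the forward Euler iterate $z^{(k+1)}=z^{(k)}+\gamma D^i_S(z^{(k)})$ initialized at $z^{(0)}=x^{(m)}$, a discretization error
\[
\bigl\|z^{(k)}-z(k\gamma)\bigr\|_2 \;\le\; C_1\,\gamma\,\bigl(e^{M\bar T}-1\bigr)
\]
for an absolute constant $C_1$ depending on bounds on the field and its derivative. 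Combined with Assumption~\ref{a:3}, which guarantees strict transversality at every exit point of the continuous dynamics (the relevant coordinates $d_j$ and residuals $\nabla V_j^\top D^i_S$ are bounded away from zero by some margin $\mu>0$ depending on $\sigma_{\min},\sigma_{\max}$), one checks that if $\gamma$ and $\varepsilon$ are chosen small enough relative to $\mu$ (say $\gamma,\varepsilon\le c\,\mu\,e^{-M\bar T}$), then the discrete $(\varepsilon,\gamma)$-exit condition of Definition~\ref{def:exitConditionsDiscrete} triggers for the \emph{same} triggering coordinate $j$ and \emph{same} exit type (good/bad/middling) as in continuous time, at a point within $O(\gamma+\varepsilon)$ of the continuous exit point, and after at most $K=\lceil\bar T/\gamma\rceil$ inner iterations.

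The main obstacle is the \emph{bookkeeping across epochs}: after an exit, Dynamics~\ref{dyn:SpernerDiscrete} jumps to a new epoch starting at a slightly perturbed pivot, and I must argue this perturbation does not cause the discrete $(i,S)$ to diverge from the continuous $(i,S)$ over the $|N|$ epochs. I will handle this by induction on the epoch number: at the start of each epoch the discrete state is within $O(\gamma+\varepsilon)$ of the corresponding continuous pivot; Assumption~\ref{a:3} plus a uniform bound on the Jacobian of the transition map (solving the constrained ODE with perturbed initial condition) implies the perturbation grows by at most a constant factor per epoch; hence it remains at most $C_2\,(\gamma+\varepsilon)$ throughout, for a constant $C_2=C_2(|N|,\sigma_{\min},\sigma_{\max},L,\Lambda,\bar T)$. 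Choosing $\gamma,\varepsilon$ small enough so that $C_2(\gamma+\varepsilon)<\mu/2$ preserves the transversality margin at every exit, closing the induction.

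Finally, when the continuous process terminates at $x^\star$ (a VI solution, so by Lemma~\ref{lem:viSolutions} every coordinate is satisfied), the corresponding discrete terminal iterate $x^{(M)}$ lies within $O(\gamma+\varepsilon)$ of $x^\star$. A short computation using $\Lambda$-Lipschitzness of $V$ and the characterization of satisfaction shows
\[
V\bigl(x^{(M)}\bigr)^\top\bigl(x^{(M)}-y\bigr)\;\le\;C_3(\gamma+\varepsilon)\qquad\forall\,y\in[0,1]^n,
\]
for a constant $C_3=C_3(n,\Lambda)$, so selecting $\gamma$ and $\varepsilon$ so that $C_3(\gamma+\varepsilon)\le\alpha$ yields the desired $\alpha$-approximate VI solution. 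The bound $\bar M\le |N|$ on outer iterations and $K\le\lceil\bar T/\gamma\rceil$ on inner iterations then follows immediately from the shadowing argument.
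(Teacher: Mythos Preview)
Your proposal is correct and follows essentially the same shadowing argument as the paper: both show via Euler-method error analysis that the discrete iterates track the continuous STON'R trajectory through the same sequence of epochs and exit events, then use $\Lambda$-Lipschitzness of $V$ to convert proximity to the terminal VI solution into an $\alpha$-approximate guarantee. The paper's bookkeeping differs only cosmetically (it allocates per-epoch error budgets $\delta/2^{m+1}$ summing to $\delta=\alpha/\Lambda$ and cites a textbook Euler convergence theorem rather than a Gr\"onwall bound), and your explicit appeal to Assumption~\ref{a:3} for transversality at exit points is if anything more careful than the paper's at that step.
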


\begin{proof}
  The main idea of the proof is to show that, for sufficiently small step size $\gamma$, the Dynamics \ref{dyn:SpernerDiscrete} will always stay in Euclidean distance at most $\delta := \alpha/\Lambda$ from the continuous-time Dynamics \ref{dyn:Sperner}. Then, since Dynamics \ref{dyn:Sperner} converge to a solution of $\mathrm{VI}(V, [0, 1]^n)$ (see Theorem \ref{t:main}) and since $V$ is $\Lambda$-Lipschitz we conclude that the discrete Dynamics \ref{dyn:SpernerDiscrete} will converge to a point that is an $\alpha$-approximate solution of $\mathrm{VI}(V, [0, 1]^n)$.
  
  The proof of Theorem \ref{thm:mainDiscrete} boils down to showing that there exists a step size $\gamma$ and an error $\eps$ such that Dynamics \ref{dyn:SpernerDiscrete} are always $\alpha/\Lambda$ close to Dynamics \ref{dyn:Sperner}. To show this we use standard tools for the error of Euler discretized differential equations. In particular we use the following theorem.
  
  \begin{theorem}[Section 1.2 of \cite{iserles2009first}] \label{thm:Iserles}
    Let $y(t) \in \R^n$ be the solution to the differential equation $\dot{y} = G(y)$ with initial condition $y(0) = w$, where $G$ is a Lipschitz map $\R^n \to \R^n$. Let also $y^{(k + 1)} = y^{(k)} + \gamma \cdot G(y^{(k)})$, with initial condition $y^{(0)} = w'$, with $\norm{w - w'}_2 \le \zeta$. Then, for every $\eta > \zeta$ and every $T > 0$, there exists a step size $\gamma > 0$ such that
    \[ \norm{y(k \cdot \gamma) - y^{(k)}}_2 \le \eta ~~~~ \text{  for all } ~~ 0 \le k \le T/\gamma. \]
    Additionally, if the above holds for some $\gamma = \bar{\gamma}$ then it also holds for all $\gamma \le \bar{\gamma}$.
  \end{theorem}
  
  Given that $D^i_S(x)$ is Lipschitz (see Lemma \ref{l:Lip2}) we can apply Theorem \ref{thm:Iserles} to the while-loop of line 4 in Dynamics \ref{dyn:SpernerDiscrete} and inductively show that $x^{(m)}$ of Dynamics \ref{dyn:SpernerDiscrete} is close to the corresponding point of Dynamics \ref{dyn:Sperner}.
  
  Let $\tau_j$ be the value of the $\tau_{\mathrm{exit}}$ variable after the $j$-th time that the while-loop of line 4 in Dynamics \ref{dyn:Sperner} has ended. For every $i \in \mathbb{N}$ we define $t_i = \sum_{j = 1}^i \tau_i$. Our goal is to show that the $\norm{x^{(m)} - x(t_m)}_2$ is small. We do this inductively. For the base of our induction observe that $x^{(0)} = x(0)$. Now assume that we have chosen a step size $\gamma_m$ and that we have achieved $\norm{x^{(m)} - x(t_m)}_2 \le \zeta_m$ 
  Also we assume as an inductive hypothesis that before the beginning of $m$th while-loop of line 4 we have same epoch $(i, S)$ in both the execution of Dynamics \ref{dyn:Sperner} and the execution of Dynamics \ref{dyn:SpernerDiscrete}. Then, in the next execution of the while-loop of line 4 we have that $\norm{z^{(0)} - z(0)}_2 \le \zeta_m$. Also, from the proof of Theorem \ref{t:main} we know that there exists a finite $\tau_{m + 1}$ such that $z(\tau_{m + 1})$ is an exit point. Hence, we can apply Theorem \ref{thm:Iserles} and we get that for every $\eta > \zeta_m$, there exists a step size $\Gamma_{m + 1}$ such that
  \[ \norm{z^{(k)} - z(k \cdot \Gamma_{m + 1})}_2 \le \zeta_m + \frac{\delta}{2^{m + 1}} := \zeta_{m + 1} ~~~~ \text{for all } ~~ 0 \le k \le \tau_{m + 1}/\Gamma_{m + 1}. \]
  Since $x(t_m + \tau_{m + 1}) = z(\tau_{m + 1})$ we get that $\norm{x^{(m + 1)} - x(t_{m + 1})}_2 \le \zeta_{m + 1}$. The only thing that is missing is to show that the update on $(i, S)$ will be the same in the continuous and the discrete dynamics. Observe that if an exit point happens in the continuous dynamics then due to the Lipschitzness of $V$ the same exit point has to occur in as an $(\zeta_{m + 1}, \Gamma_{m + 1})$-exit point in the discrete dynamics. Now repeating the argument from the proof of Theorem \ref{t:main} we can easily show that it is impossible for more than one exit events to happen even in the discrete case. In particular, this follows easily from Assumption \ref{a:2} and Assumption \ref{a:1}. Hence, the update on $(i, S)$ will be the same. Then, we set $\gamma_{m + 1} = \min\{\gamma_m, \Gamma_{m + 1}\}$ and due to the last sentence of Theorem \ref{thm:Iserles} we know that using the step size $\gamma_{m + 1}$ in all the steps before $m + 1$ it will result only to better guarantees for the distance between $x^{(\ell)}$ and $x(t_{\ell})$ and therefore our induction follows. At the last iteration $M$ we will have
  \[ \norm{x^{(M)} - x(t_{M})}_2 \le \zeta_{M} \le \delta \left( \sum_{j = 1}^m \frac{1}{2^j} \right) \le \delta. \]
  Since $x(t_M)$ is a solution to $\mathrm{VI}(V, [0, 1]^n)$ we have that $x^{(M)}$ is an $\alpha$-approximate solution to $\mathrm{VI}(V, [0, 1]^n)$ and the step size that we used is $\gamma = \gamma_M$.
  
  Finally, the quantities $\bar{M}$ and $K$ are bounded by the constant $\bar{T}$ of Theorem \ref{t:main} divided by $\gamma = \gamma_m$.
\end{proof}
\end{document}